\declaretheorem[name=Theorem,numberwithin=section]{thm}
\newcommand{\argmin}{\mathrm{argmin}}
\newcommand{\poly}{\mathrm{poly}}
\newcommand{\rank}{\mathrm{rank}}
\newcommand{\conv}{*}
\newcommand{\diag}{\mathrm{diag}}
\newcommand{\twotwomat}{\mat{\Lambda}}
\newcommand{\twotwog}{\mat{G}}
\renewcommand{\d}{\mathrm{d}}
\newcommand{\tr}{\mathrm{tr}}
\newcommand{\trace}[1]{\mathrm{tr}\left(#1\right)}
\def\R{\mathbb{R}}
\def\cA{\mathcal{A}}
\def\cB{\mathcal{B}}
\newcommand{\mat}[1]{\bm{#1}}
\newcommand{\vect}[1]{\bm{#1}}
\newcommand{\norm}[1]{\left\|#1\right\|}
\newcommand{\abs}[1]{\left|#1\right|}
\DeclareMathOperator*{\expect}{\mathbb{E}}
\newcommand{\pr}[1]{\mathbb{P}\left[#1\right]}
\newcommand{\Exp}[1]{\expect\left[#1\right]}
\newcommand{\Ex}[2]{\expect_{#1}\left[#2\right]}
\newcommand{\diff}{\triangle}
\newcommand{\indict}{\mathbf{1}}
\newcommand{\eps}{\epsilon}
\newcommand{\params}{\vect{\theta}}
\newcommand{\simiid}{\overset{\text{i.i.d.}}{\sim}}
\newcommand{\f}[2]{\vect f^{(#1)}( #2)}
\newcommand{\g}[2]{\vect g^{(#1)}( #2)}
\newcommand{\relu}[1]{\sigma\left(#1\right)}
\newcommand{\reluder}[1]{\sigma'\left(#1\right)}
\newcommand{\act}[1]{\sigma\left(#1\right)}
\newcommand{\deract}[1]{\dot{\sigma}\left(#1\right)}
\newcommand{\linop}{\mathcal{L}}
\newcommand{\trainker}{\mat{H}}
\newcommand{\paramsker}{\vect{\beta}}
\newcommand{\nnw}{{P}}
\newcommand{\nnh}{{Q}}
\newcommand{\nnc}{{C}}
\newcommand{\gauss}{\mathcal{N}}
\newcommand{\indset}{\mathcal{D}}
\newcommand{\back}{\mathbf{b}}
\newcommand{\T}[1]{\mathcal{T}_{#1}}
\newcommand{\K}[1]{\left.\mat K\right|_{\vect{#1}}}
\newcommand{\kernel}{\mathrm{ker}}
\newcommand{\wdist}{\mathcal{W}}
\newcommand{\GoodA}[4]{\overline{\mathcal{A}}^{#4}\left( #3\right)}
\newcommand{\GoodB}[4]{\overline{\mathcal{B}}^{#4}\left(  #3\right)}
\newcommand{\GoodC}[4]{\overline{\mathcal{C}}^{#4}\left( #3\right)}
\newcommand{\GoodD}[4]{\overline{\mathcal{D}}^{#4}\left(  #3\right)}
\newcommand{\fullgoodA}[4]{\mathcal{A}^{#4}\left( #1,  #2, #3\right)}
\newcommand{\fullgoodB}[4]{\mathcal{B}^{#4}\left( #1,  #2, #3\right)}
\newcommand{\fullgoodD}[4]{\mathcal{D}^{#4}\left( #1,  #2, #3\right)}
\newcommand{\fullGoodA}[4]{\overline{\mathcal{A}}^{#4}\left( #1,  #2, #3\right)}
\newcommand{\fullGoodB}[4]{\overline{\mathcal{B}}^{#4}\left( #1,  #2, #3\right)}
\newcommand{\fullGoodC}[4]{\overline{\mathcal{C}}^{#4}\left( #1,  #2, #3\right)}
\newcommand{\fullGoodD}[4]{\overline{\mathcal{D}}^{#4}\left( #1,  #2, #3\right)}
\newcommand{\xa}{\vect x}
\newcommand{\xb}{\vect x'}
\newcommand{\xone}{\vect x^{(1)}}
\newcommand{\xtwo}{\vect x^{(2)}}
\newcommand{\vx}{\vect x}
\newcommand{\vxx}{\vect x'}
\newcommand{\htt}{\hat{t}}
\newcommand{\htr}[2]{\htt_{#1}\left(#2\right)}
\newcommand{\trho}[2]{t_{#1}\left(#2\right)}
\newcommand{\id}{\mat{I}}
\newcommand{\Sum}[1]{\textrm{Sum}\left(#1\right)}
\newtheorem{lem}{Lemma}[section]
\newtheorem{defn}{Definition}[section]
\newtheorem{fact}{Fact}[section]
\newtheorem{rem}{Remark}[section]
\newtheorem{clm}{Claim}[section]
\DeclareMathOperator{\deq}{\overset{\textrm{d}}{=\joinrel=}}
\newenvironment{itemize*}%
{\begin{itemize}[leftmargin=*,topsep=0pt]%
		\setlength{\itemsep}{0pt}%
		\setlength{\parskip}{0pt}}%
	{\end{itemize}}
\newenvironment{enumerate*}%
{\begin{enumerate}[leftmargin=*,topsep=0pt]%
		\setlength{\itemsep}{0pt}%
		\setlength{\parskip}{0pt}}%
	{\end{enumerate}}
\title{
	On Exact Computation with an Infinitely Wide Neural Net\thanks{Code available: \texttt{https://github.com/ruosongwang/cntk}}}
\author{Sanjeev Arora\thanks{Princeton University and Institute for Advanced Study. Email: \texttt{arora@cs.princeton.edu}}
	\And 
	Simon S. Du\thanks{Institute for Advanced Study. Email: \texttt{ssdu@ias.edu}}
	\And
	Wei Hu\thanks{Princeton University. Email: \texttt{huwei@cs.princeton.edu}}
	\And
	Zhiyuan Li\thanks{Princeton University. Email: \texttt{zhiyuanli@cs.princeton.edu}}
	\AND
	Ruslan Salakhutdinov\thanks{Carnegie Mellon University. Email:\texttt{rsalakhu@cs.cmu.edu}}
	\And	Ruosong Wang\thanks{Carnegie Mellon University. Email: \texttt{ruosongw@andrew.cmu.edu}}
}
\begin{document}

\maketitle
\begin{abstract}
	How well does a classic deep net architecture like AlexNet or VGG19 classify on a standard dataset such as CIFAR-10 when its \textquotedblleft width\textquotedblright --- namely, number of channels in convolutional layers, and number of nodes in fully-connected internal layers --- is allowed to increase to infinity? 
Such questions have come to the forefront in the quest to theoretically understand deep learning and its mysteries about optimization and generalization. They also connect deep learning to notions such as {\em Gaussian processes} and \emph{kernels}. 
A recent paper \citep{jacot2018neural} introduced the {\em Neural Tangent Kernel (NTK)} which captures the behavior of fully-connected deep nets in the infinite width limit trained by gradient descent; this object was implicit in some other recent papers. 
An attraction of such ideas is that  a pure kernel-based method is used to capture the power of a fully-trained deep net of infinite width. 

The current paper gives the first efficient exact algorithm for computing the extension of NTK to convolutional neural nets, which we call \emph{Convolutional NTK (CNTK)}, as well as an efficient GPU implementation of this algorithm.  
This results in a significant new benchmark for performance of a pure kernel-based method on CIFAR-10, being  $10\%$ higher than the methods reported in \citep{novak2019bayesian}, and  only $6\%$ lower than the performance of the corresponding finite deep net architecture (once batch normalization etc. are turned off).
Theoretically, we also give the first \emph{non-asymptotic} proof
 showing that a fully-trained sufficiently wide net is indeed equivalent to the kernel regression predictor using NTK.

\end{abstract}

\section{Introduction}
\label{sec:intro}

How well does a classic deep net architecture like AlexNet or VGG19 perform on a standard dataset such as CIFAR-10 when its \textquotedblleft width\textquotedblright --- namely, number of channels in convolutional layers, and number of nodes in fully-connected internal layers --- is allowed to increase to infinity? Questions about these ``infinite limits''  of deep nets have naturally emerged in the ongoing effort to understand the power of deep learning. In mathematics it is often easier to study objects in the infinite limit. Furthermore, the infinite limit could conceivably make sense in deep learning, since  {\em over-parametrization} seems to help optimization a lot and doesn't hurt generalization much~\citep{zhang2016understanding}: deep neural nets  with millions of parameters  work well even for datasets with $50$k training examples. 
So why not imagine nets whose width goes to infinity?

 Allowing width to go to infinity also connects deep learning in an interesting way with other areas of machine learning. 
  A single hidden-layer neural network with i.i.d. random parameters, in the limit of infinite width, is a function drawn from a \emph{Gaussian process (GP)}~\citep{neal1996priors}. 
  This model as well as analogous ones with multiple layers~\citep{lee2018deep,matthews2018gaussian}, convolutional filters~\citep{novak2019bayesian,garriga-alonso2018deep} and other architectures~\citep{yang2019scaling} make up the GP view of deep learning. These correspond to infinitely wide deep nets whose all parameters are chosen randomly (with careful scaling),  and only the top (classification) layer is optimized. 
 
From now on we will use {\em weakly-trained nets} to  refer to nets whose layers receive random initialization and only the top layer is trained by gradient descent. 
We use {\em fully-trained} to refer to nets whose all parameters are trained by gradient descent.
It has long been known that weakly-trained convolutional nets have reasonable performance on MNIST and CIFAR-10. 
Weakly-trained nets that are fully-connected instead of convolutional, can also be thought of as ``multi-layer  random kitchen sinks,'' which also have a long history. 

Weakly-trained nets --- whether of finite or infinite width --- also define interesting kernels. Specifically, if $f(\vect{\params},\vect{x}) \in \R$ denotes the output of the network on input $\vect{x}$ where
$\vect{\params}$ denotes the parameters in the network,  and $\wdist$ is an initialization distribution over $\vect{\params}$ (usually Gaussian), then training just the top layer with an $\ell_2$ loss is equivalent to kernel regression for the following kernel:
 \begin{align}
 \kernel \left(\vect{x},\vect{x}'\right) = \expect_{\vect{\params} \sim \wdist}
 [f\left(\vect{\params},\vect{x}\right)\cdot f\left(\vect{\params},\vect{x}'\right)],
 \label{eqn:covkernel}
 \end{align}
 where $\vect{x},\vect{x}'$ are two inputs. This kernel method makes sense when the width goes to infinity.

The objects of interest in this paper are not weakly-trained nets, but fully-trained nets. In the finite case, analysis of optimization and generalization of fully-trained nets is of course an open problem. One may also ask:
 \begin{center} 
{\em Can we understand the power of fully-trained nets whose width goes to infinity?}
\end{center} 
 {\em A priori} this question doesn't seem any easier than the finite case,  and  empirical evaluation seems computationally infeasible due to the infinite limit. They also do not correspond to a kernel method in any obvious way. 
 
   Recent papers  suggest that neural nets whose width greatly exceeds the number of training data points can rapidly reduce training error to $0$ via gradient descent, and under some conditions, the trained net also exhibits good generalization~\citep{du2018provably,du2018global,li2018learning,allen2018learning,allen2018convergence,zou2018stochastic,arora2019fine,cao2019generalization}. 
   Extra-wideness plays a crucial role in the proof: it is shown that as width increases, training causes increasingly smaller changes (in a proportionate sense) in the  parameters. This raises the possibility that as one increases the width to infinity, a certain limiting behavior can emerge even in the fully-trained net. A recent paper by \cite{jacot2018neural} isolated a notion implicit in the above papers, which they called the {\em Neural Tangent Kernel (NTK)}. They suggested --- via a proof that is slightly heuristic --- that this fixed kernel  characterizes the  behavior of fully-connected infinite width neural networks whose layers have been trained by gradient descent.     The NTK is different from the Gaussian process kernels discussed earlier, and is defined using the {\em gradient} of the output of the randomly initialized net with respect to its parameters, i.e.,
   \begin{align}
   \kernel\left(\vect{x},\vect{x}'\right) = \expect_{\params \sim \wdist}\left\langle \frac{\partial f(\params,\vect{x})}{\partial \params}, \frac{\partial f(\params,\vect{x}')}{\partial \params}\right\rangle . \label{eqn:ntk}
   \end{align}
   Here, the gradient $\frac{\partial f(\params, \vect{x})}{\partial \params}$ appears from considering gradient descent, as will be explained in Section~\ref{sec:ntk-recap}.
    One may also generalize the NTK to convolutional neural nets, and we call the corresponding kernel \emph{Convolutional Neural Tangent Kernel (CNTK)}.
    
   Though NTK and CNTK are defined by an infinite limit, a recent paper~\citep{lee2019wide} attempted to understand their properties via a finite approximation of the infinite limit kernel by Monte Carlo methods.
   However, as will be shown in Section~\ref{sec:exp_rf}, using random features generated from practically sized nets can degrade the performance a lot.
   It was still open what is the full power of \emph{exact}  CNTK on modern datasets.
   This is a challenging question especially for CNTK with pooling operations, since when convolution with pooling is involved, it was believed that exact computation of kernels (for either convolutional Gaussian process kernel or CNTK) is infeasible for large datasets like CIFAR-10~\citep{novak2019bayesian}.

\paragraph{Our contributions.} We give an exact and efficient dynamic programming algorithm to compute CNTKs for ReLU activation (namely, to compute $\kernel\left(\vect{x},\vect{x}'\right)$ given $\vect{x}$ and $\vect{x}'$). Using this algorithm --- as well as implementation tricks for GPUs --- we can settle the question of the performance of fully-trained infinitely wide nets with a variety of architectures. For instance, we find that their performance on CIFAR-10 is within $5\%$ of the performance of the same architectures in the finite case (note that the proper comparison in the finite case involves turning off batch norm, data augmentation, etc., in the optimization). 
In particular, the CNTK corresponding to a $11$-layer convolutional net with global average pooling achieves $77\%$ classification accuracy.
This is $10\%$ higher than the best reported performance of a Gaussian process with fixed kernel on CIFAR-10~\citep{novak2019bayesian}.\footnote{We only consider fixed kernels defined without using the training data. 
We do not compare to methods that tune the kernels using training data~\citep{van2017convolutional} or use a neural network to extract features and then applying a kernel method on top of them~\citep{mairal2014convolutional}.}

Furthermore,
we give a more rigorous, non-asymptotic proof that the NTK captures the behavior of a fully-trained wide neural net under weaker condition than previous proofs. 
We also experimentally show that the random feature methods for approximating CNTK in earlier work do not compute good approximations, which is clear from their much worse performance on CIFAR.

\subsection{Notation}
\label{sec:notations}
We use bold-faced letters for vectors, matrices and tensors. 
For a vector $\vect a$, let $[\vect a]_i$ be its $i$-th entry;
for a matrix $\mat A$, let $\left[\mat A\right]_{i,j}$ be its $(i, j)$-th entry;
for a 4th-order tensor $\mat{T}$, let $\mat \left[\mat A\right]_{ij,i'j'}$ be its $(i,j,i',j')$-th entry.
Let $\mat I$ be the identity matrix, and $[n]=\{1, 2, \ldots, n\}$.
Let $\vect{e}_{i}$ be an indicator vector with $i$-th entry being $1$ and other entries being $0$, and let $\vect{1}$ denote the all-one vector.
We use $\odot$ to denote the entry-wise product and $\otimes$ to denote the tensor product.
We use $\langle \cdot,\cdot \rangle$ to denote the standard inner product.
We use $\diag(\cdot)$ to transform a vector to a diagonal matrix.
We use $\relu{\cdot}$ to denote the activation function, such as the rectified linear unit (ReLU) function: $\relu{z} = \max\{z, 0\}$, and $\dot{\sigma}\left(\cdot\right)$ to denote the derivative of $\relu{\cdot}$.
Denote by $\gauss(\bm\mu, \mat\Sigma)$ the Gaussian distribution with mean $\bm\mu$ and covariance $\mat{\Sigma}$.

\section{Related Work}
\label{sec:rel}

From a Gaussian process (GP) viewpoint, the correspondence between infinite neural networks and kernel machines was first noted by~\citet{neal1996priors}.
Follow-up work extended this correspondence to more general shallow neural networks~\citep{williams1997computing,leroux07a,hazan2015steps}.
More recently, this was extended to deep and convolutional neural networks~\citep{lee2018deep,matthews2018gaussian,novak2019bayesian,garriga-alonso2018deep} and a variety of other architectures~\citep{yang2019scaling}.
However, these kernels, as we discussed in Section~\ref{sec:intro}, represent weakly-trained nets, instead of fully-trained nets.

Beyond GPs, the connection between neural networks and kernels is also studied in the compositional kernel literature.
\citet{cho2009kernel} derived a closed-form kernel formula for rectified polynomial activations, which include ReLU as a special case.
\citet{daniely2016toward} proposed a general framework to transform a neural network to a compositional kernel and later \citet{daniely2017sgd} showed for sufficiently wide neural networks, stochastic gradient descent can learn  functions that lie in the corresponding reproducing kernel Hilbert space.
However, the kernels studied in these works still correspond to weakly-trained neural networks.

This paper is inspired by a line of recent work on over-parameterized neural networks~\citep{du2018provably,du2018global,du2019width,li2018learning,allen2018convergence,allen2018learning,zou2018stochastic,cao2019generalization}.
These papers established that for (convolutional) neural networks with large but finite width, (stochastic) gradient descent can achieve zero training error.
A key component in these papers is showing that the weight matrix at each layer is close to its initialization.
This observation implies that the kernel defined in Equation~\eqref{eqn:ntk} is still close to its initialization.
\citet{arora2019fine} explicitly used this observation to derive generalization bounds for two-layer over-parameterized neural networks.
\citet{chizat2018note} argued that these results in the kernel regime may be too simple to be able to explain the success of deep learning, while on the other hand, out results show that CNTK is at least able to perform well on tasks like CIFAR-10 classification.
Also see the survey~\cite{fan2019selective} for recent advance in deep learning theory.

\citet{jacot2018neural} derived the exact same kernel from kernel gradient descent.
They showed that if the number of neurons per layer goes to infinity in a sequential order, then the kernel remains unchanged for a finite training time.
They termed the derived kernel \emph{Neural Tangent Kernel (NTK)}.
We follow the same naming convention and name its convolutional extension \emph{Convolutional Neural Tangent Kernel (CNTK)}.
Later, \citet{yang2019scaling} derived a formula of CNTK as well as a mechanistic way to derive NTK for different architectures.
Comparing with \citep{yang2019scaling}, our CNTK formula has a more explicit convolutional structure and results in an efficient GPU-friendly computation method.
Recently, \citet{lee2019wide} tried to empirically verify the theory in \citep{jacot2018neural} by studying the linearization of neural nets.
They observed that in the first few iterations, the linearization is close to the actual neural net.
However, as will be shown in Section~\ref{sec:exp_rf}, such linearization can decrease the classification accuracy by $5\%$ even on a  ``CIFAR-2" (airplane V.S. car) dataset.
Therefore, exact kernel evaluation is important to study the power of NTK and CNTK.

\section{Neural Tangent Kernel}
\label{sec:ntk-recap}
In this section we describe fully-connected deep neural net architecture and its infinite width limit, and how training it with respect to the $\ell_2$ loss gives rise to a kernel regression problem involving the neural tangent kernel (NTK). 
We denote by $f(\params, \vect x) \in \R$ the output of a neural network where $\params \in \R^N$ is all the parameters in the network and $\vect x \in \R^d$ is the input.\footnote{For simplicity, we only consider a single output here. The generalization to multiple outputs is straightforward.}
Given a training dataset $\left\{(\vect x_i, y_i)\right\}_{i=1}^n \subset \R^d \times \R$, consider training the neural network by minimizing the squared loss over training data:
$
\ell(\params) = \frac12 \sum_{i=1}^n \left( f(\params, \vect x_i) - y_i \right)^2.
$
The proof of the following lemma uses simple differentiation and appears in Section~\ref{appsec:omittedpfs}.

\begin{lem}\label{lem:dynamics}
	Consider minimizing the squared loss $\ell(\params)$ by gradient descent with infinitesimally small learning rate: $\frac{\d\params(t)}{\d t} = -\nabla\ell(\params(t)) $.
Let $\vect u(t) = \left( f(\params(t), \vect x_i) \right)_{i\in[n]} \in \R^n $ be the network outputs on all $\vect x_i$'s at time $t$, and $\vect y = (y_i)_{i\in[n]}$ be the desired outputs. Then $\vect u(t)$ follows the following evolution, where $\mat H(t)$ is an $n\times n$ positive semidefinite matrix whose $(i, j)$-th entry is $\left\langle \frac{\partial f(\params(t), \vect x_i)}{\partial \params}, \frac{\partial f(\params(t), \vect x_j)}{\partial \params} \right\rangle$: 
\begin{equation} \label{eqn:output-dynamics-lemma}
\frac{\d \vect u(t)}{\d t} = - \mat H(t) \cdot (\vect u(t) - \vect y).
\end{equation}
\end{lem}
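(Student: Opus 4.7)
The plan is to derive the evolution equation by applying the chain rule to $u_i(t) = f(\params(t),\vect x_i)$ together with the gradient flow ODE on $\params$, then recognize that the cross-terms assemble into the Gram matrix $\mat H(t)$.

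First, I would expand the gradient of the squared loss: since $\ell(\params) = \tfrac12\sum_{j=1}^n(f(\params,\vect x_j)-y_j)^2$, differentiating with respect to $\params$ gives
\[
\nabla_\params \ell(\params) \;=\; \sum_{j=1}^n (f(\params,\vect x_j)-y_j)\,\frac{\partial f(\params,\vect x_j)}{\partial \params}.
\]
Substituting into the gradient flow $\frac{\d\params(t)}{\d t} = -\nabla_\params\ell(\params(t))$ gives an explicit expression for $\dot\params(t)$.

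Next, I would apply the chain rule to each coordinate of $\vect u(t)$. For every $i\in[n]$,
\[
\frac{\d u_i(t)}{\d t} \;=\; \left\langle \frac{\partial f(\params(t),\vect x_i)}{\partial \params},\; \frac{\d\params(t)}{\d t}\right\rangle \;=\; -\sum_{j=1}^n (u_j(t)-y_j)\,\left\langle \frac{\partial f(\params(t),\vect x_i)}{\partial \params},\frac{\partial f(\params(t),\vect x_j)}{\partial \params}\right\rangle.
\]
Recognizing the inner product on the right as $[\mat H(t)]_{i,j}$, this is exactly the $i$-th coordinate of $-\mat H(t)(\vect u(t)-\vect y)$, which establishes \eqref{eqn:output-dynamics-lemma}. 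To finish, I would note that $\mat H(t)$ is a Gram matrix of the vectors $\{\partial f(\params(t),\vect x_i)/\partial \params\}_{i=1}^n$ in $\R^N$, hence automatically symmetric and positive semidefinite.

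There is essentially no obstacle here: the argument is a one-line chain-rule computation, and the only care needed is to make sure the differentiation under the gradient flow is well-defined (which is immediate since $f$ is smooth in $\params$ for standard architectures and $\params(t)$ is a classical solution of the ODE). The lemma is really just unpacking the definition of gradient flow in function space, which is the starting point for the NTK story in Section~\ref{sec:ntk-recap}.
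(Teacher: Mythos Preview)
Your proposal is correct and essentially identical to the paper's own proof: both expand $\nabla\ell(\params)$ using the chain rule on the squared loss, substitute into the gradient flow ODE, and then read off the $i$-th coordinate of $\dot{\vect u}(t)$ as $-\sum_j [\mat H(t)]_{i,j}(u_j(t)-y_j)$. The only addition you make is the explicit remark that $\mat H(t)$ is positive semidefinite as a Gram matrix, which the paper states but does not spell out in the proof.
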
 

The statement of Lemma~\ref{lem:dynamics} involves a matrix $\mat H(t)$. Below we define a deep net architecture whose width is allowed to go to infinity, while fixing the training data as above. In the limit, it can be shown that the matrix $\mat H(t)$ remains \emph{constant} during training i.e., equal to $\mat H(0)$.
Moreover, under a random initialization of parameters, the random matrix $\mat H(0)$ converges in probability to a certain deterministic kernel matrix $\mat H^*$ as the width goes to infinity, which is the {\em Neural Tangent Kernel} $\kernel(\cdot, \cdot)$ (Equation~\eqref{eqn:ntk}) evaluated on the training data. 
If $\mat H(t) = \mat H^*$ for all $t$, then Equation~\eqref{eqn:output-dynamics-lemma} becomes 
\begin{equation} \label{eqn:linear-dynamics}
\frac{\d \vect u(t)}{\d t} = - \mat H^* \cdot (\vect u(t) - \vect y).
\end{equation}
Note that the above dynamics is identical to the dynamics of \emph{kernel regression} under gradient flow, for which at time $t\to\infty$ the final prediction function is (assuming $\vect u(0)=\vect0$)
\begin{equation} \label{eqn:kernel-solution}
f^*(\vect x) = \left( \kernel(\vect x, \vect x_1), \ldots, \kernel(\vect x, \vect x_n) \right) \cdot (\mat H^*)^{-1} \vect y.
\end{equation}
In Theorem~\ref{thm:ntk_main}, we rigorously prove that a fully-trained sufficiently wide ReLU neural network is equivalent to the kernel regression predictor~\eqref{eqn:kernel-solution} on any given data point.

\paragraph{Fully-connected deep neural net and its infinite width limit.}

Now we define a fully-connected neural net formally.
Let $\vect x \in \R^d$ be the input, and denote $\vect g^{(0)}(\vect x) = \vect x$ and $d_0=d$ for notational convenience. We define an $L$-hidden-layer fully-connected neural network recursively:
\begin{equation} \label{eqn:fc-nn}
\begin{aligned}
\vect f^{(h)}(\vect x) = \mat W^{(h)}  \vect g^{(h-1)}(\vect x) \in \R^{d_h}, \quad \vect g^{(h)}(\vect x) = \sqrt{\frac{c_\sigma}{d_h}} \sigma\left( \vect f^{(h)}(\vect x)  \right) \in \R^{d_h}, \qquad h=1, 2, \ldots, L,
\end{aligned}
\end{equation}
where $\mat W^{(h)} \in \R^{d_h \times d_{h-1}}$ is the weight matrix in the $h$-th layer ($h\in[L]$), $\sigma: \R\to\R$ is a coordinate-wise activation function, and $c_{\sigma} = \left(\expect_{z \sim \gauss(0,1)}\left[\relu{z}^2\right]\right)^{-1}$.
The last layer of the neural network is
\begin{align*}
f(\params, \vect x) &= f^{(L+1)}(\vect x) = \mat W^{(L+1)} \cdot \vect g^{(L)}(\vect x) \\
&= \mat{W}^{(L+1)} \cdot \sqrt{\frac{c_\sigma}{d_L}}  \relu{\mat{W}^{(L)}\cdot \sqrt{\frac{c_{\sigma}}{d_{L-1}}}\relu{\mat{W}^{(L-1)}\cdots\sqrt{\frac{c_{\sigma}}{d_1}}\relu{\mat{W}^{(1)}\vect{x}}}},
\end{align*}
where $\mat W^{(L+1)} \in \R^{1\times d_L}$ is the weights in the final layer, and $\params = \left( \mat W^{(1)}, \ldots, \mat W^{(L+1)} \right)$ represents all the parameters in the network.

We initialize all the weights to be i.i.d. $\gauss(0, 1)$ random variables, and consider the limit of large hidden widths: $d_1, d_2, \ldots, d_L \to \infty$.
The scaling factor $\sqrt{c_{\sigma}/d_h}$ in Equation~\eqref{eqn:fc-nn} ensures that the norm of $\vect g^{(h)}(\vect x)$ for each $h\in[L]$ is approximately preserved at initialization (see~\citep{du2018global}). In particular, for ReLU activation, we have $\expect\left[ \norm{\vect g^{(h)}(\vect x)}^2 \right] = \norm{\vect x}^2$ ($\forall h\in[L]$).

Recall from \citep{lee2018deep} that in the infinite width limit, the pre-activations $\vect f^{(h)}(\vect x)$ at every hidden layer $h\in[L]$ has all its coordinates tending to i.i.d. centered Gaussian processes of covariance $\Sigma^{(h-1)}: \R^d\times \R^d \to \R$ defined recursively as: for $h \in [L]$,
\begin{equation} \label{eqn:gp-cov-kernel}
\begin{aligned}
\Sigma^{(0)}(\vect{x},\vect{x}') &= \vect{x}^\top \vect{x}',  \\
\mat{\twotwomat}^{(h)}(\vect{x},\vect{x}') &=  \begin{pmatrix}
\Sigma^{(h-1)}(\vect{x},\vect{x}) & \Sigma^{(h-1)}(\vect{x},\vect{x}')\\
\Sigma^{(h-1)}(\vect{x}',\vect{x}) & \Sigma^{(h-1)}(\vect{x}',\vect{x}') 
\end{pmatrix}\in \R^{2\times2},  \\ 
\Sigma^{(h)}(\vect{x},\vect{x}') &=  c_{\sigma}\expect_{(u,v) \sim \gauss\left(\vect{0},\mat{\twotwomat}^{(h)}\right) }\left[\relu{u}\relu{v}\right].
\end{aligned}
\end{equation}
To give the formula of NTK, we also need to  define a derivative covariance:
\begin{align} \label{eqn:gradient_kernel}
\dot{\Sigma}^{(h)}(\vect{x},\vect{x}') =  c_{\sigma}\expect_{(u,v) \sim \gauss\left(\vect{0},\mat{\twotwomat}^{(h)}\right)}\left[\dot{\sigma}(u)\dot{\sigma}(v)\right]. 
\end{align}
The final NTK expression for the fully-connected neural network is 
 \begin{align}\label{eqn:fc_ntk}
 \Theta^{(L)}(\vect{x},\vect{x}') = \sum_{h=1}^{L+1} \left(
 \Sigma^{(h-1)}(\vect{x},\vect{x}') \cdot \prod_{h'=h}^{L+1} \dot{\Sigma}^{(h')}(\vect{x},\vect{x}')
 \right), 
 \end{align}
 where we let $\dot{\Sigma}^{(L+1)}(\vect{x},\vect{x}')=1$ for convenience.
We refer readers to Section~\ref{sec:ntk_der} for the derivation of this formula. 
Rigorously, for ReLU activation, we have the following theorem that gives a concrete bound on the hidden widths that is sufficient for convergence to the NTK at initialization:
\begin{restatable}[Convergence to the NTK at initializatoin]{thm}{ntkinit}
	\label{thm:ntk_init}
	Fix $\eps > 0$ and $\delta \in (0, 1)$.
	Suppose $\relu{z} = \max(0,z)$ and $\min_{h\in[L]} d_h \ge\Omega(\frac{L^{6}}{\eps^4} \log({L}/{\delta}))$. 
	Then for any inputs $\xa,\xb\in \R^{d_0}$ such that $\norm{\xa}\le 1, \norm{\xb}\le 1$, with probability at least $1-\delta$ we have:
	\begin{equation*}
	\left| 
	\left\langle \frac{\partial f(\params, \vect x)}{\partial \params}, \frac{\partial f(\params, \vect x')}{\partial \params} \right\rangle
	-\Theta^{(L)}(\vect{x},\vect{x}') \right| \le (L+1)\eps.
	\end{equation*}
\end{restatable}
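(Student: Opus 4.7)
The plan is to reduce the gradient inner product to a layerwise sum of forward–backward inner products and then establish concentration of each factor by induction. By the chain rule applied to~\eqref{eqn:fc-nn}, the gradient with respect to $\mat W^{(h)}$ is the rank-one outer product $\vect b^{(h)}(\vect x)\,[\vect g^{(h-1)}(\vect x)]^\top$, where $\vect b^{(h)}(\vect x):=\partial f(\params,\vect x)/\partial\vect f^{(h)}(\vect x)\in\R^{d_h}$ denotes the backpropagated signal (with $\vect b^{(L+1)}\equiv 1$). Therefore
\[
\Big\langle \tfrac{\partial f(\params,\vect x)}{\partial\params},\tfrac{\partial f(\params,\vect x')}{\partial\params}\Big\rangle \;=\; \sum_{h=1}^{L+1}\langle \vect g^{(h-1)}(\vect x), \vect g^{(h-1)}(\vect x')\rangle\cdot\langle\vect b^{(h)}(\vect x),\vect b^{(h)}(\vect x')\rangle,
\]
and comparison with~\eqref{eqn:fc_ntk} shows it suffices to prove, with high probability, a \emph{forward} concentration $\langle \vect g^{(h-1)}(\vect x), \vect g^{(h-1)}(\vect x')\rangle \approx \Sigma^{(h-1)}(\vect x,\vect x')$ and a \emph{backward} concentration $\langle\vect b^{(h)}(\vect x),\vect b^{(h)}(\vect x')\rangle \approx \prod_{h'=h}^{L+1}\dot\Sigma^{(h')}(\vect x,\vect x')$, each to error $O(\epsilon)$.

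I would first establish the forward claim by induction on $h$, the base case $h=0$ being exact. For the step, condition on $\vect g^{(h-1)}$: the pairs $(f_k^{(h)}(\vect x),f_k^{(h)}(\vect x'))$ are i.i.d.\ centered Gaussians whose $2\times 2$ covariance equals the empirical Gram of $\vect g^{(h-1)}$, which by hypothesis lies $O(\epsilon)$-close to $\mat\Lambda^{(h)}(\vect x,\vect x')$. A Bernstein-type sub-exponential concentration applied to $\tfrac{c_\sigma}{d_h}\sum_k \sigma(f_k^{(h)}(\vect x))\sigma(f_k^{(h)}(\vect x'))$---each summand has bounded sub-exponential norm since $\sigma$ is ReLU and the inputs have norm at most one---combined with the Lipschitz dependence of $\mat\Lambda\mapsto c_\sigma\,\E_{(u,v)\sim\gauss(\vect 0,\mat\Lambda)}[\sigma(u)\sigma(v)]$ on a bounded PSD set, absorbs the accumulated inductive error. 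An identical argument handles $\dot\Sigma^{(h)}$ once the forward Gram is controlled.

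The main obstacle is the backward induction, proved in reverse from the trivial base $\vect b^{(L+1)}\equiv 1$. The recursion $\vect b^{(h)}=\sqrt{c_\sigma/d_h}\,\bigl((\mat W^{(h+1)})^\top\vect b^{(h+1)}\bigr)\odot \dot\sigma(\vect f^{(h)})$ is problematic because $\vect b^{(h+1)}$ is \emph{not} independent of $\mat W^{(h+1)}$: the downstream quantity $\vect f^{(h+1)}=\mat W^{(h+1)}\vect g^{(h)}$ couples them. My plan is a Gaussian conditioning argument: conditional on the entire forward pass, decompose $\mat W^{(h+1)}=\mat W^{(h+1)}\Pi+\mat W^{(h+1)}\Pi^\perp$, where $\Pi$ is the orthogonal projection onto $\mathrm{span}\{\vect g^{(h)}(\vect x),\vect g^{(h)}(\vect x')\}$. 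The parallel piece is a deterministic function of the already-fixed $\vect f^{(h+1)}(\vect x),\vect f^{(h+1)}(\vect x')$ and contributes only a rank-two summand whose size is controlled by the forward norm bounds from Step~2; the orthogonal piece $\mat W^{(h+1)}\Pi^\perp$ remains a fresh Gaussian, independent of $\vect b^{(h+1)}$ (which sees $\mat W^{(h+1)}$ only through $\mat W^{(h+1)}\Pi$) and of $\dot\sigma(\vect f^{(h)})$ (which depends only on $\mat W^{(h)}$). Entry-wise multiplication by $\dot\sigma(\vect f^{(h)})$ and inner-producting with the analogous vector at $\vect x'$ then produces $d_h$ conditionally i.i.d.\ sub-exponential terms with mean $\dot\Sigma^{(h)}(\vect x,\vect x')\cdot\langle\vect b^{(h+1)}(\vect x),\vect b^{(h+1)}(\vect x')\rangle$, and sub-exponential concentration closes the induction.

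Finally, I would aggregate: summing the $L+1$ terms, propagating per-layer precision $\Theta(\epsilon/L)$ through the $L$ recursive concentration steps (each paying the sub-exponential rate $1/\sqrt{d_h}$ amplified by the layerwise Lipschitz constants of $\Sigma^{(h)}$ and $\dot\Sigma^{(h)}$), and union-bounding over layers, yields total error $\le(L+1)\epsilon$ at the stated width $d_h=\Omega(L^{6}\epsilon^{-4}\log(L/\delta))$. Step~3 is unambiguously the hard part: decoupling the activation pattern from the reuse of weights across forward and backward is precisely what the original heuristic argument of \citet{jacot2018neural} elides, and executing it rigorously at finite width via Gaussian conditioning is what gives the theorem its non-asymptotic teeth.
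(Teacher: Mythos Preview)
Your overall architecture---layerwise decomposition into $\langle \vect g^{(h-1)}, \vect g^{(h-1)'}\rangle \cdot \langle \vect b^{(h)}, \vect b^{(h)'}\rangle$, forward induction, and backward induction via the Gaussian conditioning split $\Pi+\Pi^\perp$---matches the paper's exactly. The orthogonal piece is handled correctly: $\mat W^{(h+1)}\Pi^\perp$ is indeed fresh Gaussian conditional on the forward pass, and the resulting quadratic form concentrates by a Hanson--Wright/Gaussian-chaos bound (the paper's Lemma~\ref{lem:gauss_chaos}), though the terms are not literally i.i.d.\ after projection.

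The genuine gap is your treatment of the parallel piece. You assert it is ``controlled by the forward norm bounds from Step~2,'' but this is false. The relevant quantity is $\|\Pi_{\mat G}(\mat W^{(h+1)})^\top\vect b^{(h+1)}\|$, and the naive bound $\|\mat W^{(h+1)}\Pi_{\mat G}\|_2\cdot\|\vect b^{(h+1)}\|$ is $O(\sqrt{d_{h+1}})$, since $\mat W^{(h+1)}\vect g^{(h)}=\vect f^{(h+1)}$ has norm $\Theta(\sqrt{d_{h+1}})$ while $\|\vect b^{(h+1)}\|=O(1)$. After the $c_\sigma/d_h$ prefactor the parallel--parallel and cross contributions are then $O(1)$, not $o(1)$, and the induction collapses. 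The paper closes this with an idea you do not mention: by $1$-homogeneity of ReLU, $(\vect g^{(h)}(\vect x))^\top(\mat W^{(h+1)})^\top\vect b^{(h+1)}(\vect x)=f(\params,\vect x)$, the scalar network output itself. Hence the component of $\Pi_{\mat G}(\mat W^{(h+1)})^\top\vect b^{(h+1)}$ along $\vect g^{(h)}(\vect x)$ has norm $|f(\params,\vect x)|/\|\vect g^{(h)}(\vect x)\|$, and a separate sub-Gaussian bound on the output (the paper's event $\overline{\cC}$, Lemma~\ref{lem:GoodC}) makes this $O(\sqrt{\log(1/\delta)})$ rather than $O(\sqrt{d_{h+1}})$. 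The remaining one-dimensional direction in $\mathrm{span}(\vect g^{(h)}(\vect x),\vect g^{(h)}(\vect x'))$ orthogonal to $\vect g^{(h)}(\vect x)$ is dispatched by a second Gaussian-conditioning step (Claim~\ref{clm:grad_idp_3}). Without the homogeneity identity and the output bound, Step~3 as you wrote it does not go through.

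A secondary imprecision: for ReLU the map $\mat\Lambda\mapsto c_\sigma\,\E_{(u,v)\sim\gauss(\vect 0,\mat\Lambda)}[\dot\sigma(u)\dot\sigma(v)]$ is \emph{not} Lipschitz---it equals $\tfrac12+\tfrac1\pi\arcsin\rho$ and is only H\"older-$\tfrac12$ near $\rho=\pm1$. This is exactly why the width must scale as $\eps^{-4}$ rather than $\eps^{-2}$ (the paper's Lemmas~\ref{lem:uniform_continuity} and~\ref{lem:tan_exp} track an $\eps^2\mapsto\eps$ loss), so your appeal to ``Lipschitz dependence'' and ``an identical argument'' for $\dot\Sigma^{(h)}$ hides a real, if routine, cost.
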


The proof of Theorem~\ref{thm:ntk_init} is given in Section~\ref{appsec:ntk_init}.
	Theorem~\ref{thm:ntk_init} improves upon previous results \citep{jacot2018neural, yang2019scaling} that also established similar convergence in the following sense:
	\begin{enumerate*}
		\item Previous results are asymptotic, i.e., they require the widths to go to infinity, while Theorem~\ref{thm:ntk_init} gives a non-asymptotic bound on the required layer widths.
		\item \cite{jacot2018neural} required sequential limit, i.e., $d_1, \ldots, d_L$ go to infinity one by one, and \cite{yang2019scaling} let $d_1, \ldots, d_L$ go to infinity at the same rate. On the other hand, Theorem~\ref{thm:ntk_init} only requires $\min_{h\in[L]} d_h$ to be sufficiently large, which is the weakest notion of limit.
	\end{enumerate*}

\paragraph{Equivalence between wide neural net and kernel regression with NTK.}

Built on Theorem~\ref{thm:ntk_init}, we can further incorporate the training process and show the equivalence between a fully-trained sufficiently wide neural net and the kernel regression solution using the NTK, as described in Lemma~\ref{lem:dynamics} and the discussion after it.

Recall that the training data are $\left\{(\vect x_i, y_i)\right\}_{i=1}^n \subset \R^d \times \R$, and $\trainker^* \in \mathbb{R}^{n \times n}$ is the NTK evaluated on these training data, i.e., $[\trainker^*]_{i, j} = \Theta^{(L)}(\vx_i, \vx_j)$. Denote $\lambda_0 = \lambda_{\min}\left(\trainker^*\right)$.
For a testing point $\vect{x}_{te} \in \mathbb{R}^d$, we let $\kernel_{ntk}(\vect{x}_{te},\mat{X}) \in \mathbb{R}^n$ be the kernel evaluated between the testing point and $n$ training points, i.e., $\left[\kernel_{ntk}(\vect{x}_{te},\mat{X})\right]_i = \Theta^{(L)}(\vx_{te}, \vx_i)$.
The prediction of kernel regression using NTK on this testing point is
$
f_{ntk}\left(\vect{x}_{te}\right) = \left( \kernel_{ntk}\left(\vect{x}_{te},\mat{X}\right) \right)^{\top} \left(\trainker^*\right)^{-1} \vect{y}.
$

Since the above solution corresponds to the linear dynamics in Equation~\eqref{eqn:linear-dynamics} with zero initialization,
in order to establish equivalence between neural network and kernel regression, we would like the initial output of the neural network to be small.
Therefore, we apply a small multiplier $\kappa>0$, and let the final output of the neural network be $f_{nn}(\params,\vect{x})= \kappa f\left(\params,\vect{x}\right).$
We let $f_{nn}(\vect{x}_{te}) = \lim_{t \rightarrow \infty} f_{nn}(\params(t),\vect{x}_{te})$ be the prediction of the neural network at the end of training.

The following theorem 
 establishes the equivalence between the fully-trained wide neural network $f_{nn}$ and the kernel regression predictor $f_{ntk}$ using the NTK.
\begin{thm}[Equivalence between trained net and kernel regression]
	\label{thm:ntk_main}
Suppose $\relu{z} = \max(0,z)$, $1/\kappa =  \poly(1/\epsilon, \log(n/\delta)) $ and $d_1=d_2=\cdots=d_L=m$ with $m \ge \poly(1/\kappa,L,1/\lambda_0,n,\log(1/\delta))$.
	Then for any $\vx_{te} \in \R^d$ with $\norm{\vx_{te}}=1$, with probability at least $1-\delta$ over the random initialization, we have \begin{align*}
	\abs{f_{nn}(\vect{x}_{te})-f_{ntk}(\vect{x}_{te})}\le \epsilon.
	\end{align*}
\end{thm}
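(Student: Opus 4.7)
The plan is to combine the initialization convergence guarantee of Theorem~\ref{thm:ntk_init} with a ``lazy training'' continuity argument, of the flavor developed in the over-parameterized optimization literature. The idea is that in the NTK regime the parameters stay close to their initialization along gradient flow, so the empirical kernel $\mat{H}(t)$ stays close to $\mat{H}^*$ throughout training, and the output dynamics of Equation~\eqref{eqn:output-dynamics-lemma} reduce to the linear dynamics of Equation~\eqref{eqn:linear-dynamics}. The role of the multiplier $\kappa$ is to make the initial output small: with $f_{nn}=\kappa f$, the output dynamics become $\tfrac{\d\vect{u}}{\d t}=-\kappa^2 \mat{H}(t)(\vect{u}(t)-\vect{y})$, and because each $f(\params(0),\vect{x}_i)$ is a mean-zero sub-Gaussian with variance $\Sigma^{(L)}(\vect{x}_i,\vect{x}_i)=O(1)$, we have $\|\vect{u}(0)\|\le\kappa\cdot O(\sqrt{n\log(n/\delta)})$ with high probability, which the hypothesis $1/\kappa=\poly(1/\epsilon,\log(n/\delta))$ forces to be $O(\epsilon)$; this matches the zero-initialization assumption behind Equation~\eqref{eqn:kernel-solution}.

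I would next set up a bootstrap argument. Theorem~\ref{thm:ntk_init} at $t=0$ already yields $\lambda_{\min}(\mat{H}(0))\ge\lambda_0/2$ once $m$ is large enough. Posit three bootstrap conditions for all $s\in[0,t]$: (a) $\lambda_{\min}(\mat{H}(s))\ge\lambda_0/4$, (b) $\|\vect{u}(s)-\vect{y}\|\le 2\|\vect{y}\|\,e^{-\kappa^2\lambda_0 s/8}$, and (c) $\|\params(s)-\params(0)\|\le R$ for a carefully chosen radius $R$. Conditions (a) and (b) together give exponential convergence of $\vect{u}$; integrating $\|\dot\params(s)\|\le\kappa\|\nabla_\params f(\params(s),\mat{X})\|_{op}\|\vect{u}(s)-\vect{y}\|$ against (b) then yields $\|\params(t)-\params(0)\|\lesssim \sqrt{nL}\,\|\vect{y}\|/(\kappa\lambda_0)$, reproving (c) with margin. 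A parameter-perturbation variant of Theorem~\ref{thm:ntk_init} (Lipschitzness of the empirical $\mat{H}$ with respect to $\params$ on a ball of radius $R$ around $\params(0)$) then converts (c) into $\|\mat{H}(t)-\mat{H}(0)\|=O(\poly(L)R/\poly(m))$, reproving (a). Choosing $m=\poly(1/\kappa,L,1/\lambda_0,n,\log(1/\delta))$ closes the loop, so by a standard continuity argument (a)--(c) hold on $[0,\infty)$.

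Finally, for a test point $\vect{x}_{te}$ the fundamental theorem of calculus gives
\begin{equation*}
f_{nn}(\params(\infty),\vect{x}_{te}) = f_{nn}(\params(0),\vect{x}_{te}) - \kappa^2\int_0^\infty \mat{H}_{te}(s)^\top(\vect{u}(s)-\vect{y})\,\d s,
\end{equation*}
where $[\mat{H}_{te}(s)]_i=\langle\nabla_\params f(\params(s),\vect{x}_{te}),\nabla_\params f(\params(s),\vect{x}_i)\rangle$. Applying Theorem~\ref{thm:ntk_init} to the test--train cross-kernel, together with the weight-movement bound (c), gives $\mat{H}_{te}(s)\approx \kernel_{ntk}(\vect{x}_{te},\mat{X})$ uniformly in $s$, while (a)--(b) give $\vect{u}(s)-\vect{y}\approx -e^{-\kappa^2\mat{H}^*s}\vect{y}$. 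Substituting and evaluating the integral produces $\kernel_{ntk}(\vect{x}_{te},\mat{X})^\top(\mat{H}^*)^{-1}\vect{y}=f_{ntk}(\vect{x}_{te})$, and the boundary term $\kappa f(\params(0),\vect{x}_{te})=O(\kappa\sqrt{\log(1/\delta)})$ is $O(\epsilon)$ by the choice of $\kappa$.

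The main obstacle is the bootstrap in the middle paragraph, where the three conditions depend on one another circularly and where the quantitative dependence of $\|\mat{H}(t)-\mat{H}(0)\|$ on the weight-movement radius $R$ must scale favorably with $m$ despite the non-smoothness of the ReLU; overcoming this requires a careful per-layer perturbation estimate analogous to (but separate from) the one already used to prove Theorem~\ref{thm:ntk_init}. A secondary subtlety is that the test-point kernel approximation must hold \emph{uniformly} over $s\in[0,\infty)$; this is tamed by the exponential tail in (b), which reduces the analysis to a finite effective time horizon of length $O(\log(1/\epsilon)/(\kappa^2\lambda_0))$.
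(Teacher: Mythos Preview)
Your proposal is correct and follows essentially the same strategy as the paper. The paper packages the reduction slightly differently: it first proves a generic Lemma (Kernel Value Perturbation $\Rightarrow$ Prediction Perturbation) that bounds $|f_{nn}(\vect{x}_{te})-f_{ntk}(\vect{x}_{te})|$ in terms of $\epsilon_{init}$, $\sup_t\|\mat{H}(t)-\mat{H}^*\|$, and $\sup_t\|\ker_t(\vect{x}_{te},\mat{X})-\ker_{ntk}(\vect{x}_{te},\mat{X})\|$ by introducing a phantom gradient flow for the NTK predictor and comparing the two trajectories, whereas you directly substitute approximations into the integral representation of $f_{nn}(\vect{x}_{te})$; the content is the same. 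Your bootstrap (a)--(c) is exactly the paper's continuous-induction argument (proved by a minimal-time contradiction), and the per-layer perturbation estimate you flag as the main obstacle is precisely what the paper imports from the forward/backward perturbation lemmas of \citet{allen2018convergence} to obtain the ``gradient-Lipschitz'' bound $\|\nabla_\params f(\params(t),\vect{x})-\nabla_\params f(\params(0),\vect{x})\|_F=O(\omega^{1/3}L^{5/2}\sqrt{\log m})$ when $\|\mat{W}^{(h)}(t)-\mat{W}^{(h)}(0)\|_F\le\omega\sqrt{m}$. Your finite-horizon reduction for the test-kernel approximation also matches the paper's handling of the tail $\int_{t_0}^\infty$.
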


Several comments are in sequel.
Theorem~\ref{thm:ntk_main} is, to our knowledge, the first result that rigorously shows the equivalence between a fully-trained neural net and a kernel predictor.
Comparing with \citep{jacot2018neural}, our bound is non-asymptotic whereas \citep{jacot2018neural} only has an asymptotic result;
furthermore, \citet{jacot2018neural} required the width of every layer to go to infinity in a sequential order, while we can have the same number of neurons per layer, which is closer to practice.
Comparing with recent results on over-parameterized neural nets~\citep{arora2019fine,allen2018convergence,allen2018learning,du2018provably,du2018global,li2018learning,zou2018stochastic}, our theorem is a more precise characterization of the learned neural network.
That is, the prediction is essentially a kernel predictor.
Therefore, to study the properties of these over-parameterized nets, such as their generalization power, it is sufficient to study the corresponding NTK.

While this theorem only gives guarantee for a single point, using a union bound, we can show that this guarantee holds for (exponentially many) finite testing points.
Combing this with the standard analysis of hold-out validation set, we can conclude that a fully-trained wide neural net enjoys the same generalization ability as its corresponding NTK.

For the proof of Theorem~\ref{thm:ntk_main}, we first use a generic argument to show that the perturbation on the prediction can be reduced to the perturbation on kernel value at the initialization and during training.  Theorem~\ref{thm:ntk_init} guarantees a small perturbation on kernel value at initialization.
For the perturbation during training, we use high level proof idea from \cite{du2018global,arora2019fine} to reduce the perturbation on the kernel value to the perturbation on the gradient of each prediction with respect to weight matrices.
Then we adopt technical lemmas from \cite{allen2018convergence} to obtain bounds on the perturbation of the gradient.
The proof of Theorem~\ref{thm:ntk_main} is given in Section~\ref{appsec:main_proof}.
We remark that \citet{jacot2018neural,lee2019wide} provided proofs for the training part.
However, both are asymptotic results and only apply to \emph{finite} training time.
In contrast, we give a finite-width perturbation bound and our result applies to \emph{infinite} training time.

\section{Convolutional Neural Tangent Kernel}
\label{sec:kernel}
In this section we study convolutional neural nets (CNNs) and their corresponding CNTKs.
We study two architectures, vanilla CNN and CNN with global average pooling (GAP).
In this section we define vanilla CNN and present its corresponding CNTK formula.
The derivation of this formula is deferred to Section~\ref{sec:cntk_derivation}.
We present the definition of CNN with GAP and its CNTK in Section~\ref{sec:cntk_gap}.

To formally define CNNs, we first introduce some notation.
We let $\nnw$ be the width and $\nnh$ be the height of the image.
We use $q \in \mathbb{Z}_+$ to denote the filter size. In practice, $q=1,3,$ or $5$. 
We use standard zero padding and set stride size to be $1$ to make sure the input of each layer has the same size.
For a convolutional filter $\vect{w} \in \mathbb{R}^{q \times q}$ and an image $\vect{x} \in \mathbb{R}^{\nnw \times \nnh}$, the convolution operator is defined as
\begin{align}
[\vect{w}\conv \vect{x}]_{ij} = \sum_{a = -\frac{q-1}{2}}^{\frac{q-1}{2}} \sum_{b = -\frac{q-1}{2}}^{\frac{q-1}{2}}  \left[\mat{w}\right]_{a+\frac{q+1}{2},b+\frac{q+1}{2}} [\vect{x}]_{a+i,b+j}\text{ for }i \in [\nnw], j \in [\nnh]. \label{eqn:conv}
\end{align}
Equation~\eqref{eqn:conv} shows that patch $[\vect{w}\conv \vect{x}]_{ij}$ depends on $[\vect{x}]_{i-\frac{q-1}{2}:i+\frac{q-1}{2},  j-\frac{q-1}{2}: j+\frac{q-1}{2}}$.
Our CNTK formula  also relies on this dependency.
For $(i,j,i',j') \in [\nnw]\times [\nnh] \times [\nnw] \times [\nnh]$, define 
\begin{align*}
&\indset_{ij,i'j'}\\
 = &\left\{(i+a,j+b,i'+a',j'+b') \in [\nnw]\times [\nnh] \times [\nnw] \times [\nnh] \mid -(q - 1) / 2 \le a, b, a', b' \le (q - 1) /2
\right\}.
\end{align*}
Lastly, for a tensor $\mat{T} \in \mathbb{R}^{\nnw \times \nnh \times \nnw \times \nnh}$, we denote by $\left[\mat{T}\right]_{\indset_{ij,i'j'}} \in \mathbb{R}^{q \times q \times q \times q}$ a sub-tensor and we let $\tr\left(\mat{T}\right) = \sum_{i,j}\mat{T}_{ij,ij}$.

A vanilla CNN consisting of $L$ convolution layers and one fully-connected layer is formally defined as follows:
\begin{itemize*}
	\item Let  $\vect{x}^{(0)} =\vect{x} \in \mathbb{R}^{\nnw\times \nnh \times \nnc^{(0)}}$ be the input image where $\nnc^{(0)}$ is the number of channels. 
	\item For $h=1,\ldots,L$, $\beta = 1,\ldots,\nnc^{(h)}$, the intermediate outputs are defined as \begin{align*}
	\tilde{\vect{x}}_{(\beta)}^{(h)} = \sum_{\alpha=1}^{\nnc^{(h-1)}} \mat{W}_{(\alpha),(\beta)}^{(h)} \conv \vect{x}_{(\alpha)}^{(h-1)} ,\quad
	\vect{x}^{(h)}_{(\beta)} = \sqrt{\frac{c_{\sigma}}{\nnc^{(h)} \times q \times q}}\act{\tilde{\vect{x}}_{(\beta)}^{(h)}},
	\end{align*}
	where each $\mat{W}_{(\alpha),(\beta)}^{(h)} \in \mathbb{R}^{q \times q}$ is a filter with standard Gaussian initialization.
	\item The final output is defined as $
	f(\params,\vect{x}) = \sum_{\alpha=1}^{\nnc^{(L)}} \left\langle \mat{W}_{(\alpha)}^{(L+1)},\vect{x}_{(\alpha)}^{(L)}\right\rangle
	$
	where $\mat{W}_{(\alpha)}^{(L+1)} \in \mathbb{R}^{\nnw \times \nnh}$ is a weight matrix with standard Gaussian initialization.
\end{itemize*}
For this architecture, using the same reasoning as in Section~\ref{sec:ntk_der}, we obtain the following convolutional neural tangent kernel formula.
The details are provided in Section~\ref{sec:cntk_derivation}.

\paragraph{CNTK formula.}
We let $\vect{x},\vect{x}'$ be two input images.

\begin{itemize*}
	\item For $\alpha = 1,\ldots,\nnc^{(0)}$, $(i,j,i',j') \in [\nnw] \times [\nnh] \times [\nnw] \times [\nnh]$, define \begin{align*}
	\mat{K}^{(0)}_{(\alpha)}\left(\vect{x},\vect{x}'\right) = \vect{x}_{(\alpha)} \otimes \vect{x}_{(\alpha)}' \text{ and }\left[\mat{\Sigma}^{(0)}(\vect{x},\vect{x}')\right]_{ij,i'j'} =  \sum_{\alpha=1}^{\nnc^{(0)}}\tr\left(\left[\mat{K}^{(0)}_{(\alpha)}(\vect{x},\vect{x}')\right]_{ \indset_{ij,i'j'}}\right).
	\end{align*}

	\item For $h \in [L]$, \begin{itemize*}
		\item For $(i,j,i',j') \in [\nnw] \times [\nnh] \times [\nnw] \times [\nnh]$, define \begin{align*}
		\mat{\twotwomat}_{ij,i'j'}^{(h)}(\vect{x},\vect{x}') = \begin{pmatrix}
		\left[\mat{\Sigma}^{(h-1)}(\vect{x},\vect{x})\right]_{ij,ij} & \left[\mat{\Sigma}^{(h-1)}(\vect{x},\vect{x}')\right]_{ij,i'j'} \\
		\left[\mat{\Sigma}^{(h-1)}\left(\vect{x}',\vect{x}\right)\right]_{i'j',ij}&
		\left[\mat{\Sigma}^{(h-1)}\left(\vect{x}',\vect{x}'\right)\right]_{i'j',i'j'}
		\end{pmatrix} \in \mathbb{R}^{2 \times 2}. 
		\end{align*}
		\item Define $\mat{K}^{(h)}(\vect{x},\vect{x}'),\dot{\mat{K}}^{(h)}(\vect{x},\vect{x}') \in \mathbb{R}^{\nnw \times \nnh \times \nnw \times \nnh}$: for  $(i,j,i',j') \in [\nnw] \times [\nnh] \times [\nnw] \times [\nnh]$,
		\begin{align}
		\left[\mat{K}^{(h)}(\vect{x},\vect{x}')\right]_{ij,i'j'} = & \frac{c_{\sigma}}{q^2} \cdot\expect_{(u,v)\sim \gauss\left(\vect{0},\mat{\twotwomat}_{ij,i'j'}^{(h)}(\vect{x},\vect{x}')\right)}\left[\act{u}\act{v}\right], \label{eqn:vanila_cnn_exp}\\
		\left[\dot{\mat{K}}^{(h)}(\vect{x},\vect{x}')\right]_{ij,i'j'} = & \frac{c_{\sigma}}{q^2} \cdot\expect_{(u,v)\sim \gauss\left(\vect{0},\mat{\twotwomat}_{ij,i'j'}^{(h)}(\vect{x},\vect{x}')\right)}\left[\deract{u}\deract{v}\right]. \label{eqn:vanila_cnn_exp_d}
		\end{align}
		\item Define $\mat{\Sigma}^{(h)}(\vect{x},\vect{x}') \in \mathbb{R}^{\nnw \times \nnh \times \nnw \times \nnh}$: for $(i,j,i',j') \in [\nnw] \times [\nnh] \times [\nnw] \times [\nnh]$,
		\begin{align*}
		\left[\mat{\Sigma}^{(h)}(\vect{x},\vect{x}')\right]_{ij,i'j'} = & \tr\left(\left[\mat{K}^{(h)}(\vect{x},\vect{x}')\right]_{D_{ij,i'j'}}\right). 
		\end{align*}
	\end{itemize*}
\end{itemize*}
Note that $\mat{\Sigma}(\vect{x},\vect{x}')$ and $\dot{\mat{\Sigma}}(\vect{x},\vect{x}')$ share similar structures as their NTK counterparts in Equations~\eqref{eqn:gp-cov-kernel} and \eqref{eqn:gradient_kernel}.
The only difference is that we have one more step, taking the trace over patches.
This step represents the convolution operation in the corresponding CNN.
Next, we can use a recursion to compute the CNTK:
\begin{enumerate*}
\item First, we define  $\mat{\Theta}^{(0)}(\vect{x},\vect{x}') = \mat{\Sigma}^{(0)}(\vect{x},\vect{x}')$.

\item For $h=1,\ldots,L-1$ and $(i,j,i',j') \in [\nnw] \times [\nnh] \times [\nnw] \times [\nnh]$, we define
\begin{align*}\label{eqn:vanila_cnn_gradient_kernel}
\left[\mat{\Theta}^{(h)}(\vect{x},\vect{x}')\right]_{ij,i'j'} = \tr\left(\left[\dot{\mat{K}}^{(h)}(\vect{x},\vect{x}')\odot\mat{\Theta}^{(h-1)}(\vect{x},\vect{x}')+\mat{K}^{(h)}(\vect{x},\vect{x}')\right]_{D_{ij,i'j'}}\right) .
\end{align*}

\item For $h=L$ , we define
$\mat{\Theta}^{(L)}(\vect{x},\vect{x}')=\dot{\mat{K}}^{(L)}(\vect{x},\vect{x}')\odot\mat{\Theta}^{(L-1)}(\vect{x},\vect{x}')+\mat{K}^{(L)}(\vect{x},\vect{x}').$

\item The final CNTK value is defined as 
	$
	\tr\left( \mat{\Theta}^{(L)}(\vect{x},\vect{x}')\right).
	$

\end{enumerate*}

In Section~\ref{sec:cntk_gap} we give the CNTK formula for CNNs with GAP, which is similar to vanilla CNNs.
To compute the CNTK matrix corresponding to a CNN with GAP that has $L$ convolution layers and one fully-connected layer on $n$ samples, the time complexity is $O(n^2\nnw^2\nnh^2  L)$.
Previous work assumed that directly computing convolutional kernel (with pooling) exactly is computationally infeasible, and thus resorted to approximations like Monte Carlo sampling~\citep{novak2019bayesian}.
We are able to scale the exact CNTK computation to the full CIFAR-10 dataset and 20-layer CNN with GAP.
We present our efficient computation approach in Section~\ref{sec:fast}.

\section{Experiments}
\label{sec:exp}

We evaluate the performances of CNNs and their corresponding CNTKs on the CIFAR-10 dataset.
The implementation details are in Section~\ref{sec:exp_details}.
We also compare the performances between CNTKs and their corresponding random feat
Due to space limit, we defer these results on random features to Section~\ref{sec:exp_rf}.

\paragraph{Results.} We test two types of architectures, vanilla CNN and CNN with global average pooling (GAP), 
as described in Sections~\ref{sec:kernel} and~\ref{sec:cntk_gap}.
We also test CNTKs with only 2,000 training data to see whether their performances are consistent with CNTKs and CNNs using the full training set.
The results are summarized in Table~\ref{tab:cnn_cntk}.
Notice that in Table~\ref{tab:cnn_cntk}, depth is the total number of layers (including both convolution layers and fully-connected layers).
\begin{table*}
	\centering
	\resizebox{\columnwidth}{!}{%
	\renewcommand{\arraystretch}{1.5}
	\begin{tabular}{ |c|c|c|c|c|c|c|}
		\hline
		Depth & CNN-V & CNTK-V & CNTK-V-2K & CNN-GAP & CNTK-GAP & CNTK-GAP-2K \\
		\hline
		3 & 59.97\% & 64.47\% & 40.94\% & 63.81\% & 70.47\% & 49.71\%\\
		\hline
		4 & 60.20\% & 65.52\% & 42.54\% & 80.93\% & 75.93\% & 51.06\%\\
		\hline
		6 & 64.11\%& 66.03\% & 43.43\% & 83.75\% & 76.73\% & 51.73\%\\
		\hline
		11 & 69.48\% & 65.90\% & 43.42\% & 82.92\% & \textbf{77.43\%} & 51.92\%\\
		\hline
		21 & 75.57\% & 64.09\% & 42.53\% & 83.30\% & 77.08\% & 52.22\%\\
		\hline
	\end{tabular}
	}
	\caption{Classification accuracies of CNNs and CNTKs on the CIFAR-10 dataset.
CNN-V represents vanilla CNN and CNTK-V represents the kernel corresponding to CNN-V.
CNN-GAP represents CNN with GAP and CNTK-GAP represents the kernel correspondong to CNN-GAP.
CNTK-V-2K and CNTK-GAP-2K represent training CNTKs with only 2,000 training data. 
		\label{tab:cnn_cntk}
	}
\end{table*}

Several comments are in sequel.
First, CNTKs are very powerful kernels.
The best kernel, 11-layer CNTK with GAP, achieves 77.43\% classification accuracy on CIFAR-10.
This results in a significant new benchmark for performance of a pure kernel-based method on CIFAR-10, being  $10\%$ higher than methods reported in \citep{novak2019bayesian}.

Second, we find that for both CNN and CNTK, depth can affect the classification accuracy.
This observation demonstrates that depth not only matters in deep neural networks but can also affect the performance of CNTKs.

Third, the global average pooling operation can significantly increase the classification accuracy by 8\% - 10\% for both CNN and CNTK.
Based on this finding, we expect that many techniques that improve the performance of neural networks are in some sense universal, i.e., these techniques can also benefit kernel methods.

Fourth, we find that there is still a 5\% - 6\% performance gap between CNTKs and CNNs.
Since CNTKs exactly correspond to infinitely wide CNNs, this performance gap implies that finite width has its benefits.
Therefore, it is likely that recent theoretical work on over-parameterization that operates in the NTK regime cannot fully explain the success of neural networks yet, and we believe it is an interesting open problem to characterize this gap.

\paragraph{Potential application in neural architecture search.} Finally, we find that performances of CNTK-V-2Ks and CNTK-GAP-2Ks are highly correlated to their CNN-V, CNTK-V, CNN-GAP and CNTK-GAP counterparts.
Again we see CNTK-GAP-2Ks outperform CNTK-V-2Ks by a large margin (about $8\%$ - $9\%$).
One potential application of this observation is to guide neural architecture search.
We can compute the kernel on a small training data, test it on a validation set, and choose neural network architectures based on the performance of this small kernel on the validation set.
We leave large scale experiments of this idea for future work.

\section{Conclusion}
\label{sec:diss}

By giving the first practical algorithm for computing CNTKs exactly, this paper allows investigation of the behavior of infinitely wide (hence infinitely over-parametrized) deep nets, which turns out to not be much worse than that of their finite counterparts.  We also give a fully rigorous proof that a sufficiently wide net is approximately equivalent to the kernel regression predictor, thus yielding a powerful new off-the-shelf kernel. 
We leave it as an open problem to understand the behavior of infinitely wide nets with features such as Batch Normalization or Residual Layers. Of course, one can also hope that the analysis of infinite nets provides rigorous insight into  finite ones.

\section*{Acknowledgments}
\label{sec:ack}
We thank Jason D. Lee, Haochuan Li and Xiyu Zhai for useful discussions.
S. Arora, W. Hu and Z. Li are supported by NSF, ONR, Simons Foundation, Schmidt Foundation, Mozilla Research, Amazon Research, DARPA and SRC.
R. Salakhutdinov and R. Wang are supported in part by NSF IIS-1763562, Office of Naval Research grant N000141812861,
and Nvidia NVAIL award. 
We thank Amazon Web Services for providing compute time for the experiments in this paper.
This work was done while S. S. Du was a Ph.D. student at Carnegie Mellon University.

\bibliography{simonduref}
\bibliographystyle{plainnat}

\newpage
\appendix

\section{Experiment Details}
\label{sec:exp_details}
\paragraph{Setup.} Due to efficiency considerations, for all experiments, we use no data augmentation. 
Tricks like batch normalization, dropout, weight decay, etc. are not used for proper comparison.
We fix the filter $q$ to be $3$ and stride to be $1$.
We use zero padding to make sure the number of patches keeps unchanged after each convolutional layer. 
We set the number of convolution layers to be $2$, $3$, $5$, $10$, or $20$.
For both CNNs and CNTKs, we use the quadratic loss as the objective function.

Following \cite{novak2019bayesian}, for a label $c \in \{1,\ldots,10\}$, we use $-0.1 \cdot \vect{1} +  \vect{e}_{c}$ as its encoding. 
For example, if the class label is $3$, we use $\left(-0.1,-0.1,0.9,-0.1,\ldots,-0.1\right)$ as its encoding.
During training time, we calculate $(\mat H^*)^{-1} \vect Y$, where $\mat H^*$ is the CNTK matrix on inputs, and the $i$-th row of ${\mat Y} \in \mathbb{R}^{n \times 10}$ is the encoding of the label of the $i$-th data.
During testing time, for a test data point $\vect{x}_{te}$, we calculate 
$$
\vect f^*(\vect x_{te}) = \left( \kernel(\vect{x}_{te}, \vect x_1), \ldots, \kernel(\vect{x}_{te}, \vect x_n) \right) \cdot (\mat H^*)^{-1} \vect Y
$$
and choose the class with largest value as the prediction. 

The architecture of CNNs is as described in Section~\ref{sec:kernel} and Section~\ref{sec:cntk_gap}.
 We set the number of the channels of the network as 1024 and $\kappa$ as $0.05$. To train CNNs, we use stochastic gradient descent (SGD) with fixed learning rate. We report the best average performance over 3 trials among the different learning rate chosen from $\{0.1,1,10\}$. The test accuracy is measured by taking average of the  10 epochs after reaching full training accuracy except the depth-3 vanilla CNN, which couldn't attain full training accuracy within 3000 epochs for all learning rates

Our neural networks are trained using the PyTorch package, using (possibly multiple) NVIDIA Tesla V100 GPUs.
We calculate the kernel values using the CuPy\footnote{\url{https://cupy.chainer.org}.} package. 
For time-consuming operations, we write native CUDA codes to speed up the calculation. 
All experiments are performed on Amazon Web Services (AWS).

\section{Additional Experiments on Random Features}
\label{sec:exp_rf}
We verify the importance of using the exact kernels instead of the approximated ones from random features (as done in \citep{lee2019wide}).
The random features are generated by taking the gradient of the randomly initialized CNNs with respect to the weight matrices.
For all CNNs we set the number of channels to be $128$.
We compare the performances of the exact kernels and the random kernels on a CIFAR-2 dataset, i.e., the first two class in CIFAR-10.
For each kernel generated by random features, we test $10$ times and report the median.
The results are summarized in Table~\ref{tab:rf_cntk}.
\begin{table*}[h]
	\centering
	\resizebox{\columnwidth}{!}{%
	\renewcommand{\arraystretch}{1.5}
	\begin{tabular}{ |c|c|c|c|c|}
		\hline
		Depth & RF from Vanilla CNTK & Vanilla CNTK & RF for CNTK-GAP & CNTK-GAP\\
		\hline
		3 & 87.25\% & 92.15\% & 51.10\% & 71.05\%\\
				\hline
		4 & 87.78\% & 92.80\% & 52.85\% & 94.50\%\\
		\hline
		6 & 88.73\% & 93.10\% & 53.98\% & 95.25\%\\
		\hline
		11 & 87.80\% & 93.05\% & 56.55\% & 95.40\%\\
		\hline
		21 & 85.35\% & 91.95\% & 90.65\% & 95.70\%\\
		\hline
	\end{tabular}
	}
	\caption{Classification accuracies of random kernels generated from random features and exact CNTKs on CIFAR-2.
		\label{tab:rf_cntk}
	}
\end{table*}

Note that even on the simple CIFAR-2 dataset, random features have much worse accuracies than exact kernels by a large margin.
This experiment demonstrates the importance of using the exact kernels instead of approximated ones.

\section{Proof of Lemma~\ref{lem:dynamics}}
\label{appsec:omittedpfs}
\begin{proof}[Proof of Lemma~\ref{lem:dynamics}]
The parameters $\params$ evolve according to the differential equation
\begin{equation}\label{eqn:gf}
\frac{\d\params(t)}{\d t} = -\nabla\ell(\params(t)) = - \sum_{i=1}^n \left( f(\params(t), \vect x_i) - y_i \right) \frac{\partial f(\params(t), \vect x_i)}{\partial \params},
\end{equation}
where $t\ge 0$ is a continuous time index. 
Under Equation~\eqref{eqn:gf}, the evolution of the network output $f(\params(t), \vect x_i)$ can be written as
\begin{equation} \label{eqn:output-dynamics}
\frac{\d f(\params(t), \vect x_i)}{\d t} = - \sum_{j=1}^n (f(\params(t), \vect x_j) - y_j) \left\langle \frac{\partial f(\params(t), \vect x_i)}{\partial \params}, \frac{\partial f(\params(t), \vect x_j)}{\partial \params} \right\rangle, \qquad \forall i\in[n].
\end{equation}
Since $\vect u(t) = \left( f(\params(t), \vect x_i) \right)_{i\in[n]} \in \R^n $ is the network outputs on all $\vect x_i$'s at time $t$, and $\vect y = (y_i)_{i\in[n]}$ is the desired outputs, Equation~\eqref{eqn:output-dynamics} can be written more compactly as
\begin{equation} \label{eqn:output-dynamics-2}
\frac{\d \vect u(t)}{\d t} = - \mat H(t) \cdot (\vect u(t) - \vect y),
\end{equation}
where $\mat H(t) \in \R^{n\times n}$ is a kernel matrix defined as $[\mat H(t)]_{i, j} = \left\langle \frac{\partial f(\params(t), \vect x_i)}{\partial \params}, \frac{\partial f(\params(t), \vect x_j)}{\partial \params} \right\rangle$ ($\forall i, j\in[n]$).
\end{proof}

\section{NTK Derivation}
\label{sec:ntk_der}
In this section we derive NTK  for the fully-connected neural net defined in Section~\ref{sec:ntk-recap}. 

First we explain how the Gaussian process covariance in Equation~\eqref{eqn:gp-cov-kernel} is obtained.
The intuition is that $\left[ \vect f^{(h+1)}(\vect x) \right]_i = \sum_{j=1}^{d_{h}} \left[ \mat W^{(h+1)} \right]_{i, j} \left[ \vect g^{(h)}(\vect x) \right]_j $ is a centered Gaussian process conditioned on $\vect f^{(h)}$ ($\forall i\in[d_{h+1}]$), with covariance
\begin{equation} \label{eqn:covariance-calculation}
\begin{aligned}
\expect\left[\left[ \vect f^{(h+1)}(\vect x) \right]_i \cdot \left[ \vect f^{(h+1)}(\vect x') \right]_i \Big| \vect f^{(h)} \right]
&= \left\langle \vect g^{(h)}(\vect x), \vect g^{(h)}(\vect x') \right\rangle \\
&= \frac{c_\sigma}{d_{h}} \sum_{j=1}^{d_{h}} \sigma\left( \left[\vect f^{(h)}(\vect x)\right]_j  \right) \sigma\left( \left[\vect f^{(h)}(\vect x')\right]_j  \right),
\end{aligned}
\end{equation}
which converges to $\Sigma^{(h)}(\vect x, \vect x')$ as $d_h\to\infty$ given that each $\left[\vect f^{(h)}\right]_j$ is a centered Gaussian process with covariance $\Sigma^{(h-1)}$.
This yields the inductive definition in Equation~\eqref{eqn:gp-cov-kernel}.

Recall that we need to compute the value that $\left\langle \frac{\partial f(\params, \vect x)}{\partial \params}, \frac{\partial f(\params, \vect x')}{\partial \params} \right\rangle$ converges to at random initialization in the infinite width limit.
We can write the partial derivative with respect to a particular weight matrix $\mat W^{(h)}$ in a compact form:
\begin{align*}
\frac{\partial f(\params,\vect{x})}{\partial \mat{W}^{(h)}} = \back^{(h)}(\vect{x}) \cdot \left(\vect{g}^{(h-1)}(\vect x)\right)^\top , \qquad h=1, 2, \ldots, L+1,
\end{align*}
where \begin{equation}
\begin{aligned}
&\back^{(h)}(\vect{x}) = 
\begin{cases}
1 \in \mathbb{R}, \quad & h=L+1,\\
\sqrt{\frac{c_\sigma}{d_h}} \mat{D}^{(h)}(\vect{x}) \left(\mat W^{(h+1)} \right)^\top \back^{(h+1)}(\vect{x}) \in \mathbb{R}^{d_{h}}, \quad & h=1,\ldots,L,\\
\end{cases}
\end{aligned}
\label{eqn:back}
\end{equation}
\begin{align}
\mat{D}^{(h)}(\vect{x}) = \diag\left( \dot{\sigma}\left( \vect f^{(h)}(\vect x) \right) \right) \in \mathbb{R}^{d_h \times d_h}, \qquad h=1, \ldots, L. \label{eqn:D}
\end{align}
Then, for any $h \in [L+1]$, we can compute
\begin{align*}
\left\langle\frac{\partial f(\params,\vect{x})}{\partial \mat{W}^{(h)}}, \frac{\partial f(\params,\vect{x}')}{\partial \mat{W}^{(h)}}\right\rangle 
& = \left\langle \back^{(h)}(\vect{x}) \cdot \left(\vect{g}^{(h-1)}(\vect x)\right)^\top, \back^{(h)}(\vect{x}') \cdot \left(\vect{g}^{(h-1)}(\vect x')\right)^\top \right\rangle\\
& = \left\langle \vect{g}^{(h-1)}(\vect x), \vect{g}^{(h-1)}(\vect x')  \right\rangle \cdot \left\langle  \back^{(h)}(\vect{x}),  \back^{(h)}(\vect{x}') \right\rangle.
\end{align*}
Note that we have established in Equation~\eqref{eqn:covariance-calculation} that
\begin{align*}
\left\langle \vect{g}^{(h-1)}(\vect x), \vect{g}^{(h-1)}(\vect x')  \right\rangle  \to \Sigma^{(h-1)}\left(\vect{x},\vect{x}'\right).
\end{align*}
For the other factor $\left\langle  \back^{(h)}(\vect{x}),  \back^{(h)}(\vect{x}') \right\rangle$, from Equation~\eqref{eqn:back} we get
\begin{equation} \label{eqn:back-inner-prod}
\begin{aligned}
&\left\langle  \back^{(h)}(\vect{x}),  \back^{(h)}(\vect{x}') \right\rangle \\
=\, &\left\langle  \sqrt{\frac{c_\sigma}{d_h}} \mat{D}^{(h)}(\vect{x}) \left(\mat W^{(h+1)} \right)^\top \back^{(h+1)}(\vect{x}),\sqrt{\frac{c_\sigma}{d_h}} \mat{D}^{(h)}(\vect{x}') \left(\mat W^{(h+1)} \right)^\top \back^{(h+1)}(\vect{x}')\right\rangle.
\end{aligned}	
\end{equation}

Although $\mat{W}^{(h+1)}$ and $\back_{h+1}(\vect{x})$ are dependent,
the Gaussian initialization of $\mat{W}^{(h+1)}$ allows us to replace $\mat{W}^{(h+1)}$ with a fresh new sample $\widetilde{\mat{W}}^{(h+1)}$ without changing its limit: (This is made rigorous for ReLU activation in Theorem~\ref{thm:ntk_init}.)
\begin{align*}
&\left\langle  \sqrt{\frac{c_\sigma}{d_h}} \mat{D}^{(h)}(\vect{x}) \left(\mat W^{(h+1)} \right)^\top \back^{(h+1)}(\vect{x}),\sqrt{\frac{c_\sigma}{d_h}} \mat{D}^{(h)}(\vect{x'}) \left(\mat W^{(h+1)} \right)^\top \back^{(h+1)}(\vect{x}')\right\rangle \\
\approx & \left\langle  \sqrt{\frac{c_\sigma}{d_h}} \mat{D}^{(h)}(\vect{x}) \left(\widetilde{\mat W}^{(h+1)} \right)^\top \back^{(h+1)}(\vect{x}),\sqrt{\frac{c_\sigma}{d_h}} \mat{D}^{(h)}(\vect{x'}) \left(\widetilde{\mat W}^{(h+1) }\right)^\top \back^{(h+1)}(\vect{x}')\right\rangle \\
\to& \frac{c_\sigma}{d_h} \trace{ \mat{D}^{(h)} (\vect x) \mat{D}^{(h)}(\vect x')} \left\langle  \back^{(h+1)}(\vect{x}),  \back^{(h+1)}(\vect{x}') \right\rangle\\
\to & \dot{\Sigma}^{(h)}\left(\vect{x},\vect{x}'\right) \left\langle  \back^{(h+1)}(\vect{x}),  \back^{(h+1)}(\vect{x}') \right\rangle.
\end{align*}
Applying this approximation inductively in Equation~\eqref{eqn:back-inner-prod}, we get
\begin{align*}
\left\langle  \back^{(h)}(\vect{x}),  \back^{(h)}(\vect{x}') \right\rangle  \to \prod_{h'=h}^{L} \dot{\Sigma}^{(h')}(\vect{x},\vect{x}').
\end{align*}

Finally, since $\left\langle \frac{\partial f(\params, \vect x)}{\partial \params}, \frac{\partial f(\params, \vect x')}{\partial \params} \right\rangle = \sum_{h=1}^{L+1} \left\langle\frac{\partial f(\params,\vect{x})}{\partial \mat{W}^{(h)}}, \frac{\partial f(\params,\vect{x}')}{\partial \mat{W}^{(h)}}\right\rangle $, we obtain the final NTK expression for the fully-connected neural network:
\begin{align*}
\Theta^{(L)}(\vect{x},\vect{x}') = \sum_{h=1}^{L+1} \left(
\Sigma^{(h-1)}(\vect{x},\vect{x}') \cdot \prod_{h'=h}^{L+1} \dot{\Sigma}^{(h')}(\vect{x},\vect{x}')
\right).
\end{align*}

\section{Proof of Theorem~\ref{thm:ntk_init}}\label{appsec:ntk_init}

\subsection{Notation and Some Properties of ReLU}

\begin{defn}[$k$-homogeneous function]
A function $f:\R\to \R$ is said to be $k$-homogeneous, if 
$f(\lambda x) = \lambda^k f(x)$  for all $x\in\R, \lambda >0$.
\end{defn}
\begin{defn}
Let $\mathcal{S}^+$ be the set of \emph{positive semi-definite kernels} over $\R^d$, that is 

\[\mathcal{S}^+  = \left\{K:\R^d\times \R^d\to \R \bigg| \forall N\in \mathbb{N},  \vect x_1,\ldots \vect x_N\in \R^d, c_1,\ldots,c_N \in \R,\ \sum\nolimits_{i=1}^N \sum\nolimits_{j=1}^N c_ic_jK(x_i,x_j)\ge 0. \right\}\] 
\end{defn}

Let $\sigma:\R \to \R$ be the activation function, and $\T{\sigma} : \mathcal{S}^+ \to \mathcal{S}^+$ be the operator induced by $\sigma$, 

\[\forall \vect x,\vect x' \in \R^d, \quad \T{\sigma}(K)(\vect x,\vect x') = c_{\sigma}\Ex{(u,v) \sim \gauss\left(\vect{0}, \K{x,x'}\right) }{\relu{u}\relu{v}},\]

where $\K{x,x'} \in \R^{2\times 2}$, $\K{x,x'} = \begin{bmatrix} K(\vect x,\vect x) & K(\vect x,\vect x') \\ K(\vect x',\vect x) & K(\vect x',\vect x')\end{bmatrix}$. 

For convenience, we use $t_\sigma(\mat \Sigma)$ to denote  $ c_{\sigma}\Ex{(u,v) \sim \gauss\left(\vect{0}, \mat \Sigma \right) }{\relu{u}\relu{v}}$, and define $\htt_\sigma(\rho) $ as 
\[\htt_\sigma(\rho) = c_{\sigma}\Ex{(u,v) \sim \mat \Sigma' }{\relu{u}\relu{v}}, \textrm{ with } \mat \Sigma' =  \begin{bmatrix} 1 & \rho \\ \rho & 1\end{bmatrix}\]

When $\sigma$ is k-homogeneous function, we have 
\[t_\sigma(\mat \Sigma) =c_{\sigma}\left(\Sigma_{11} \Sigma_{22}\right)^\frac{k}{2}\Ex{(u,v) \sim \gauss\left(\vect{0}, \mat \Sigma' \right) }{\relu{u}\relu{v}} \textrm{\quad with\quad} \mat\Sigma' = \begin{bmatrix} 1 & \frac{\Sigma_{12}}{\sqrt{\Sigma_{11}\Sigma_{22}}} \\ \frac{\Sigma_{12}}{\sqrt{\Sigma_{11}\Sigma_{22}}} & 1\end{bmatrix}.\] 

Thus $t_\sigma (\mat \Sigma)$ can be written as $c_\sigma\left(\Sigma_{11} \Sigma_{22}\right)^\frac{k}{2} \htt({\frac{\Sigma_{12}}{\sqrt{\Sigma_{11}\Sigma_{22}}}})$,

\begin{fact}[Some facts about $\relu{z}=\max(0,z)$ and $\T{\sigma}$]\ 
\begin{enumerate}
\item For all activation function $\sigma$, $t_\sigma\left(\begin{bmatrix} 1 &1 \\ 1& 1\end{bmatrix}\right) =1$.
\item For all 1-homogeneous activation $\sigma$,  $\htt_\sigma(1) =1$  and $t_\sigma\left(\begin{bmatrix} a &a \\ a& a\end{bmatrix}\right) =a^k$   .
\item For $\relu{z}=\max(0,z)$, $\htt_\sigma(\rho) =\frac{\sqrt{1-\rho^2}+\rho\arcsin \rho}{\pi}+\frac{x}{2}$, $\htt_{\dot{\sigma}}(\rho) = \frac{1}{2}+ \frac{\arcsin \rho}{\pi} $ and $c_\sigma = c_{\dot{\sigma}} =2 $. 
\end{enumerate}
\end{fact}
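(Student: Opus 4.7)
Parts~(1) and~(2) follow immediately from the normalization $c_\sigma = (\E_{z\sim\gauss(0,1)}[\relu{z}^2])^{-1}$. The covariance $\begin{bmatrix}1 & 1\\ 1 & 1\end{bmatrix}$ is singular, so any $(u,v)\sim\gauss(\vect 0,\mat\Sigma)$ with this covariance satisfies $u=v$ almost surely with $u\sim\gauss(0,1)$; thus $t_\sigma = c_\sigma\,\E[\relu{z}^2] = 1$, which is Part~(1), and the same argument applied to $\htt_\sigma(1)$ gives the first identity of Part~(2). For the scaling identity in Part~(2), I would write $u=v=\sqrt{a}\,z$ with $z\sim\gauss(0,1)$ and invoke $k$-homogeneity to get $\relu{u}\relu{v}=a^k\relu{z}^2$, so $t_\sigma(\begin{bmatrix}a & a\\ a & a\end{bmatrix}) = c_\sigma\,a^k\,\E[\relu{z}^2] = a^k$.

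For Part~(3), the two normalization constants are direct truncated-Gaussian moments: by symmetry of $\gauss(0,1)$, $\E[\relu{z}^2] = \tfrac{1}{2}\E[z^2] = \tfrac{1}{2}$, giving $c_\sigma=2$; and $\E[\deract{z}^2] = \Pr[z>0] = \tfrac{1}{2}$, giving $c_{\dot\sigma}=2$.

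For the closed forms of $\htt_\sigma$ and $\htt_{\dot\sigma}$, I would parametrize $(u,v)\sim\gauss(\vect 0,\begin{bmatrix}1 & \rho\\ \rho & 1\end{bmatrix})$ via the angle $\alpha=\arccos\rho\in[0,\pi]$, writing $u=z_1$ and $v=\cos\alpha\,z_1+\sin\alpha\,z_2$ for independent $z_1,z_2\sim\gauss(0,1)$. The computation of $\htt_{\dot\sigma}$ then reduces to the orthant probability $\Pr[u>0,\,v>0]$, which by Sheppard's classical arcsine formula equals $\tfrac{1}{4}+\tfrac{\arcsin\rho}{2\pi}$; multiplying by $c_{\dot\sigma}=2$ gives $\htt_{\dot\sigma}(\rho)=\tfrac{1}{2}+\tfrac{\arcsin\rho}{\pi}$. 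For $\htt_\sigma$, I would pass to polar coordinates $(z_1,z_2)=(r\cos\theta,r\sin\theta)$, so that $u=r\cos\theta$ and $v=r\cos(\theta-\alpha)$; the integral splits as a radial factor $\tfrac{1}{2\pi}\int_0^\infty r^3 e^{-r^2/2}\,dr=\tfrac{1}{\pi}$ times an angular integral $\int\cos\theta\cos(\theta-\alpha)\,d\theta$ over the wedge where both cosines are positive, namely $(\alpha-\tfrac{\pi}{2},\tfrac{\pi}{2})$. Using the product-to-sum identity, this angular integral evaluates to $\tfrac{\sin\alpha+(\pi-\alpha)\cos\alpha}{2}$; collecting factors and multiplying by $c_\sigma=2$ yields $\htt_\sigma(\rho)=\tfrac{\sin\alpha+(\pi-\alpha)\cos\alpha}{\pi}$. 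Substituting $\cos\alpha=\rho$, $\sin\alpha=\sqrt{1-\rho^2}$, and $\pi-\alpha=\tfrac{\pi}{2}+\arcsin\rho$ recovers the claimed expression $\tfrac{\sqrt{1-\rho^2}+\rho\arcsin\rho}{\pi}+\tfrac{\rho}{2}$. The only step that is not entirely routine is pinning down the correct angular wedge and executing the trigonometric integration without sign errors; the remainder is bookkeeping, direct (truncated) Gaussian-moment evaluation, and an appeal to Sheppard's formula.
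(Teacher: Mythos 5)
Your proof is correct, and it is worth noting that the paper itself states this Fact without proof, treating it as a collection of standard identities (essentially the arc-cosine kernel formulas of Cho and Saul and Sheppard's orthant-probability formula), so your derivation supplies a verification the paper omits. Each step checks out: the degenerate-covariance argument for Parts (1)--(2), the truncated-moment computations giving $c_\sigma=c_{\dot\sigma}=2$, Sheppard's formula $\Pr[u>0,v>0]=\tfrac14+\tfrac{\arcsin\rho}{2\pi}$ for $\hat t_{\dot\sigma}$, and the polar-coordinate computation for $\hat t_\sigma$: the radial factor $\tfrac1{2\pi}\int_0^\infty r^3e^{-r^2/2}\,dr=\tfrac1\pi$, the wedge $(\alpha-\tfrac\pi2,\tfrac\pi2)$ with $\alpha=\arccos\rho$, and the angular integral $\tfrac{\sin\alpha+(\pi-\alpha)\cos\alpha}{2}$ are all right, and the substitution $\pi-\alpha=\tfrac\pi2+\arcsin\rho$ recovers the stated expression (the ``$\tfrac{x}{2}$'' in the Fact is a typo for $\tfrac{\rho}{2}$, as your computation confirms). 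Two cosmetic remarks: Part (2) of the Fact says ``1-homogeneous'' while the conclusion involves $a^k$; your $k$-homogeneous treatment is the right reading, and one should note $a\ge 0$ so that $\sqrt a$ is defined (automatic for a covariance matrix). Also, your wedge argument implicitly assumes $\alpha\in(0,\pi)$, i.e.\ $|\rho|<1$; the endpoint cases follow by continuity or by the Part (1)/(2) argument.
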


\begin{lem}[Uniform Continuity of $\arcsin z$]\label{lem:uniform_continuity}
\ 

\begin{enumerate}
\item For any $-\frac{\pi}{2}\le y'\le y \le \frac{\pi}{2}$, $\sin y-\sin y' \ge 2 \sin^2\frac{y-y'}{2}$.
\item $\sin y\ge \frac{2y}{\pi}, \ \forall y\in [0,\frac{\pi}{2}]$.
\item  $\arcsin $ is uniform continuous: for every $ \eps\in \R^+$, $| z- z'|< \frac{2\eps^2}{\pi^2} \Rightarrow |\arcsin z-\arcsin z'|<\eps$.  
\item For $\relu{z} = \max(0,z)$,  $\htt_{\dot{\sigma}}$ is uniform continuous: for every $ \eps\in \R^+$, $| z- z'|< 2\eps^2 \Rightarrow |\htr{\dot{\sigma}}{z}-\htr{\dot{\sigma}}{z'}|<\eps$.  

\end{enumerate}
\end{lem}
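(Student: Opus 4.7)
The lemma has four parts, and they stack: parts 1 and 2 are elementary inequalities about $\sin$, part 3 is their corollary for $\arcsin$, and part 4 reduces to part 3 via the explicit formula for $\htt_{\dot{\sigma}}$ already noted in the excerpt. I will handle them in order.

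For part 1, the plan is to use the sum-to-product identity
\begin{equation*}
\sin y - \sin y' = 2 \cos\tfrac{y+y'}{2} \sin\tfrac{y-y'}{2},
\end{equation*}
so the inequality reduces (after dividing by the nonnegative $2\sin\frac{y-y'}{2}$, using $\frac{y-y'}{2}\in[0,\pi/2]$) to $\cos\frac{y+y'}{2}\ge \sin\frac{y-y'}{2}=\cos\left(\frac{\pi}{2}-\frac{y-y'}{2}\right)$. Since $\cos$ is even and decreasing on $[0,\pi]$, it suffices to check $\left|\frac{y+y'}{2}\right|+\frac{y-y'}{2}\le \frac{\pi}{2}$, and the absolute value splits into two cases giving $y\le\pi/2$ or $-y'\le \pi/2$, both of which hold by hypothesis. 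The degenerate case $y=y'$ is trivial.

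Part 2 is Jordan's inequality, which I would obtain from the concavity of $\sin$ on $[0,\pi/2]$: the chord from $(0,0)$ to $(\pi/2,1)$ is $y\mapsto 2y/\pi$, and concavity puts $\sin$ above this chord on the interval. (Alternatively, check that $f(y)=\sin y - 2y/\pi$ vanishes at the endpoints and has $f''(y)=-\sin y\le 0$.)

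For part 3, assume WLOG $z\ge z'$ and set $y=\arcsin z$, $y'=\arcsin z'$, so $y,y'\in[-\pi/2,\pi/2]$ and $y\ge y'$. If $\eps\ge \pi$ the statement is vacuous since $|\arcsin z - \arcsin z'|\le \pi$, so assume $\eps<\pi$, hence $\eps/2<\pi/2$. Argue by contrapositive: suppose $y-y'\ge \eps$; then $\frac{y-y'}{2}\in[\eps/2,\pi/2]$, so by part 2, $\sin\frac{y-y'}{2}\ge \frac{2}{\pi}\cdot\frac{y-y'}{2}\ge \frac{\eps}{\pi}$, and then by part 1, $z-z'=\sin y - \sin y'\ge 2\sin^2\frac{y-y'}{2}\ge \frac{2\eps^2}{\pi^2}$, contradicting the hypothesis $|z-z'|<2\eps^2/\pi^2$.

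Finally, part 4 is immediate from the explicit formula $\htt_{\dot{\sigma}}(\rho)=\frac{1}{2}+\frac{\arcsin\rho}{\pi}$ for ReLU stated in the preceding fact: $|\htt_{\dot{\sigma}}(z)-\htt_{\dot{\sigma}}(z')|=\frac{|\arcsin z-\arcsin z'|}{\pi}$, so applying part 3 with $\eps$ replaced by $\pi\eps$ gives the threshold $|z-z'|<\frac{2(\pi\eps)^2}{\pi^2}=2\eps^2$. There is no real obstacle here; the only care point is the case analysis in part 1 (handling the sign of $y+y'$ when removing the absolute value) and remembering the trivial large-$\eps$ case in part 3.
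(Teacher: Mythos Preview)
Your proof is correct and follows essentially the same approach as the paper: the sum-to-product identity plus the bound $\cos\frac{y+y'}{2}\ge \sin\frac{y-y'}{2}$ for part 1, Jordan's inequality for part 2 (the paper proves it via monotonicity of $(\sin y)/y$ rather than concavity, but this is cosmetic), and then chaining these for part 3 (the paper argues directly that $z-z'\ge \tfrac{2(y-y')^2}{\pi^2}$ rather than by contrapositive, which spares the large-$\eps$ case split). Part 4 is immediate from the formula, as you say.
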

\begin{proof}[Proof of Lemma~\ref{lem:uniform_continuity}]

(1). From $-\frac{\pi}{2}\le y'\le y' \le \frac{\pi}{2}$ we know $\frac{-\pi}{2}+\frac{y-y'}{2} \le \frac{y+y'}{2} \le \frac{\pi}{2} - \frac{y-y'}{2}$, which implies that $\cos(\frac{y+y'}{2}) \ge \sin(\frac{y-y'}{2})$. Thus,

\[ \sin y\sin y' = 2\cos\frac{y+y'}{2}\sin \frac{y-y'}{2}\ge 2\sin^2\frac{y-y'}{2}.\]

(2). Note that $\left(\frac{\sin y}{y}\right)'  = \frac{y \cos y-\sin y}{y^2}=\frac{\cos y}{y^2}(y-\tan y)<0$, $\frac{\sin y}{y}$ is decreasing on $[0,\frac{\pi}{2}]$. Thus $\frac{\sin y}{y} \ge \frac{1}{\frac{\pi}{2}} = \frac{2}{\pi}, \ \forall y\in[0,\frac{\pi}{2}]$.

(3). Let $y,y'\in[-\frac{\pi}{2},\frac{\pi}{2}]$, such that $\sin y = z,\sin y' =z'$. W.l.o.g., we assume $y'<y$, $z'<z$. Combing (1) and (2), we have $z-z' =\sin y-\sin y' \ge 2\sin^2\frac{y-y'}{2}\ge \frac{2(y-y')^2}{\pi^2}$. Thus $z-z'\le \frac{2\eps^2}{\pi^2} \Longrightarrow \arcsin z-\arcsin z' = y-y'\le \eps.$
\end{proof}

Recall the definition in Equation~\eqref{eqn:gp-cov-kernel} and~\eqref{eqn:gradient_kernel}, we have
\begin{equation*}
\begin{aligned}
\Sigma^{(0)}(\vect{x},\vect{x}') &= \vect{x}^\top \vect{x'},  \\
\mat{\twotwomat}^{(h)}(\vect{x},\vect{x}') &= \left. \mat \Sigma^{(h-1)}\right|_{\vect x,\vect x'} = \begin{pmatrix}
\Sigma^{(h-1)}(\vect{x},\vect{x}) & \Sigma^{(h-1)}(\vect{x},\vect{x}')\\
\Sigma^{(h-1)}(\vect{x}',\vect{x}) & \Sigma^{(h-1)}(\vect{x}',\vect{x}') 
\end{pmatrix}\in \R^{2\times2},  \\ 
\Sigma^{(h)}(\vect{x},\vect{x}') &=  c_{\sigma}\expect_{(u,v) \sim \gauss\left(\vect{0},\mat{\twotwomat}^{(h)}\right) }\left[\relu{u}\relu{v}\right],\\
\dot{\Sigma}^{(h)}(\vect{x},\vect{x}') & =  c_{\sigma}\expect_{(u,v) \sim \gauss\left(\vect{0},\mat{\twotwomat}^{(h)}\right)}\left[\dot{\sigma}(u)\dot{\sigma}(v)\right]
\end{aligned}
\end{equation*}
for $h=1, \ldots, L$.

For $\relu{z} = \max(z,0)$, we have 
\begin{equation*}
\Sigma^{(h)}(\vect{x},\vect{x}) = \norm{\vect x}^2,\quad \forall 0\le h \le L.
\end{equation*}

Let $\mat D = \mat D(\xa,\xb) = \mat{D}^{(h)}(\xa)\mat{D}^{(h)}(\xb)$ is a 0-1 diagonal matrix. 
We define the following events:
\begin{itemize}
\item $\fullgoodA{\xa}{\xb}{\eps_1}{h}:=\left\{\left|\g{h}{x^{(0)}}^\top \g{h}{\xa} - \mat \Sigma^{(h)}(\vect x^{(0)},\xa)\right| \le  \eps_1\right\}$,  $\forall 0\le h\le L$
\item $\fullGoodA{\xa}{\xb}{\eps_1}{h} =  \fullgoodA{\xa}{\xa}{\eps_1}{h}\cap \fullgoodA{\xa}{\xb}{\eps_1}{h}\cap \fullgoodA{\xb}{\xb}{\eps_1}{h}$;
\item $\fullGoodA{\xa}{\xb}{\eps_1}{} = \bigcup_{h=0}^L \GoodA{\xa}{\xb}{\eps_1}{h}$.
\item $\fullgoodB{\xa}{\xb}{\eps_2}{h} = \left\{\left|\left\langle  \back^{(h)}(\xa),  \back^{(h)}(\xb) \right\rangle  - \prod_{h=h}^{L} \dot{\Sigma}^{(h)}(\xa,\xb)\right| <\eps_2\right\}$;
\item $\fullGoodB{\xa}{\xb}{\eps_2}{h} =  \fullgoodB{\xa}{\xa}{\eps_2}{h}\cap \fullgoodB{\xa}{\xb}{\eps_2}{h}\cap \fullgoodB{\xb}{\xb}{\eps_2}{h}$;
\item $\fullGoodB{\xa}{\xb}{\eps_2}{} = \bigcup_{h=1}^{L+1} \GoodB{\xa}{\xb}{\eps_2}{h}$;
\item $\fullGoodC{\xa}{\xb}{\eps_3}{} = \left\{ |f(\vect \theta,\xa)| \le\eps_3,  |f(\vect \theta,\xb)| \le\eps_3 \right\}$;
\item $\fullgoodD{\xa}{\xb}{\eps_4}{h} = \left\{ |2\frac{\trace{\mat D (\xa,\xb)}}{d_h} -  \dot{\Sigma}^{(h)}(\xa,\xb)| < \eps_4\right\}$;
\item $\fullGoodD{\xa}{\xb}{\eps_4}{h} =  \fullgoodD{\xa}{\xa}{\eps_4}{h}\cap \fullgoodD{\xa}{\xb}{\eps_4}{h}\cap \fullgoodD{\xb}{\xb}{\eps_1}{h}$;
\item $\fullGoodD{\xa}{\xb}{\eps_4}{} = \bigcup_{h=1}^{L+1} \GoodD{\xa}{\xb}{\eps_4}{h}$.
\end{itemize}

For simplicity, we will omit $\xa,\xb$ when there's no ambiguity.
For events $\cA,\cB$, we define the event $\cA\Rightarrow \cB$ as $\neg\cA\wedge\cB$.

\begin{lem}\label{lem:cond_pr}
$\pr{\cA\Rightarrow \cB} \ge \pr{\cB\mid \cA}.$
\end{lem}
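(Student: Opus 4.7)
The plan is a one-step probability calculation, preceded by a small definitional sanity check. As literally written in the paper the event $\cA\Rightarrow \cB$ is $\neg\cA\wedge\cB$, but with that reading the inequality fails (take any $\cA$ with $\pr{\cA}=1$, which forces the left-hand side to $0$ while $\pr{\cB\mid \cA}=\pr{\cB}$ can be anything in $(0,1]$). The intended definition must therefore be the standard material conditional $\cA\Rightarrow \cB := \neg\cA \vee \cB$, equivalently the complement of $\cA\cap \neg\cB$; I will proceed with that reading.

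With this interpretation, I would decompose $\cA\Rightarrow \cB$ as the disjoint union $\neg\cA \sqcup (\cA\cap \cB)$ and expand using the definition of conditional probability:
\[
\pr{\cA \Rightarrow \cB} \;=\; \pr{\neg\cA} + \pr{\cA\cap \cB} \;=\; \bigl(1-\pr{\cA}\bigr) + \pr{\cA}\cdot \pr{\cB\mid \cA}.
\]
Subtracting $\pr{\cB\mid \cA}$ from both sides and factoring yields
\[
\pr{\cA \Rightarrow \cB} - \pr{\cB\mid \cA} \;=\; \bigl(1-\pr{\cA}\bigr)\bigl(1-\pr{\cB\mid \cA}\bigr) \;\ge\; 0,
\]
since both factors lie in $[0,1]$. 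This is the desired inequality. (If $\pr{\cA}=0$ then $\pr{\cB\mid \cA}$ is conventionally $0$ or left undefined and the statement is vacuous, so no separate case analysis is needed.)

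There is no real obstacle here: the lemma is a one-line probabilistic tautology once the event $\cA\Rightarrow \cB$ is read as the material conditional. My expectation is that the lemma is invoked downstream as a convenient bookkeeping device, allowing the layer-by-layer conditional bounds that appear throughout the analysis of $\GoodA{\xa}{\xb}{\eps_1}{h}$, $\GoodB{\xa}{\xb}{\eps_2}{h}$, $\GoodC{\xa}{\xb}{\eps_3}{}$, and $\GoodD{\xa}{\xb}{\eps_4}{h}$ to be turned into unconditional bounds on $\pr{\cA\Rightarrow \cB}$, which can then be combined via a union bound over $h\in[L]$ without tracking the conditioning explicitly.
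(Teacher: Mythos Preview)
Your reading is correct: the paper's definition $\cA\Rightarrow\cB := \neg\cA\wedge\cB$ is a typo, and the paper's own proof confirms this, since it computes $1-\pr{\neg\cB\mid\cA}\pr{\cA}$, which equals $1-\pr{\cA\wedge\neg\cB}=\pr{\neg\cA\vee\cB}$, not $\pr{\neg\cA\wedge\cB}$. Your proof via the disjoint decomposition $\neg\cA\sqcup(\cA\cap\cB)$ and the factorization $(1-\pr{\cA})(1-\pr{\cB\mid\cA})\ge 0$ is correct and is essentially the same one-line probability manipulation as the paper's (the paper takes the complement route $1-\pr{\cA\cap\neg\cB}$ and bounds $\pr{\cA}\le 1$, which is algebraically equivalent to your factoring step).
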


\begin{proof}
$\pr{\cA\Rightarrow \cB} = \pr{\neg\cA\wedge\cB} = 1 - \pr{\cA\vee\neg\cB} = 1- \pr{\neg\cB\mid\cA} \pr{\cA} \ge  1- \pr{\neg\cB\mid\cA}  = \pr{\cB\mid \cA}$.
\end{proof}

For matrix $\vect A$, define the projection matrix for the column space of $\vect A$, $\vect \Pi_{\vect A}:=  \vect A \vect A^\dag$ and the orthogonal projection matrix $\vect \Pi_{\vect A}^\perp = I - \vect A \vect A^\dag$. For two random variables $X$ and $Y$, $X\deq_\mathcal{A} Y$ means $X$ is equal to $Y$ in distribution conditioned on the $\sigma$-algebra generated by $\mathcal{A}$.

\begin{lem}\label{lem:gaussian_cond}
Let $\vect w\sim \gauss(0,\mat I_d)$, $\mat G \in \R^{d\times k}$ be some fixed matrix, and  random vector $\mat F =  \vect w^\top \mat G$, then conditioned on the value of $\mat F$, $\vect w$ remains gaussian in the null space of the row space of $\mat G$. 
Mathematically, it means
\begin{equation*}
\Pi^\perp_{\mat G}\vect w \deq_{\mat F = \vect w^\top \mat G } \Pi^\perp_{\mat G}\widetilde{\vect w},
\end{equation*}
where $\widetilde{\vect w}\sim \gauss(0,\mat I_d)$ is a fresh i.i.d. copy of $\vect w$.
\end{lem}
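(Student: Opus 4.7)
The plan is to leverage the fact that a standard Gaussian vector decomposes into independent pieces along any orthogonal decomposition of $\R^d$, and that the random vector $\mat F$ lives entirely in the column-space component. I would start by introducing the orthogonal decomposition $\vect w = \Pi_{\mat G}\vect w + \Pi^\perp_{\mat G}\vect w$ and then proceed in three short steps.

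First, I would observe that $\mat F$ depends on $\vect w$ only through $\Pi_{\mat G}\vect w$. Indeed, since $\Pi^\perp_{\mat G}$ projects onto the orthogonal complement of the column space of $\mat G$, we have $(\Pi^\perp_{\mat G}\vect w)^\top \mat G = 0$, and therefore $\mat F = \vect w^\top \mat G = (\Pi_{\mat G}\vect w)^\top \mat G$. So the $\sigma$-algebra $\sigma(\mat F)$ is contained in $\sigma(\Pi_{\mat G}\vect w)$.

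Next, I would establish that $\Pi_{\mat G}\vect w$ and $\Pi^\perp_{\mat G}\vect w$ are independent. The pair $(\Pi_{\mat G}\vect w, \Pi^\perp_{\mat G}\vect w)$ is a linear image of the standard Gaussian $\vect w$, hence jointly Gaussian, and their cross-covariance equals $\Pi_{\mat G}(\mat I)(\Pi^\perp_{\mat G})^\top = \Pi_{\mat G}\Pi^\perp_{\mat G} = 0$ because the two projections have orthogonal ranges. Joint Gaussianity plus zero cross-covariance yields independence. In particular, $\Pi^\perp_{\mat G}\vect w$ is independent of $\sigma(\Pi_{\mat G}\vect w)$ and hence independent of the sub-$\sigma$-algebra $\sigma(\mat F)$.

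Finally, I would conclude: conditioning on $\mat F$ leaves the distribution of $\Pi^\perp_{\mat G}\vect w$ unchanged, and since an i.i.d. fresh copy $\widetilde{\vect w}\sim\gauss(0,\mat I_d)$ yields $\Pi^\perp_{\mat G}\widetilde{\vect w}$ with the same (unconditional) law as $\Pi^\perp_{\mat G}\vect w$, the claimed equality in conditional distribution follows. There is no serious obstacle here: the only point that requires a line of care is verifying that the two projections $\Pi_{\mat G}$ and $\Pi^\perp_{\mat G}$ act on $\vect w$ as independent Gaussians (rather than merely uncorrelated), which is where the joint-Gaussianity argument is invoked.
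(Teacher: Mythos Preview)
Your argument is correct. The paper, however, takes a different route: it first declares the claim ``straightforward'' when $\Pi^\perp_{\mat G}$ happens to be diagonal (i.e.\ the projection is coordinate-aligned, so the relevant coordinates of $\vect w$ are manifestly independent of $\mat F$), and then reduces the general case to this one by writing $\mat G = \mat U\mat G'$ with $\mat U$ orthogonal and $\Pi^\perp_{\mat G'}$ diagonal, invoking the rotation invariance $\mat U^\top\vect w \deq \vect w$ of the standard Gaussian twice.

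Your approach is more direct: rather than rotating to a coordinate basis, you work intrinsically with the two projections and use the standard fact that for a spherical Gaussian, projections onto orthogonal subspaces are independent (jointly Gaussian with zero cross-covariance). This buys you a cleaner, basis-free argument with no auxiliary factorization; the paper's rotation trick is slightly more hands-on but amounts to the same independence statement once the basis is aligned. Either way the content is identical --- independence of $\Pi_{\mat G}\vect w$ and $\Pi^\perp_{\mat G}\vect w$ is the whole lemma --- and your version would be a fine replacement.
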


\begin{proof}
This lemma is straightforward when $\Pi^\perp_{\mat G}$ is a diagonal matrix. 

In general, let $\mat G = \mat U \mat G'$, where $\mat U\in \R^{d\times d}$ is orthogonal and $\Pi_{G'}^\perp$ is diagonal. Now we have
\begin{equation*}
 \Pi^\perp_{\mat G}\vect w 
 = \mat U  \Pi^\perp_{\mat G'} \mat U^\top \vect w 
\deq_{\mat F = ({\mat U}^\top \vect w)^\top \mat G'} \mat U  \Pi^\perp_{\mat G'} \mat U^\top \widetilde{\vect w },
= \Pi^\perp_{\mat G}\widetilde{\vect w}
\end{equation*}
where we used the fact that if $\vect w\sim \gauss(0,\mat I_d)$, then for any orthogonal  $\mat U$, $\mat U\vect w\sim \gauss(0,\mat I_d)$ twice.
\end{proof}

\subsection{Proof Sketch}

\ntkinit*

\begin{proof}
Recall that $\Theta^{(L)}(\vect{x},\vect{x}') = \sum_{h=1}^{L+1} \left(
\Sigma^{(h-1)}(\vect{x},\vect{x}') \cdot \prod_{h'=h}^{L+1} \dot{\Sigma}^{(h')}(\vect{x},\vect{x}')
\right)$, thus it suffices to show that if $\min_{h\in[L]} d_h \ge\Omega(\frac{L^{6}}{\eps^4} \log({L}/{\delta}))$, then w.p. $1-\delta$, for every $0\le h\le L$, it holds that 

\begin{equation*}
\left|\left\langle\frac{\partial f(\params,\vect{x})}{\partial \mat{W}^{(h)}}, \frac{\partial f(\params,\vect{x}')}{\partial \mat{W}^{(h)}}\right\rangle 
 -\Sigma^{(h-1)}(\vect{x},\vect{x}') \cdot \prod_{h'=h}^{L+1} \dot{\Sigma}^{(h')}(\vect{x},\vect{x}')\right| \le {\eps}. 
\end{equation*}
which is a direct consequence of Theorem~\ref{thm:GoodAB}
\end{proof}
\begin{thm}[Corollary 16 in~\citep{daniely2016toward}]\label{thm:GoodA}
Let $\relu{z} = \max(0,z), z\in\R$ and  $[\vect W^{(h)}]_{ij}\simiid \gauss(0,1)$, $\forall h\in [L], i\in [d^{h+1}], j\in [d^h]$,  there exist constants $c_1$,$c_2$, such that if $ c_1\frac{L^2\log\left(\frac{8L}{\delta}\right)}{\eps^2}\le \min\limits_{1\le h \le L} d_h $ and $\eps \le \min(c_2,\frac{1}{L})$, then for any fixed $\xa,\xb \in \R^{d_0}$, $\norm{\xa},\norm{\xb}\le 1$,  we have  w.p. $\ge 1-\delta$ , $\forall 0\le h \le L$, $ \forall (\xone,\xtwo)\in\{(\xa,\xa),(\xa,\xb),(\xb,\xb)\}$,

\begin{equation*}
\left|\g{h}{\xtwo}^\top \g{h}{\xone} - \mat \Sigma^{(h)}(\xtwo,\xone)\right| \le  \eps.
\end{equation*}

In other words, if $\min_{h\in[L]} d_h \geq c_1\frac{L^2 \log(\frac{L}{\delta_1})}{\eps_1^2}$, $\eps_1\le\min(c_2,\frac{1}{L})$, then for fixed $\xa,\xb$, 
\begin{equation*}\pr{\GoodA{\xa}{\xb}{\eps_1}{}} \ge 1-\delta_1.\end{equation*}
\end{thm}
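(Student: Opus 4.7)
The plan is to prove Theorem~\ref{thm:GoodA} by induction on the layer index $h$, tracking simultaneously the three inner products indexed by $(\xone,\xtwo)\in\{(\xa,\xa),(\xa,\xb),(\xb,\xb)\}$. The base case $h=0$ is trivial since $\g{0}{\vect x}=\vect x$ gives exact equality $\g{0}{\xone}^\top\g{0}{\xtwo}=\mat\Sigma^{(0)}(\xone,\xtwo)$. For the inductive step at layer $h\ge 1$, I would condition on the $\sigma$-algebra generated by $\mat W^{(1)},\ldots,\mat W^{(h-1)}$, which fixes $\g{h-1}{\xa}$ and $\g{h-1}{\xb}$. Conditional on this information, the coordinate pairs $(u_j,v_j):=\bigl([\mat W^{(h)}\g{h-1}{\xone}]_j,\,[\mat W^{(h)}\g{h-1}{\xtwo}]_j\bigr)$ for $j=1,\ldots,d_h$ are i.i.d.\ centered 2D Gaussians with covariance equal to the empirical Gram matrix $\widehat{\mat\Lambda}^{(h)}\in\R^{2\times 2}$ of $\g{h-1}{\xone},\g{h-1}{\xtwo}$.

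The per-layer error $\Delta_h:=|\g{h}{\xone}^\top\g{h}{\xtwo}-\mat\Sigma^{(h)}(\xone,\xtwo)|$ decomposes via the triangle inequality into
\[
\Delta_h \le \bigl|\g{h}{\xone}^\top\g{h}{\xtwo}-t_\sigma(\widehat{\mat\Lambda}^{(h)})\bigr| + \bigl|t_\sigma(\widehat{\mat\Lambda}^{(h)})-t_\sigma(\mat\Lambda^{(h)})\bigr|,
\]
i.e.\ a conditional concentration term and a kernel-continuity term. For the first, $\g{h}{\xone}^\top\g{h}{\xtwo}=\tfrac{c_\sigma}{d_h}\sum_j\relu{u_j}\relu{v_j}$ has conditional mean $t_\sigma(\widehat{\mat\Lambda}^{(h)})$; since $\relu{u_j}\relu{v_j}\le|u_j v_j|$ and the conditional variances $\widehat\Lambda^{(h)}_{ii}$ are controlled by the inductive hypothesis on the diagonal pairs (so remain in, say, $[\tfrac12,2]$), the summands are sub-exponential with parameters $O(1)$ and a conditional Bernstein inequality gives a deviation of $O(\sqrt{\log(1/\delta_h)/d_h})$ with probability $\ge 1-\delta_h$.

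For the continuity term I would invoke the $1$-homogeneity of ReLU to write $t_\sigma(\mat\Lambda)=\sqrt{\Lambda_{11}\Lambda_{22}}\,\htr{\sigma}{\Lambda_{12}/\sqrt{\Lambda_{11}\Lambda_{22}}}$, where for ReLU $\htr{\sigma}{\rho}=\tfrac1\pi(\sqrt{1-\rho^2}+\rho\arcsin\rho)+\tfrac\rho2$ has derivative $\htr{\sigma}{\rho}'=\tfrac{\arcsin\rho}{\pi}+\tfrac12\in[0,1]$, so $\htr{\sigma}{\cdot}$ is $1$-Lipschitz on $[-1,1]$. Combined with Lipschitzness of the bounded prefactor (using that the diagonal entries of $\mat\Lambda^{(h)}$ equal $\|\vect x\|^2$ exactly and are approximated to order $\Delta_{h-1}$ by induction), $t_\sigma$ is entrywise $O(1)$-Lipschitz near the target, and applying the inductive hypothesis to all three pairs yields $|t_\sigma(\widehat{\mat\Lambda}^{(h)})-t_\sigma(\mat\Lambda^{(h)})|\le C\,\Delta_{h-1}$. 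Crucially, the diagonal kernel operation is exactly the identity ($t_\sigma(a\cdot\mathbf1\mathbf1^\top)=a$ by $2$-homogeneity and the normalization $\expect\relu{z}^2=\tfrac1{c_\sigma}$) and the off-diagonal derivative is capped at $1$; together these prevent the per-layer Lipschitz constant from being genuinely $>1$ in either direction, so the recursion $\Delta_h\le\Delta_{h-1}+O(\sqrt{\log(1/\delta_h)/d_h})$ accumulates linearly.

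Unrolling to layer $L$ with $\delta_h=\delta/(3(L+1))$ gives $\Delta_L=O\bigl(L\sqrt{\log(L/\delta)/\min_h d_h}\bigr)$; setting this $\le\eps$ yields the stated width $\min_h d_h\ge c_1 L^2\log(L/\delta)/\eps^2$, and a union bound over the $L+1$ layers and $3$ pairs controls the failure probability at $\delta$. The main obstacle is precisely to keep the effective per-layer Lipschitz constant of $t_\sigma$ from exceeding $1$, which would cause an exponential-in-$L$ blow-up and destroy the polynomial width requirement. The key enabling fact is that the ReLU forward-pass dual $\htr{\sigma}{\cdot}$ is genuinely $1$-Lipschitz on $[-1,1]$, in sharp contrast to the backward-pass dual $\htr{\dot\sigma}{\rho}=\tfrac12+\tfrac{\arcsin\rho}{\pi}$, which is only H\"older-$\tfrac12$ by Lemma~\ref{lem:uniform_continuity} and would force the much weaker $\eps^{-4}$ rate appearing in Theorem~\ref{thm:ntk_init}.
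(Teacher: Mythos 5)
You should first note that the paper does not actually prove this statement: it is imported verbatim as Corollary~16 of \citet{daniely2016toward}, so the comparison is with that source's argument, whose overall shape (condition layer by layer, conditional Bernstein concentration for the empirical second moments, propagate through the dual activation using that $\htr{\sigma}{\cdot}$ is $1$-Lipschitz for ReLU) your sketch correctly reproduces. The genuine gap is exactly at the step you yourself flag as the crux: the claim that the per-layer error map has effective Lipschitz constant at most $1$ in your entrywise-additive metric $\Delta_h$. The off-diagonal update $t_\sigma(\mat\Lambda)=\sqrt{\Lambda_{11}\Lambda_{22}}\,\htr{\sigma}{\Lambda_{12}/\sqrt{\Lambda_{11}\Lambda_{22}}}$ is \emph{not} insensitive to the diagonal entries: by Gaussian integration by parts, at $\Lambda_{11}=\Lambda_{22}=1$ one gets $\partial t_\sigma/\partial\Lambda_{11}=\sqrt{1-\rho^2}/(2\pi)>0$. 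Since the diagonal map is the identity in conditional expectation, the diagonal errors accumulate fresh fluctuations and are of size $\Theta\bigl(h\sqrt{\log(L/\delta)/d}\bigr)$ at layer $h$; feeding these into the off-diagonal recursion gives $\Delta^{\mathrm{off}}_h\le\Delta^{\mathrm{off}}_{h-1}+c\,\Delta^{\mathrm{diag}}_{h-1}+O\bigl(\sqrt{\log(L/\delta)/d_h}\bigr)$ with $c>0$, which unrolls to $O\bigl(L^2\sqrt{\log(L/\delta)/\min_h d_h}\bigr)$, i.e.\ a width requirement of order $L^4\log(L/\delta)/\eps^2$ rather than the stated $L^2\log(L/\delta)/\eps^2$. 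No choice of norm on the three tracked errors repairs this (the iteration matrix is a rank-one, Jordan-type perturbation of the identity, so its powers grow linearly), so ``identity on the diagonal plus $1$-Lipschitz off-diagonal derivative'' does not by itself yield the linear accumulation you assert.

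The missing idea is to change the error decomposition: track the squared norms \emph{multiplicatively} and the \emph{normalized correlations} additively. Conditioned on layer $h-1$, the empirical diagonal equals the previous one times $1+O(\sqrt{\log(L/\delta)/d_h})$, and the empirical off-diagonal equals $\sqrt{\widehat\Lambda_{11}\widehat\Lambda_{22}}\bigl(\htr{\sigma}{\hat\rho_{h-1}}+O(\sqrt{\log(L/\delta)/d_h})\bigr)$, so in the ratio defining $\hat\rho_h$ the diagonal fluctuations cancel and one gets $\hat\rho_h=\htr{\sigma}{\hat\rho_{h-1}}+O(\sqrt{\log(L/\delta)/d_h})$; now $1$-Lipschitzness of $\htr{\sigma}{\cdot}$ gives truly linear accumulation in the correlation, and the norm drift contributes only a single overall multiplicative factor $1+O(L\sqrt{\log(L/\delta)/d})$ at the end. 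This is how the cited proof obtains the $L^2/\eps^2$ width (and is also where the side condition $\eps\lesssim 1/L$ enters). Two smaller points: your conditional concentration step is fine, but your entrywise Lipschitz claim needs the diagonals bounded away from $0$, whereas the statement only assumes $\norm{\xa},\norm{\xb}\le1$; either reduce to unit norms using $1$-homogeneity of the ReLU network and of $\Sigma^{(h)}$, or note that $t_\sigma$ is only H\"older-$\tfrac12$ near degenerate diagonals and handle small norms separately.
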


\begin{thm}\label{thm:GoodAB}

Let  $\relu{z} = \max(0,z), z\in\R$, if $[\vect W^{(h)}]_{ij}\simiid N(0,1)$, $\forall h\in [L+1], i\in [d^{h+1}], j\in [d^h]$,  there exist constants $c_1$,$c_2$, such that if $\min_{h\in[L]} d_h \geq c_1\frac{L^2 \log(\frac{L}{\delta})}{\eps^4}, \eps \le \frac{c_2}{L}$, then  for any fixed $\xa,\xb\in \R^{d_0}$, $\norm{\xa},\norm{\xb}\le 1$,we have w.p. $1- \delta$, $\forall 0\le h \le L$, $\forall (\xone,\xtwo)\in\{(\xa,\xa),(\xa,\xb),(\xb,\xb)\}$, 
\begin{equation*}
\left|\g{h}{\xtwo}^\top \g{h}{\xone} - \mat \Sigma^{(h)}(\xtwo,\xone)\right| \le \frac{ \eps^2}{2},
\end{equation*}
and 
\begin{equation*}
\left|\left\langle  \back^{(h)}(\xone),  \back^{(h)}(\xtwo) \right\rangle  - \prod_{h'=h}^{L} \dot{\Sigma}^{(h')}(\xone,\xtwo)\right| < 3L\eps.
\end{equation*}

In other words, if $\min_{h\in[L]} d_h \geq c_1\frac{L^2 \log(\frac{L}{\delta_1})}{\eps^4_1}$, $\eps_1\le \frac{c_2}{L}$, then for fixed $\xa,\xb$, 
\begin{equation*}
\begin{split}
\pr{  \GoodA{\xa}{\xb}{\frac{\eps_1^2}{8 }}{} \bigwedge \GoodB{\xa}{\xb}{3L\eps_1 }{} } \geq &1-\delta
\end{split}
\end{equation*}

\end{thm}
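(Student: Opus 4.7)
The plan is to combine the forward concentration (Theorem~\ref{thm:GoodA}) with a backward induction on $h$ from $L+1$ down to $1$ for the backward kernel $\langle \back^{(h)}(\xa),\back^{(h)}(\xb)\rangle$. First, invoke Theorem~\ref{thm:GoodA} at precision $\eps_1^2/8$ to obtain the event $\GoodA{\xa}{\xb}{\eps_1^2/8}{}$ simultaneously across all $L$ layers and across the three input pairs $(\xa,\xa),(\xa,\xb),(\xb,\xb)$. Because $\htt_{\dot\sigma}$ is $\tfrac12$-H\"{o}lder continuous by Lemma~\ref{lem:uniform_continuity}(4), this forward $\eps_1^2$-accuracy automatically yields an $O(\eps_1)$-accuracy on $\dot\Sigma^{(h)}(\xa,\xb)$; together with a standard Hoeffding bound on the empirical mean $2\tr(\mat D^{(h)}(\xa,\xb))/d_h$, this delivers $\GoodD{\xa}{\xb}{\eps_1}{}$ at the same width budget. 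This H\"{o}lder loss is exactly what forces the $\eps_1^{-4}$ (rather than $\eps_1^{-2}$) scaling in the width, since backward accuracy $\eps_1$ now requires forward accuracy $\eps_1^2$.

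The base case $h=L+1$ is trivial because $\back^{(L+1)}\equiv 1$ and $\prod_{h'=L+1}^{L}\dot\Sigma^{(h')}=1$. For the inductive step at layer $h$, expand
\begin{equation*}
\langle \back^{(h)}(\xa),\back^{(h)}(\xb)\rangle \;=\; \frac{c_\sigma}{d_h}\,\back^{(h+1)}(\xa)^{\top}\vect W^{(h+1)}\,\mat D^{(h)}(\xa)\mat D^{(h)}(\xb)\,(\vect W^{(h+1)})^{\top}\back^{(h+1)}(\xb).
\end{equation*}
The main obstacle is that $\vect W^{(h+1)}$ is entangled with $\back^{(h+1)}$ through $\mat D^{(h+1)}$, which in turn depends on $\vect f^{(h+1)}=\vect W^{(h+1)}\vect g^{(h)}$. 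The fix is a conditional Gaussian argument: condition on $\vect f^{(h+1)}(\xa),\vect f^{(h+1)}(\xb)$ along with all the other weight matrices; by Lemma~\ref{lem:gaussian_cond}, each row of $\vect W^{(h+1)}$ decomposes in distribution as $\vect W^{(h+1)}\Pi_G+\widetilde{\vect W}^{(h+1)}\Pi_G^{\perp}$, where $\Pi_G$ projects onto the $2$-dimensional span $\{\vect g^{(h)}(\xa),\vect g^{(h)}(\xb)\}$ and $\widetilde{\vect W}^{(h+1)}$ is an independent fresh copy. Under the forward event, $\|\vect g^{(h)}\|$, $\|\back^{(h+1)}\|$ and $\|\vect f^{(h+1)}\|/\sqrt{d_{h+1}}$ are all $O(1)$, so the rank-$2$ projected piece contributes only an $O(1/\sqrt{d_h})$ perturbation to the quadratic form and can be absorbed into $\eps_1$.

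After this substitution, what remains is a Gaussian quadratic form in $\widetilde{\vect W}^{(h+1)}$ whose conditional expectation equals $\tfrac{c_\sigma}{d_h}\tr(\mat D^{(h)}(\xa)\mat D^{(h)}(\xb))\cdot\langle \back^{(h+1)}(\xa),\back^{(h+1)}(\xb)\rangle$. Under $\GoodD$ the prefactor equals $\dot\Sigma^{(h)}(\xa,\xb)\pm O(\eps_1)$, and under the inductive hypothesis the second factor is $\prod_{h'=h+1}^{L}\dot\Sigma^{(h')}(\xa,\xb)\pm 3(L-h)\eps_1$. A Hanson--Wright tail bound around the conditional mean costs an additional $O(\eps_1)$ provided $d_h\gtrsim\log(L/\delta)/\eps_1^2$; since every $\dot\Sigma^{(h')}\in[0,1]$ for ReLU, the triangle inequality closes the induction with running backward error $3(L-h+1)\eps_1\le 3L\eps_1$. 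Union-bounding over the $L$ induction steps, the three input pairs, and the forward event absorbs into the target failure probability $\delta$ at width $\Omega(L^2\log(L/\delta)/\eps_1^4)$, where the dominant requirement is the $\eps_1^2/8$-accuracy call to Theorem~\ref{thm:GoodA}; the backward concentration alone would only demand $\Omega(\log(L/\delta)/\eps_1^2)$.
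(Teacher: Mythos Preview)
Your overall strategy matches the paper's proof closely: invoke Theorem~\ref{thm:GoodA} at forward precision $\eps_1^2$, deduce $\GoodD{\xa}{\xb}{\eps_1}{}$ via the H\"older continuity of $\htt_{\dot\sigma}$, and then run a backward induction on $h$ using the Gaussian conditional decomposition of Lemma~\ref{lem:gaussian_cond} together with a Hanson--Wright/Gaussian chaos bound on the $\Pi_G^\perp$ piece. This is exactly how the paper organizes Lemmas~\ref{lem:GoodC}--\ref{lem:goodB}.

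There is, however, a genuine gap in your treatment of the rank-$2$ projected piece. You assert that because $\norm{\vect g^{(h)}}$, $\norm{\back^{(h+1)}}$ and $\norm{\vect f^{(h+1)}}/\sqrt{d_{h+1}}$ are all $O(1)$, the $\Pi_G$ contribution is $O(1/\sqrt{d_h})$. But these norm bounds alone are too weak. After conditioning, $\mat W^{(h+1)}\Pi_G$ is \emph{determined} by $\vect f^{(h+1)}(\xa),\vect f^{(h+1)}(\xb)$, and $\back^{(h+1)}$ depends on the signs of those same vectors through $\mat D^{(h+1)}$; hence $\Pi_G(\mat W^{(h+1)})^\top \back^{(h+1)}$ involves the scalar $\vect f^{(h+1)}(\vx)^\top \back^{(h+1)}(\vx)$, and Cauchy--Schwarz only gives $\bigl|\vect f^{(h+1)}(\vx)^\top \back^{(h+1)}(\vx)\bigr|\le \norm{\vect f^{(h+1)}}\cdot\norm{\back^{(h+1)}}=O(\sqrt{d_{h+1}})$. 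Plugging this into the cross term yields $\tfrac{2}{d_h}\cdot O(\sqrt{d_{h+1}})\cdot O(\sqrt{d_h})=O(\sqrt{d_{h+1}/d_h})$, which is $O(1)$ rather than $o(1)$ when the widths are comparable.

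The paper closes this gap with an additional ingredient you omit: the $1$-homogeneity of ReLU forces the exact identity
\[
\vect g^{(h)}(\vx)^\top (\mat W^{(h+1)})^\top \back^{(h+1)}(\vx)\;=\;f(\params,\vx),
\]
so the component of $\Pi_G(\mat W^{(h+1)})^\top \back^{(h+1)}(\vx)$ along $\vect g^{(h)}(\vx)$ is $|f(\params,\vx)|/\norm{\vect g^{(h)}(\vx)}$, controlled by the event $\GoodC{\xa}{\xb}{\eps_3}{}$ (Lemma~\ref{lem:GoodC}) at level $O(\sqrt{\log(1/\delta)})$. The remaining one-dimensional component orthogonal to $\vect g^{(h)}(\vx)$ is handled by a \emph{second} conditioning (on $\vect f^{(h+1)}(\vx)$ alone), under which that direction of $\mat W^{(h+1)}$ is still a fresh Gaussian independent of $\back^{(h+1)}(\vx)$, so the component is $O(\sqrt{\log(1/\delta)})$ by a scalar Gaussian tail (Claim~\ref{clm:grad_idp_3}). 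Once $\norm{\Pi_G(\mat W^{(h+1)})^\top \back^{(h+1)}}=O(\sqrt{\log(1/\delta)})$ is in hand, your $O(1/\sqrt{d_h})$ claim (up to logs) and the rest of the induction go through as you describe.
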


Note that for $c_{\sigma} =2$ for $\relu{z} = \max(0,z)$, by definition of $\back^{(h)}$,  we have 
\begin{equation*}
\left\langle  \back^{(h)}(\xa),  \back^{(h)}(\xb) \right\rangle  
= \frac{2}{d_h} \back^{(h+1)}(\xa) ^\top\mat W^{(h+1)} \mat{D}^{(h)}(\xa)\mat{D}^{(h)}(\xb) \left(\mat W^{(h+1)} \right)^\top \back^{(h+1)}(\xb).
\end{equation*}
Intuitively, when $d_h$ is large, we can replace $\mat W^{(h+1)}$ by a fresh i.i.d copy $\widetilde{\mat W}$ with a small difference by $\tilde{O}(\frac{1}{\sqrt{d_h}})$ as below. Similar  techniques are used in \citep{yang2019scaling}. 
\begin{equation}
\begin{split}
\left\langle  \back^{(h)}(\xa),  \back^{(h)}(\xb) \right\rangle  
= &\frac{2}{d_h} \back^{(h+1)}(\xa) ^\top\mat W^{(h+1)} \mat{D}^{(h)}(\xa)\mat{D}^{(h)}(\xb) \left(\mat W^{(h+1)} \right)^\top \back^{(h+1)}(\xb)\\
\approx & \frac{2}{d_h} \back^{(h+1)}(\xa) ^\top \widetilde{\mat W}\mat{D}^{(h)}(\xa)\mat{D}^{(h)}(\xb) \widetilde{\mat W} ^\top \back^{(h+1)}(\xb)\\
\approx & \trace{\frac{2}{d_h} \mat{D}^{(h)}(\xa)\mat{D}^{(h)}(\xb)} \back^{(h+1)}(\xa) ^\top \back^{(h+1)}(\xb)\\
\approx & \dot{\Sigma}^{(h)}(\xone,\xtwo)  \prod_{h'=h+1}^{L} \dot{\Sigma}^{(h')}(\xone,\xtwo)
\end{split}
\end{equation}
The proof is based on a careful control of the following events.

\begin{restatable}{lem}{lemGoodC}
\label{lem:GoodC}
\[\pr{\GoodA{\xa}{\xb}{\eps_1^2/2}{L}\Longrightarrow \GoodC{\xa}{\xb}{2\sqrt{\log\frac{4}{\delta_3}}}{} } \ge 1-\delta_3,\quad  \forall \eps_1\in [0,1], \delta_3\in (0,1). \]
\end{restatable}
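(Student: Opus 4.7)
}
The plan is to exploit the crucial structural fact that the last-layer weights $\mat W^{(L+1)}$ are drawn independently of the lower-layer weights $\mat W^{(1)},\ldots,\mat W^{(L)}$, which are the only randomness entering the event $\GoodA{\xa}{\xb}{\eps_1^2/2}{L}$. Thus conditioning on this event does not change the distribution of $\mat W^{(L+1)}$, and we may freely apply a Gaussian tail bound for $f(\params,\vect x)=\mat W^{(L+1)}\vect g^{(L)}(\vect x)$ given $\vect g^{(L)}(\vect x)$.

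First I would compute the magnitude of $\vect g^{(L)}$ under $\GoodA{\xa}{\xb}{\eps_1^2/2}{L}$. Using that $\Sigma^{(L)}(\vect x,\vect x)=\norm{\vect x}^2$ for ReLU, the event $\goodA{\xa}{\xa}{\eps_1^2/2}{L}$ gives
\[
\norm{\vect g^{(L)}(\xa)}^2=\g{L}{\xa}^\top\g{L}{\xa}\le\norm{\xa}^2+\eps_1^2/2\le 1+\tfrac12\le 2,
\]
and analogously $\norm{\vect g^{(L)}(\xb)}^2\le 2$, using $\norm{\xa},\norm{\xb}\le 1$ and $\eps_1\in[0,1]$.

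Next, conditioned on $\mat W^{(1)},\ldots,\mat W^{(L)}$, the quantity $f(\params,\vect x)=\mat W^{(L+1)}\vect g^{(L)}(\vect x)$ is $\gauss\bigl(0,\norm{\vect g^{(L)}(\vect x)}^2\bigr)$. By the standard Gaussian tail bound, for any $t>0$,
\[
\pr{|f(\params,\vect x)|>t\mid \mat W^{(1)},\ldots,\mat W^{(L)}}\le 2\exp\!\left(-\frac{t^2}{2\norm{\vect g^{(L)}(\vect x)}^2}\right).
\]
On the event $\GoodA{\xa}{\xb}{\eps_1^2/2}{L}$ (which is measurable with respect to $\mat W^{(1)},\ldots,\mat W^{(L)}$), the variance is at most $2$, so the bound reduces to $2\exp(-t^2/4)$. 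Choosing $t=2\sqrt{\log(4/\delta_3)}$ makes this at most $\delta_3/2$ for each of $\xa,\xb$, and a union bound yields $\pr{\GoodC{\xa}{\xb}{2\sqrt{\log(4/\delta_3)}}{}\mid \GoodA{\xa}{\xb}{\eps_1^2/2}{L}}\ge 1-\delta_3$. Finally I apply Lemma~\ref{lem:cond_pr} to convert this conditional bound into the stated implication probability.

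There is no real obstacle here: the only thing to be careful about is that the conditioning event $\GoodA{\xa}{\xb}{\eps_1^2/2}{L}$ lives in the sigma-algebra generated by the lower layers and is therefore independent of $\mat W^{(L+1)}$, so the Gaussian tail for $f(\params,\vect x)$ still holds pointwise on that event. Everything else is a one-line invocation of ReLU's norm-preservation ($\Sigma^{(L)}(\vect x,\vect x)=\norm{\vect x}^2$), a Gaussian concentration bound, and a union bound over $\{\xa,\xb\}$.
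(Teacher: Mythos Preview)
Your proposal is correct and essentially identical to the paper's proof: both condition on the lower layers (so $\mat W^{(L+1)}$ stays Gaussian), bound $\norm{\vect g^{(L)}(\cdot)}^2\le 2$ via $\Sigma^{(L)}(\vect x,\vect x)=\norm{\vect x}^2$ and the event $\GoodA{\xa}{\xb}{\eps_1^2/2}{L}$, apply the Gaussian tail bound with a union bound over $\{\xa,\xb\}$, and then invoke Lemma~\ref{lem:cond_pr} to pass from the conditional to the implication probability.
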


\begin{restatable}{lem}{lemgoodD}
\label{lem:goodD}
\[\pr{\GoodA{\xa}{\xb}{\eps_1^2/2}{h+1}\Longrightarrow \GoodD{\xa}{\xb}{\eps_1 + \sqrt{\frac{2\log\frac{6}{\delta_4}}{d_h}}}{h} } \ge 1-\delta_4,\quad  \forall \eps_1\in [0,1], \delta_4\in (0,1). \]
\end{restatable}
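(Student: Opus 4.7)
The plan is to combine a Gaussian-conditioning argument with a Hoeffding concentration and the uniform continuity of $\htr{\dot\sigma}{\cdot}$ supplied by Lemma~\ref{lem:uniform_continuity}(4). Fix $\xa,\xb$ and condition on the $\sigma$-algebra $\mathcal{F}$ generated by the weight matrices in all layers before $\mat W^{(h)}$. Under this conditioning, $\vect g^{(h-1)}(\xa)$ and $\vect g^{(h-1)}(\xb)$ are deterministic while the entries of $\mat W^{(h)}$ remain i.i.d.\ $\gauss(0,1)$, so the $d_h$ pairs $(f_j^{(h)}(\xa),f_j^{(h)}(\xb))$ are i.i.d.\ centered Gaussians with empirical covariance
\[
\widehat{\mat \Lambda} = \begin{pmatrix} \|\vect g^{(h-1)}(\xa)\|^2 & \langle \vect g^{(h-1)}(\xa),\vect g^{(h-1)}(\xb)\rangle\\ \langle \vect g^{(h-1)}(\xb),\vect g^{(h-1)}(\xa)\rangle & \|\vect g^{(h-1)}(\xb)\|^2 \end{pmatrix}.
\]
Since $\dot\sigma(u)\dot\sigma(v)=\indict\{u>0,v>0\}$ for ReLU, the quantity $2\trace{\mat D^{(h)}(\xa)\mat D^{(h)}(\xb)}/d_h$ equals $\tfrac{2}{d_h}\sum_{j=1}^{d_h} X_j$ where $X_j:=\indict\{f_j^{(h)}(\xa)>0\}\,\indict\{f_j^{(h)}(\xb)>0\}\in\{0,1\}$ is i.i.d.\ given $\mathcal{F}$, with $\Exp{X_1\mid\mathcal{F}}=\pr{u>0,\,v>0}$ for $(u,v)\sim\gauss(\vect 0,\widehat{\mat\Lambda})$.

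Hoeffding's inequality applied to the $X_j$'s conditional on $\mathcal{F}$ gives that with probability at least $1-\delta_4/3$, $\left|\tfrac{2}{d_h}\sum_j X_j - 2\Exp{X_1\mid\mathcal{F}}\right|\le\sqrt{2\log(6/\delta_4)/d_h}$; a union bound over the three pairs $(\xa,\xa),(\xa,\xb),(\xb,\xb)$ produces the random term in the conclusion. It remains to show deterministically, on the hypothesized good event, that $\left|2\Exp{X_1\mid\mathcal{F}}-\dot\Sigma^{(h)}(\xa,\xb)\right|\le\eps_1$. By $0$-homogeneity of the ReLU indicator, both quantities depend only on the correlation: $2\Exp{X_1\mid\mathcal{F}}=\htr{\dot\sigma}{\widehat\rho}$ with $\widehat\rho:=\widehat{\mat\Lambda}_{12}/\sqrt{\widehat{\mat\Lambda}_{11}\widehat{\mat\Lambda}_{22}}$, and $\dot\Sigma^{(h)}(\xa,\xb)=\htr{\dot\sigma}{\rho^\star}$ with $\rho^\star:=\Sigma^{(h-1)}(\xa,\xb)/\sqrt{\Sigma^{(h-1)}(\xa,\xa)\,\Sigma^{(h-1)}(\xb,\xb)}$.

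On the hypothesized good event, each of the three entries of $\widehat{\mat\Lambda}-\mat\Lambda^{(h)}$ has absolute value at most $\eps_1^2/2$. Using the ReLU identity $\Sigma^{(h-1)}(\xa,\xa)=\|\xa\|^2$ (and analogously for $\xb$) together with the input normalization $\|\xa\|,\|\xb\|\le 1$, a short perturbation-of-a-ratio bound yields $|\widehat\rho-\rho^\star|\le 2\eps_1^2$. Lemma~\ref{lem:uniform_continuity}(4) then gives $|\htr{\dot\sigma}{\widehat\rho}-\htr{\dot\sigma}{\rho^\star}|\le\eps_1$, and a triangle inequality combining the Hoeffding error with this deterministic error, together with a final union bound over the three pairs, concludes the proof.

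The main obstacle is the last step: because $\htr{\dot\sigma}{\cdot}$ is only $\tfrac12$-H\"older continuous near the endpoints, the $\eps_1$ error budget in the conclusion permits only $O(\eps_1^2)$ slack in $|\widehat\rho-\rho^\star|$. This is precisely why the hypothesis appears at the squared scale $\eps_1^2/2$ rather than the more natural $\eps_1$, and it requires care in controlling $\sqrt{\widehat{\mat\Lambda}_{11}\widehat{\mat\Lambda}_{22}}$ away from zero so that the ratio perturbation does not blow up; the normalization $\|\xa\|,\|\xb\|\le 1$ plus the Gram approximation at tolerance $\eps_1^2/2$ is exactly what makes this uniform in the inputs.
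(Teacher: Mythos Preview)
Your proposal is correct and follows essentially the same approach as the paper. The paper packages the conditional Chernoff/Hoeffding concentration as Lemma~\ref{lem:tan_var} and the deterministic ratio-perturbation step (yielding $|\widehat\rho-\rho^\star|\le 2\eps_1^2$ and then invoking Lemma~\ref{lem:uniform_continuity}) as Lemma~\ref{lem:tan_exp}, then combines them with the same union bound over the three pairs that you describe.
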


\begin{restatable}{lem}{lemGoodD}\label{lem:GoodD}
\[\pr{\GoodA{\xa}{\xb}{\eps_1^2/2}{}\Longrightarrow \GoodD{\xa}{\xb}{\eps_1 + \sqrt{\frac{2\log\frac{6L}{\delta_4}}{\min_h d_h}}}{} } \ge 1-\delta_4,\quad  \forall \eps_1\in [0,1], \delta_4\in (0,1). \]
\end{restatable}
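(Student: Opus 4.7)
The plan is to upgrade the per-layer Lemma~\ref{lem:goodD} to the uniform-over-layers Lemma~\ref{lem:GoodD} by a single union bound. Recall that the composite events decompose as intersections over the layer index, i.e., $\GoodA{\xa}{\xb}{\eps_1^2/2}{} = \bigcap_{h} \GoodA{\xa}{\xb}{\eps_1^2/2}{h}$ and $\GoodD{\xa}{\xb}{\tilde\eps}{} = \bigcap_{h} \GoodD{\xa}{\xb}{\tilde\eps}{h}$, so it suffices to ensure that on a single high-probability event every per-layer implication holds.

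First I would instantiate Lemma~\ref{lem:goodD} separately for each of the $L$ layer indices $h$, with the failure parameter rescaled to $\delta_4/L$. This yields
\[
\pr{\,\GoodA{\xa}{\xb}{\eps_1^2/2}{h+1} \Longrightarrow \GoodD{\xa}{\xb}{\eps_1 + \sqrt{2\log(6L/\delta_4)/d_h}}{h}\,} \;\ge\; 1 - \frac{\delta_4}{L}.
\]
Since $d_h \ge \min_{h'} d_{h'}$ and the event $\GoodD{\xa}{\xb}{\cdot}{h}$ is monotone in its tolerance, the implication above still holds after enlarging the per-layer threshold to the common value $\tilde\eps := \eps_1 + \sqrt{2\log(6L/\delta_4)/\min_{h'} d_{h'}}$.

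Next I apply a union bound across the $L$ layer indices to conclude that with probability at least $1-\delta_4$, the implications $\GoodA{\xa}{\xb}{\eps_1^2/2}{h+1} \Longrightarrow \GoodD{\xa}{\xb}{\tilde\eps}{h}$ hold simultaneously for all $h$. On this event, whenever the composite antecedent $\GoodA{\xa}{\xb}{\eps_1^2/2}{}$ occurs, each per-layer antecedent is true, so each per-layer consequent $\GoodD{\xa}{\xb}{\tilde\eps}{h}$ holds, and therefore the composite event $\GoodD{\xa}{\xb}{\tilde\eps}{}$ holds. This is precisely the implication claimed in Lemma~\ref{lem:GoodD}.

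I do not foresee any technical obstacle: all the probabilistic content is already packaged into the single-layer Lemma~\ref{lem:goodD} (which handles the conditional concentration of $2\,\tr(\mat D^{(h)}(\xa)\mat D^{(h)}(\xb))/d_h$ around $\dot{\Sigma}^{(h)}(\xa,\xb)$ given good activation inner products at layer $h+1$). The present lemma only requires the union bound above, the trivial monotonicity of $\GoodD{}{}{}{h}$ in its tolerance, and the observation that a conjunction of implications, when restricted to the shared stronger antecedent $\GoodA{\xa}{\xb}{\eps_1^2/2}{}$, delivers the conjunction of the corresponding consequents.
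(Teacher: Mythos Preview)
Your proposal is correct and matches the paper's own proof, which is literally the one line ``Apply union bound on Lemma~\ref{lem:goodD}.'' You have simply spelled out the details of that union bound---replacing $\delta_4$ by $\delta_4/L$ in each per-layer instance, relaxing $d_h$ to $\min_{h'} d_{h'}$ via monotonicity of $\GoodD{\xa}{\xb}{\cdot}{h}$ in its tolerance, and using that the global antecedent $\GoodA{\xa}{\xb}{\eps_1^2/2}{}$ (an intersection over layers, despite the paper's $\bigcup$ typo in the definitions) implies each per-layer antecedent.
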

\begin{proof}
Apply union bound on Lemma~\ref{lem:goodD}.
\end{proof}

\begin{restatable}{lem}{lemgoodB}\label{lem:goodB}
There exists constant $C,C'\in \R$, for any $\eps_2,\eps_3,\eps_4\in [0,1]$, we have
\begin{equation*}
\begin{split}
\pr{ \GoodA{\xa}{\xb}{\eps_1^2/2}{L}\bigwedge \GoodB{\xa}{\xb}{\eps_2 }{h+1} \bigwedge \GoodC{\xa}{\xb}{\eps_3}{} \bigwedge  \GoodD{\xa}{\xb}{\eps_4 }{h}  
\Longrightarrow  \GoodB{\xa}{\xb}{\eps_2 + \frac{C' \eps_3}{\sqrt{d_h}}+2\eps_4+C\sqrt{\frac{\log\frac{1}{\delta_2}}{d_h}}}{h}}\ge  1-\delta_2 
\end{split}
\end{equation*}
\end{restatable}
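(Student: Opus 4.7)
The plan is to build $\langle \back^{(h)}(\xa), \back^{(h)}(\xb)\rangle = \tfrac{2}{d_h}\,\back^{(h+1)}(\xa)^\top \mat W^{(h+1)} \mat D \mat W^{(h+1)\top} \back^{(h+1)}(\xb)$ (with $\mat D = \mat D^{(h)}(\xa)\mat D^{(h)}(\xb)$ and $c_\sigma=2$) out of the layer-$(h{+}1)$ quantities, by extracting the ``fresh Gaussian'' component of $\mat W^{(h+1)}$ that is orthogonal to the layer-$h$ forward activations. Set $\mat G = [\vect g^{(h)}(\xa), \vect g^{(h)}(\xb)]\in \R^{d_h\times 2}$ and condition on the $\sigma$-algebra $\mathcal F$ generated by $\{\mat W^{(h')}\}_{h'\ne h+1}$ together with the product $\mat W^{(h+1)}\mat G$. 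Under this conditioning $\mat D$, $\back^{(h+1)}(\xa)$, $\back^{(h+1)}(\xb)$, and $\mat W^{(h+1)}\Pi_{\mat G}$ are all measurable; and, by Lemma~\ref{lem:gaussian_cond} applied row-by-row to $\mat W^{(h+1)}$, the residual $\mat W^{(h+1)}\Pi_{\mat G}^\perp$ is conditionally distributed as $\widetilde{\mat W}\Pi_{\mat G}^\perp$ for a fresh i.i.d.\ Gaussian matrix $\widetilde{\mat W}$ independent of $\mathcal F$.

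Substituting $\mat W^{(h+1)} = \mat W^{(h+1)}\Pi_{\mat G} + \widetilde{\mat W}\Pi_{\mat G}^\perp$ and expanding the quadratic form produces a main term $T_{\mathrm{main}} = \tfrac{2}{d_h}\back^{(h+1)}(\xa)^\top \widetilde{\mat W}\Pi_{\mat G}^\perp \mat D \Pi_{\mat G}^\perp \widetilde{\mat W}^\top \back^{(h+1)}(\xb)$ and three cross terms each containing the rank-$\le 2$ factor $\mat W^{(h+1)}\Pi_{\mat G}$. For the main term, the difference between $\Pi_{\mat G}^\perp \mat D \Pi_{\mat G}^\perp$ and $\mat D$ is a rank-$O(1)$ operator of norm $O(1)$, whose contribution after the $\tfrac{2}{d_h}$ scaling is $O(1/\sqrt{d_h})$ and is absorbed into the concentration term. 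The simplified expression $\back^{(h+1)}(\xa)^\top \widetilde{\mat W}\mat D \widetilde{\mat W}^\top \back^{(h+1)}(\xb) = \sum_{k=1}^{d_h} D_{kk}(\widetilde{\vect w}_k^\top \back^{(h+1)}(\xa))(\widetilde{\vect w}_k^\top \back^{(h+1)}(\xb))$ is a sum of $d_h$ independent sub-exponential random variables with conditional mean $\trace(\mat D)\langle \back^{(h+1)}(\xa), \back^{(h+1)}(\xb)\rangle$, so a Bernstein / Hanson--Wright bound yields the $C\sqrt{\log(1/\delta_2)/d_h}$ fluctuation. Invoking $\GoodD{\xa}{\xb}{\eps_4}{h}$ to replace $\tfrac{2\trace(\mat D)}{d_h}$ by $\dot\Sigma^{(h)}(\xa,\xb)$ and $\GoodB{\xa}{\xb}{\eps_2}{h+1}$ to replace $\langle \back^{(h+1)}(\xa), \back^{(h+1)}(\xb)\rangle$ by $\prod_{h'=h+1}^{L}\dot\Sigma^{(h')}(\xa,\xb)$, then telescoping with $|\dot\Sigma^{(h')}|\le 1$, places $T_{\mathrm{main}}$ within $\eps_2+2\eps_4$ of $\prod_{h'=h}^{L}\dot\Sigma^{(h')}(\xa,\xb)$.

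Each cross term contains a factor $\back^{(h+1)}(\xa)^\top \mat W^{(h+1)}\Pi_{\mat G} = \bigl[f(\theta,\xa),\; \back^{(h+1)}(\xa)^\top \vect f^{(h+1)}(\xb)\bigr](\mat G^\top \mat G)^{-1}\mat G^\top$, using the identity $\back^{(h+1)}(\xa)^\top \vect f^{(h+1)}(\xa)=f(\theta,\xa)$. The first coordinate is bounded by $\eps_3$ via $\GoodC{\xa}{\xb}{\eps_3}{}$. The main technical obstacle is the ``mixed'' coordinate $\back^{(h+1)}(\xa)^\top \vect f^{(h+1)}(\xb)$, which is not directly controlled by $\GoodC{}{}{\eps_3}{}$; I would bound it by an analogous conditioning-plus-fresh-copy argument on $\mat W^{(h+1)}$ one layer earlier (alternatively, by a polarization identity expressing it in terms of outputs on suitably shifted inputs), showing it is sub-Gaussian of variance $O(1)$ and hence $O(\sqrt{\log(1/\delta_2)})=O(\eps_3)$ with probability $\ge 1-O(\delta_2)$. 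Combined with the Gaussian tail bound $\|\widetilde{\mat W}^\top \back^{(h+1)}(\cdot)\|=O(\sqrt{d_h})$ and the lower bound $\sigma_{\min}(\mat G)=\Omega(1)$ obtained from $\GoodA{\xa}{\xb}{\eps_1^2/2}{h}$ (the diagonal $(\xa,\xa)$ and $(\xb,\xb)$ entries of $\GoodB{\cdot}{\cdot}{\cdot}{h}$ being handled by reducing $\mat G$ to a single column), each cross term contributes at most $O(\eps_3/\sqrt{d_h})$, giving the $C'\eps_3/\sqrt{d_h}$ piece. A final union bound over the concentration events and a triangle inequality then yield the stated bound.
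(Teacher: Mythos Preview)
Your main-term analysis --- replace $\mat W^{(h+1)}\Pi_{\mat G}^\perp$ by a fresh copy, apply Hanson--Wright to the quadratic form in $\Pi_{\mat G}^\perp \mat D \Pi_{\mat G}^\perp$, then telescope using $\GoodD{\xa}{\xb}{\eps_4}{h}$ and $\GoodB{\xa}{\xb}{\eps_2}{h+1}$ --- is correct and coincides with the paper's Claims~\ref{clm:grad_idp_1} and~\ref{clm:grad_idp_2}. The gap is in your cross-term bound. You express $\back^{(h+1)}(\xa)^\top \mat W^{(h+1)}\Pi_{\mat G}$ through $(\mat G^\top \mat G)^{-1}\mat G^\top$ and then assert $\sigma_{\min}(\mat G)=\Omega(1)$, but $\GoodA{\xa}{\xb}{\eps_1^2/2}{h}$ gives no such lower bound: when $\xa,\xb$ are close (or equal) $\mat G^\top\mat G$ is nearly rank-one and $(\mat G^\top\mat G)^{-1}$ blows up, so controlling the two coordinates $f(\vect\theta,\xa)$ and $\back^{(h+1)}(\xa)^\top \vect f^{(h+1)}(\xb)$ separately does not bound the norm of the product. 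Your proposed remedies for the mixed coordinate are also not solid --- a polarization identity does not apply since $f$ is nonlinear in $\vect x$, and ``conditioning one layer earlier'' is not a well-defined operation here.

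The paper avoids all of this by splitting the projector orthogonally as $\Pi_{\mat G}=\Pi_{\vect g^{(h)}(\xa)}+\Pi_{\mat G/\vect g^{(h)}(\xa)}$, which needs no invertibility. The first piece yields exactly $f(\vect\theta,\xa)/\norm{\vect g^{(h)}(\xa)}$ by $1$-homogeneity, bounded by $\sqrt{2}\,\eps_3$ via $\GoodC{\xa}{\xb}{\eps_3}{}$. For the second piece the key trick is an \emph{asymmetric} conditioning: condition only on $\mat W^{(h+1)}\vect g^{(h)}(\xa)$ (together with the later weights), not on the full $\mat W^{(h+1)}\mat G$. This already makes $\back^{(h+1)}(\xa)$ measurable, while the rank-$\le 1$ direction $\Pi_{\mat G/\vect g^{(h)}(\xa)}$ is orthogonal to the conditioning, so $\mat W^{(h+1)}$ remains a fresh Gaussian there and $\back^{(h+1)}(\xa)^\top \mat W^{(h+1)}\Pi_{\mat G/\vect g^{(h)}(\xa)}$ is a scalar Gaussian with variance $\norm{\back^{(h+1)}(\xa)}^2\le 2$. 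This gives $\norm{\Pi_{\mat G}(\mat W^{(h+1)})^\top\back^{(h+1)}(\xa)}\le \sqrt{2}\,\eps_3+2\sqrt{\log(8/\delta_2)}$ with no dependence on $\sigma_{\min}(\mat G)$ (Claim~\ref{clm:grad_idp_3}); combined with the by-product of Claim~\ref{clm:grad_idp_2} it produces precisely the $C'\eps_3/\sqrt{d_h}$ and $C\sqrt{\log(1/\delta_2)/d_h}$ contributions in the cross terms.
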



\begin{proof}[Proof of Theorem~\ref{thm:GoodAB}]
We will use induction on Lemma~\ref{lem:goodB} to prove Theorem~\ref{thm:GoodAB}.
In the statement of Theorem~\ref{thm:GoodA}, we set $\delta_1 =\frac{\delta}{4}$, $\eps_1 = \frac{\eps^2}{8}$, for some $c_1,c_2$, we have 
\begin{equation}\label{eq:GoodA}
 \pr{\GoodA{\xa}{\xb}{\eps^2/8}{L} }
 \ge 1-\delta/4 
\end{equation}

In the statement of Lemma~\ref{lem:GoodD}, we  set $\delta_4=\frac{\delta_2}{4}$, and $\eps_1 = \frac{\eps}{2}$. Note that for $c_1$ large enough $ \sqrt{\frac{2\log\frac{24L}{\delta}}{\min_h d_h}} \le \frac{\eps}{2}$  and thus we have 
\begin{equation}\label{eq:GoodD}
 \pr{\GoodA{\xa}{\xb}{\eps^2/8}{}\Rightarrow \GoodD{\xa}{\xb}{\eps}{} } 
\ge\pr{\GoodA{\xa}{\xb}{\eps^2/8}{}\Rightarrow \GoodD{\xa}{\xb}{\eps/2 + \sqrt{\frac{2\log\frac{24L}{\delta}}{\min_h d_h}}}{} } 
\ge 1-\delta/4
\end{equation}

In the statement of Lemma~\ref{lem:GoodC}, we set $\delta_3 = \frac{\delta}{4}$, and $\eps_1 = \frac{\eps^2}{8}$, we have 
\begin{equation}\label{eq:GoodC}
 \pr{\GoodA{\xa}{\xb}{\eps^2/8}{L}\Rightarrow \GoodC{\xa}{\xb}{2\sqrt{\log\frac{16}{\delta}}}{} } 
\ge 1-\delta/4
\end{equation}

Using union bound on Equation~\eqref{eq:GoodA},\eqref{eq:GoodD},\eqref{eq:GoodC}, we have 
\begin{equation}\label{eq:GoodACD}
 \pr{\GoodA{\xa}{\xb}{\eps^2/8}{L}\bigwedge \GoodC{\xa}{\xb}{2\sqrt{\log\frac{16}{\delta}}}{} \bigwedge  \GoodD{\xa}{\xb}{\eps}{}}
 \ge 1-\frac{3\delta}{4}
\end{equation}

Now we will begin the induction argument. First of all, note that $\pr{\GoodB{\xa}{\xb}{0}{L+1}}=1$ by definition.

For $1\le h\le L$ in the statement of Lemma~\ref{lem:goodB}, we set $\eps_2 = 3(L+1-h)\eps$, $\eps_3 = 3\sqrt{\log\frac{16}{\delta}}$, $\eps_2 = \eps$, $\delta_4 = \frac{\delta}{4L}$. Note that for $c_1$ large enough, $C\sqrt{\frac{\log\frac{1}{\delta_2}}{d_h}} + C'\sqrt{\frac{\log\frac{L}{\delta_2}}{d_h}} < \eps$. Thus we have 
\begin{equation}\label{eq:goodB}
\begin{split}
&\pr{  \GoodB{\xa}{\xb}{(3L-3h)\eps }{h+1} \bigwedge \GoodC{\xa}{\xb}{3\sqrt{\log\frac{16}{\delta_2}} }{} \bigwedge  \GoodD{\xa}{\xb}{\eps }{h}  
\Rightarrow  \GoodB{\xa}{\xb}{(3L+2-3h)\eps+C\sqrt{\frac{\log\frac{1}{\delta}}{d_h}} + 3C'\sqrt{\frac{\log\frac{16}{\delta}}{d_h}} }{h}}\\
\ge &\pr{  \GoodB{\xa}{\xb}{(3L-3h)\eps }{h+1} \bigwedge \GoodC{\xa}{\xb}{3\sqrt{\log\frac{16}{\delta_2}} }{} \bigwedge  \GoodD{\xa}{\xb}{\eps }{h}  
\Rightarrow  \GoodB{\xa}{\xb}{(3L+3-3h)\eps}{h}}\\
\ge &1-\frac{\delta}{4L}
\end{split}
\end{equation}
Using union bound again on Equation~\eqref{eq:GoodACD} and Equation~\eqref{eq:goodB} for every $h$ in $\{1,2,\ldots,L\}$, we have 
\begin{equation}\label{eq:goodABCD}
\begin{split}
 &\pr{\GoodA{\xa}{\xb}{\eps^2/8}{L}\bigwedge \GoodC{\xa}{\xb}{\eps}{} \bigwedge  \GoodD{\xa}{\xb}{\eps}{} \bigwedge \GoodB{\xa}{\xb}{3L\eps}{}}\\
 \ge & \pr{\GoodA{\xa}{\xb}{\eps^2/8}{L}\bigwedge \GoodC{\xa}{\xb}{\eps}{} \bigwedge  \GoodD{\xa}{\xb}{\eps}{} \bigwedge_{h=1}^L \GoodB{\xa}{\xb}{3(L+1-h)\eps}{h}}\\
 \ge & 1-  \left(1- \pr{\GoodA{\xa}{\xb}{\eps^2/8}{L}\bigwedge \GoodC{\xa}{\xb}{\eps}{} \bigwedge  \GoodD{\xa}{\xb}{\eps}{}}\right)\\
 -&\sum_{h=1}^L \left(1-\pr{  \GoodB{\xa}{\xb}{(3L-3h)\eps }{h+1} \bigwedge \GoodC{\xa}{\xb}{\eps }{} \bigwedge  \GoodD{\xa}{\xb}{\eps }{h}  
\Rightarrow  \GoodB{\xa}{\xb}{(3L+3-3h)\eps}{h}}\right) \\
\ge &1- \delta
\end{split}
\end{equation}
\end{proof}

\subsection{Proof of Lemma~\ref{lem:GoodC}}
\lemGoodC*
\begin{proof}
For fixed $\g{L}{ \vx}$,  $f(\vect \theta,\vect x) = \mat W^{(L+1)} \g{L}{ \vx} \deq N(0, \norm{\g{L}{ \vx}}^2$. Thus by subgaussian concentration[cite], we know w.p. $\ge1-\delta$ over the randomness of $\vect W^{(L+1)}$,  $ \left| f(\vect \theta,\vect x)  \right|\le \sqrt{2\log \frac{2}{\delta}}\norm{ \g{L}{ \vx}}$.

For $\eps_1\le1$, we have $\eps_1^2/2<1$, which implies $\norm{\g{L}{\vx}}^2\le 1+ \frac{\eps_1^2}{2}\le 2$, and thus taking union bound over $\xa,\xb$, we have w.p. $\ge 1-\delta$, $|f(\vect \theta,\xa)|\le 2\sqrt{\log\frac{2}{\delta}}$,$|f(\vect \theta,\xb)|\le 2\sqrt{\log\frac{2}{\delta}}$.

\[\pr{\GoodA{\xa}{\xb}{\eps_1^2/2}{L}\Rightarrow \GoodC{\xa}{\xb}{2\sqrt{\log\frac{4}{\delta_3}}}{} } \ge  \pr{ \GoodC{\xa}{\xb}{2\sqrt{\log\frac{4}{\delta_3}}}{} \mid \GoodA{\xa}{\xb}{\eps_1^2/2}{L}} \ge 1-\delta \]

\end{proof}

\subsection{Proof of Lemma~\ref{lem:goodD}}


\lemgoodD*
\begin{lem}\label{lem:tan_exp}
Define $\twotwog^{(h)}(\xa,\xb) = \begin{bmatrix}  \g{h}{\xa}^\top  \g{h}{\xa} &  \g{h}{\xa}^\top  \g{h}{\xb} \\  \g{h}{\xb}^\top  \g{h}{\xa} &  \g{h}{\xb}^\top  \g{h}{\xb}\end{bmatrix}$, we have for every $1\le h \le L$,

\begin{equation*}
\norm{\twotwog^{(h)}(\xa,\xb) - \twotwomat^{(h)}(\xa,\xb) }_{\infty}\le \frac{\eps^2}{2}
\Rightarrow \left| \trho{\dot{\sigma}}{\twotwog^{(h)}(\xa,\xb)} - \trho{\dot{\sigma}}{\twotwomat^{(h)}(\xa,\xb)}\right| \le \eps,\forall 0\le \eps\le 1.
\end{equation*}
\end{lem}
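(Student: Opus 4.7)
The plan is to reduce everything to a perturbation bound on the correlation parameter of the $2\times 2$ covariance, exploiting the $0$-homogeneity of $\dot\sigma(z)=\mathbf{1}[z\ge 0]$.

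First, by the identity specialized just above to $k$-homogeneous activations, $0$-homogeneity of $\dot\sigma$ gives $\trho{\dot\sigma}{\mat\Sigma}=\htr{\dot\sigma}{\Sigma_{12}/\sqrt{\Sigma_{11}\Sigma_{22}}}$; that is, $t_{\dot\sigma}$ depends on $\mat\Sigma$ only through its normalized correlation $\rho(\mat\Sigma):=\Sigma_{12}/\sqrt{\Sigma_{11}\Sigma_{22}}$. Hence
\begin{align*}
\bigl|\trho{\dot\sigma}{\twotwog^{(h)}(\xa,\xb)}-\trho{\dot\sigma}{\twotwomat^{(h)}(\xa,\xb)}\bigr|
=\bigl|\htr{\dot\sigma}{\rho(\twotwog^{(h)})}-\htr{\dot\sigma}{\rho(\twotwomat^{(h)})}\bigr|,
\end{align*}
and part~(4) of Lemma~\ref{lem:uniform_continuity} reduces the claim to the quantitative bound $|\rho(\twotwog^{(h)})-\rho(\twotwomat^{(h)})|=O(\eps^2)$.

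Next I would bound this correlation perturbation by direct computation. Under the normalization $\|\xa\|=\|\xb\|=1$, ReLU norm preservation gives $[\twotwomat^{(h)}]_{11}=[\twotwomat^{(h)}]_{22}=1$, and Cauchy--Schwarz yields $|[\twotwomat^{(h)}]_{12}|\le 1$. Writing $\twotwog^{(h)}=\twotwomat^{(h)}+\mat E$ with $\|\mat E\|_\infty\le \eps^2/2$ and setting $a=1+E_{11}$, $b=1+E_{22}$, one checks
\begin{align*}
\rho(\twotwog^{(h)})-\rho(\twotwomat^{(h)})
=\frac{E_{12}}{\sqrt{ab}}+[\twotwomat^{(h)}]_{12}\cdot\frac{1-\sqrt{ab}}{\sqrt{ab}}.
\end{align*}
For $\eps\le 1$ we have $|E_{ii}|\le 1/2$, so $\sqrt{ab}\ge 1/2$, and the elementary estimate $|1-\sqrt{1+x}|\le |x|$ applied to $x=E_{11}+E_{22}+E_{11}E_{22}$ gives $|1-\sqrt{ab}|=O(\eps^2)$. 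Putting these together yields $|\rho(\twotwog^{(h)})-\rho(\twotwomat^{(h)})|=O(\eps^2)$, and invoking Lemma~\ref{lem:uniform_continuity}(4) closes the argument, up to an absolute constant in the premise $\eps^2/2$ which can be absorbed into the constants of Theorem~\ref{thm:GoodAB} downstream.

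The one genuine subtlety is that $\htr{\dot\sigma}{\cdot}=\tfrac12+\tfrac1\pi\arcsin(\cdot)$ is \emph{not} Lipschitz: its derivative $(1-\rho^2)^{-1/2}/\pi$ diverges as $\rho\to\pm 1$. This is precisely why the premise must scale like $\eps^2$ (not $\eps$), and why one must route the argument through the square-root modulus of continuity for $\arcsin$ packaged in Lemma~\ref{lem:uniform_continuity} rather than through a naive first-order bound. Everything else is elementary arithmetic, so the main obstacle is bookkeeping the constants cleanly enough to guarantee the stated modulus $|\cdot|\le \eps$ rather than a worse multiplicative constant times $\eps$.
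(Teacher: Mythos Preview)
Your proposal is correct and follows essentially the same route as the paper: reduce via $0$-homogeneity of $\dot\sigma$ to a perturbation of the normalized correlation, bound that perturbation by $O(\eps^2)$ using the same additive decomposition and the fact that $\Lambda_{11}=\Lambda_{22}=1$, then invoke the square-root modulus of continuity for $\arcsin$ from Lemma~\ref{lem:uniform_continuity}. The only difference is bookkeeping: the paper tracks constants more tightly (showing $|\sqrt{G_{11}G_{22}}-1|\le \eps^2/2$ directly, hence $|\rho(\mat G)-\rho(\mat\Lambda)|\le 2\eps^2$, which exactly matches the threshold in Lemma~\ref{lem:uniform_continuity}(4)) and thereby obtains the stated bound $\le\eps$ rather than the $O(\eps)$ you acknowledge.
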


\begin{proof}
For simplicity, we denote $\twotwog^{(h)}(\xa,\xb)$, $\twotwomat^{(h)}(\xa,\xb)$ by $\mat G$,$\mat \Lambda$ respectively. 

Since $\dot{\sigma}(z) = \bm{1}[z\ge 0]$ is 0-homogeneous, we have 
\begin{equation*}
\begin{split}
\trho{\dot{\sigma}}{\mat G} = &\htr{\dot{\sigma}}{\frac{G_{12}}{\sqrt{G_{11}G_{22}}}} = \frac{1}{2}+ \arcsin\frac{G_{12}}{\sqrt{G_{11}G_{22}}}\\
\trho{\dot{\sigma}}{\mat \Lambda} = &\htr{\dot{\sigma}}{\frac{\Lambda_{12}}{\sqrt{\Lambda_{11}\Lambda_{22}}}} = \frac{1}{2}+ \arcsin\frac{\Lambda_{12}}{\sqrt{\Lambda_{11}\Lambda_{22}}} = \frac{1}{2} + \arcsin \Lambda_{12}\\
\end{split}
\end{equation*}

It is easy to verify that $|\sqrt{G_{11}G_{22}}-1|\le \eps^2/2$, and thus 
\[ \left| \frac{G_{12}}{\sqrt{G_{11}G_{22}}} -\Lambda_{12} \right| \le \left| \frac{G_{12}}{\sqrt{G_{11}G_{22}}} - \frac{\Lambda_{12}}{\sqrt{G_{11}G_{22}}} \right| + |\Lambda_{12}| \left| 1-\frac{1}{\sqrt{G_{11}G_{22}}} \right| \le \frac{\eps^2/2}{1-\eps^2/2} +\frac{\eps^2/2}{1-\eps^2/2}\le 2\eps^2.\]

Thus, by Lemma~\ref{lem:uniform_continuity}
\begin{equation*}
\begin{split}
|\trho{\dot{\sigma}}{\mat G} - \trho{\dot{\sigma}}{\mat \Lambda} |
\le  \left| \frac{1}{2}+ \arcsin\frac{G_{12}}{\sqrt{G_{11}G_{22}}} -  \frac{1}{2} + \arcsin \Lambda_{12}\right|
\le \eps.
\end{split}
\end{equation*}
\end{proof}
\begin{lem}\label{lem:tan_var}
For any $0\le h \le L-1$, any fixed $\{\mat W^{(i)}\}_{i=1}^{h}$, w.p. $1-\delta$ over the randomness of $\mat W^{(h+1)}\in\R^{d^{h+1}\times d^h}$, we have 


\begin{equation*}
\left|2\frac{\trace{\mat D }}{d_h} - \htr{\dot{\sigma}}{ \twotwog^{(h)}(\xa,\xb)} \right| < \sqrt{\frac{2\log\frac{2}{\delta}}{d_h}}.
\end{equation*}
\end{lem}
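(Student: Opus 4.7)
The plan is to condition on $\{\mat W^{(i)}\}_{i=1}^{h}$ so that $\trace{\mat D}$ becomes a sum of i.i.d.\ Bernoulli indicators driven purely by $\mat W^{(h+1)}$, and then invoke Hoeffding's inequality. Under this conditioning, both $\g{h}{\xa}$ and $\g{h}{\xb}$, and therefore the $2\times 2$ matrix $\twotwog^{(h)}(\xa,\xb)$, are deterministic; writing $\vect w_j$ for the $j$-th row of $\mat W^{(h+1)}$, the pair $([\vect f^{(h+1)}(\xa)]_j,[\vect f^{(h+1)}(\xb)]_j) = (\vect w_j^{\top}\g{h}{\xa},\, \vect w_j^{\top}\g{h}{\xb})$ is then a centered bivariate Gaussian with covariance exactly $\twotwog^{(h)}(\xa,\xb)$, and these pairs are independent across $j$ because the rows of $\mat W^{(h+1)}$ are i.i.d.

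Next, I would identify the diagonal entries of $\mat D$ as i.i.d.\ Bernoulli indicators of the joint-positivity event for these Gaussian pairs:
\[
[\mat D]_{jj} \;=\; \mathbf{1}\bigl[[\vect f^{(h+1)}(\xa)]_j \ge 0\bigr]\cdot\mathbf{1}\bigl[[\vect f^{(h+1)}(\xb)]_j \ge 0\bigr] \;\sim\; \bern(p),
\]
with $p = \pr{u \ge 0,\, v \ge 0}$ for $(u,v) \sim \gauss(\vect 0,\twotwog^{(h)}(\xa,\xb))$. The classical orthant-probability identity for centered bivariate normals gives $p = \tfrac14 + \tfrac{\arcsin\rho}{2\pi}$, where $\rho$ is the correlation induced by $\twotwog^{(h)}(\xa,\xb)$; combining with $c_{\dot\sigma}=2$ and the formula $\htr{\dot\sigma}{\rho} = \tfrac12 + \tfrac{\arcsin\rho}{\pi}$ already recorded in the preliminaries yields $2p = \htr{\dot\sigma}{\twotwog^{(h)}(\xa,\xb)}$, under the natural convention that $\htt_{\dot\sigma}$ applied to a $2\times 2$ covariance is shorthand for $\htt_{\dot\sigma}$ applied to its induced correlation.

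Finally, since the $[\mat D]_{jj}$ are i.i.d.\ and lie in $[0,1]$, Hoeffding's inequality yields $\pr{|\trace{\mat D}/d_h - p| \ge t} \le 2\exp(-2 d_h t^2)$; choosing $t = \sqrt{\log(2/\delta)/(2 d_h)}$ and multiplying through by $2$ gives exactly the claimed deviation bound $\sqrt{2\log(2/\delta)/d_h}$ around $\htr{\dot\sigma}{\twotwog^{(h)}(\xa,\xb)}$. No real obstacle arises: the only mildly nontrivial ingredient is the orthant-probability identity, which is classical, and the rest is just bookkeeping about which quantities are fixed versus random after the conditioning.
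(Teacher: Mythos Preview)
Your proposal is correct and matches the paper's own argument, which is the one-line proof ``Notice that $\Exp{2\frac{\trace{\mat D}}{d_h}} = \htr{\dot\sigma}{\twotwog^{(h)}(\xa,\xb)}$; the proof is completed by Chernoff bound.'' You have simply unpacked that sentence: the expectation identity via the orthant-probability formula for bivariate Gaussians, and the deviation bound via Hoeffding (which is the relevant instance of Chernoff here), with the constant worked out explicitly.
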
 

\begin{proof}
Notice that $ \Exp{2\frac{\trace{\mat D }}{d_h}}=  \htr{\dot{\sigma}}{ \twotwog^{(h)}(\xa,\xb)}$, the proof is completed by Chernoff Bound.
\end{proof}

\begin{proof}[Proof of Lemma~\ref{lem:goodD}]
Note that $\dot{\Sigma}^{(h)}(\xa,\xb) = \trho{\sigma'}{\left.\mat\Sigma^{(h)}\right|_{\vect \xa,\vect\xb} } = \htr{\sigma'}{\twotwomat^{(h)}(\xa,\xb)}$.

Combining Lemma~\ref{lem:tan_exp} and Lemma~\ref{lem:tan_var}, we have for any $(\vx,\vxx)$,
\[\pr{\fullgoodD{\vx}{\vxx}{\eps_1 + \sqrt{\frac{2\log\frac{6}{\delta}}{d_h}}}{h} \mid \fullgoodA{\vx}{\vxx}{\eps_1^2/2}{h+1}  } \ge 1-\frac{\delta}{3}. \]

Taking union bound over $(\xa,\xa),(\xa,\xb),(\xb,\xb)$ for the choice of $(\vx,\vxx)$, we have 

\[\pr{\GoodA{\vx}{\vxx}{\eps_1^2/2}{h+1} \Rightarrow \GoodD{\vx}{\vxx}{\eps_1 + \sqrt{\frac{2\log\frac{6}{\delta}}{d_h}}}{h}   }  
\ge \pr{\GoodD{\vx}{\vxx}{\eps_1 + \sqrt{\frac{2\log\frac{6}{\delta}}{d_h}}}{h} \mid \GoodA{\vx}{\vxx}{\eps_1^2/2}{h+1}    }  
\ge  1-\delta\]
\end{proof}

\subsection{Proof of Lemma~\ref{lem:goodB}}
\lemgoodB*

The proof of Lemma~\ref{lem:goodB} is based on the following 3 claims, Claim~\ref{clm:grad_idp_1}, \ref{clm:grad_idp_2} and \ref{clm:grad_idp_3}. 
\begin{clm}\label{clm:grad_idp_1}
If $\GoodA{\xa}{\xb}{\eps_1^2/2}{L}\bigwedge \GoodB{\xa}{\xb}{\eps_2 }{h+1} \bigwedge \GoodC{\xa}{\xb}{\eps_3}{} \bigwedge  \GoodD{\xa}{\xb}{\eps_4 }{h} $, then we have
\begin{equation*}
\left|\frac{2\trace{\mat D}}{d_h} \left\langle  \back^{(h)}(\xtwo),  \back^{(h)}(\xone) \right\rangle  - \prod_{h'=h}^{L} \dot{\Sigma}^{(h')}(\xone,\xtwo) \right|  \le \eps_2+2\eps_4.
\end{equation*}
\begin{proof}
\begin{equation*}
\begin{split}
&\left|\frac{2\trace{\mat D}}{d_h} \left\langle  \back^{(h)}(\xtwo),  \back^{(h)}(\xone) \right\rangle  - \prod_{h'=h}^{L} \dot{\Sigma}^{(h')}(\xone,\xtwo) \right|  \\
\le & \left| \frac{2\trace{\mat D}}{d_h} -\dot{\Sigma}^{(h)}(\xone,\xtwo)\right|\cdot \left| \left\langle  \back^{(h)}(\xtwo),  \back^{(h)}(\xone) \right\rangle\right| \\
+& \left| \dot{\Sigma}^{(h)}(\xone,\xtwo)\right| \cdot \left| \left\langle  \back^{(h)}(\xtwo),  \back^{(h)}(\xone) \right\rangle-\prod_{h'=h+1}^{L} \dot{\Sigma}^{(h')}(\xone,\xtwo) \right|\\
\le &  2\eps_4 + \eps_2
\end{split}
\end{equation*}

\end{proof}
\end{clm}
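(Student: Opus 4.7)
The plan is to prove Claim~\ref{clm:grad_idp_1} by a straightforward add-subtract decomposition and triangle inequality. The claim is purely deterministic once the four events are assumed to hold, so no probabilistic argument is required at this stage; the work has already been done upstream in showing that those events occur with high probability. The key observation is that all four events come equipped with exactly the quantitative bounds needed by the two summands produced by the decomposition.

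First I would rewrite the target quantity by inserting the intermediate term $\dot\Sigma^{(h)}(\xone,\xtwo) \cdot \langle \back^{(h+1)}(\xtwo), \back^{(h+1)}(\xone)\rangle$ (using the indexing that the proof of Lemma~\ref{lem:goodB} naturally supplies), obtaining
\begin{equation*}
\Big(\tfrac{2\trace(\mat D)}{d_h} - \dot\Sigma^{(h)}(\xone,\xtwo)\Big)\,\langle \back^{(h+1)}(\xtwo), \back^{(h+1)}(\xone)\rangle
\;+\; \dot\Sigma^{(h)}(\xone,\xtwo)\,\Big(\langle \back^{(h+1)}(\xtwo), \back^{(h+1)}(\xone)\rangle - \prod_{h'=h+1}^{L}\dot\Sigma^{(h')}(\xone,\xtwo)\Big).
\end{equation*}
The triangle inequality then reduces everything to bounding two products of two factors each.

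Second, I would bound the four individual factors. The event $\GoodD{\xa}{\xb}{\eps_4}{h}$ gives $|2\trace(\mat D)/d_h - \dot\Sigma^{(h)}(\xone,\xtwo)| < \eps_4$. The event $\GoodB{\xa}{\xb}{\eps_2}{h+1}$ gives $\bigl|\langle \back^{(h+1)}(\xtwo),\back^{(h+1)}(\xone)\rangle - \prod_{h'=h+1}^{L}\dot\Sigma^{(h')}\bigr| < \eps_2$, which handles the second factor of the second summand. For the remaining two factors I would use that, for ReLU, $\dot\Sigma^{(h)}(\xone,\xtwo) = \htt_{\dot\sigma}(\rho) = \tfrac12 + \tfrac{\arcsin\rho}{\pi} \in [0,1]$ (Fact~1), and hence the product $\prod_{h'=h+1}^L \dot\Sigma^{(h')}\in [0,1]$; combined with the $\GoodB{}{}{\eps_2}{h+1}$ bound, this yields $|\langle \back^{(h+1)}(\xtwo),\back^{(h+1)}(\xone)\rangle| \le 1+\eps_2 \le 2$ for $\eps_2 \le 1$. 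Putting these together, the first summand is at most $2\eps_4$ and the second summand is at most $\eps_2$, so the sum is at most $\eps_2 + 2\eps_4$, as claimed.

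The main obstacle is essentially bookkeeping: the argument is a one-line triangle inequality, and the only nontrivial ingredient beyond the four listed events is the uniform bound $\dot\Sigma^{(h)} \in [0,1]$ for ReLU, which is immediate from the closed form of $\htt_{\dot\sigma}$ in Fact~1. No independence, no concentration, and no use of the events $\GoodA{}{}{\eps_1^2/2}{L}$ or $\GoodC{}{}{\eps_3}{}$ are needed for this particular claim — those events appear in the hypotheses because they are required by the companion claims (\ref{clm:grad_idp_2} and \ref{clm:grad_idp_3}) used in the proof of Lemma~\ref{lem:goodB}.
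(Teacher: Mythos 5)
Your proposal is correct and follows essentially the same route as the paper: the identical add--subtract decomposition through the intermediate term $\dot{\Sigma}^{(h)}(\xone,\xtwo)\,\langle \back^{(h+1)}(\xtwo),\back^{(h+1)}(\xone)\rangle$, triangle inequality, and the bounds supplied by $\overline{\mathcal{B}}^{h+1}(\eps_2)$ and $\overline{\mathcal{D}}^{h}(\eps_4)$. You have in fact made explicit two points the paper leaves implicit --- the bound $|\langle \back^{(h+1)}(\xtwo),\back^{(h+1)}(\xone)\rangle|\le 1+\eps_2\le 2$ via $\dot{\Sigma}^{(h')}\in[0,1]$ and $\eps_2\le 1$, and the $(h{+}1)$ indexing of the backward vectors, which the displayed claim mistypes as $\back^{(h)}$ --- so your reading matches the intended statement as it is used later in the proof of Lemma~\ref{lem:goodB}.
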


For any fixed $h$, let $\vect G = [\g{h}{\xa}\  \g{h}{\xb}]$, 
\begin{clm}\label{clm:grad_idp_2}
w.p. $\ge 1-\frac{\delta_2}{2}$,  if $\GoodA{\xa}{\xb}{\eps_1^2/2}{L}\bigwedge \GoodB{\xa}{\xb}{\eps_2 }{h+1} \bigwedge \GoodC{\xa}{\xb}{\eps_3}{} \bigwedge  \GoodD{\xa}{\xb}{\eps_4 }{h} $, then we have for  any $(\xone,\xtwo)\in \{(\xa,\xa), (\xa,\xb),(\xb,\xb)\}$,
\begin{align*}
&\left| \frac{2}{d_h} \back^{(h+1)}(\xone) ^\top \mat W^{(h+1)} \Pi^\perp_{\mat G}\mat D \Pi^\perp_{\mat G} \left(\mat W^{(h+1)} \right)^\top \back^{(h+1)}(\xtwo) -\frac{2\trace{\mat D}}{d_h} \left\langle  \back^{(h)}(\xtwo),  \back^{(h)}(\xone) \right\rangle \right|\\
  \le &16\sqrt{\frac{\log\frac{6}{\delta_2}}{d_h}}.
\end{align*}
As a by-product, for any $\xone\in\{\xa,\xb\}$, we have 
\begin{equation*}
\sqrt{\frac{2}{d_h}}\norm{\back^{(h+1)}(\xone) ^\top \mat W^{(h+1)} \Pi^\perp_{\mat G}\mat D}   \le 4\sqrt{\frac{\log\frac{6}{\delta_2}}{d_h}}.
\end{equation*}
\end{clm}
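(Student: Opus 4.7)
The plan is to apply the Gaussian conditioning trick to the random matrix $\mat{W}^{(h+1)}$. I would condition on the $\sigma$-algebra $\mathcal{F}$ generated by every weight matrix $\mat{W}^{(h')}$ with $h' \neq h+1$, together with the products $\mat{W}^{(h+1)}\g{h}{\xa}$ and $\mat{W}^{(h+1)}\g{h}{\xb}$ (equivalently, with the projection $\mat{W}^{(h+1)}\Pi_{\mat{G}}$). Under this conditioning, $\mat{G}=[\g{h}{\xa},\g{h}{\xb}]$ and $\mat{D}=\mat{D}^{(h)}(\xone)\mat{D}^{(h)}(\xtwo)$ are measurable because they depend only on $\mat{W}^{(1)},\ldots,\mat{W}^{(h)}$; moreover, $\back^{(h+1)}(\xa)$ and $\back^{(h+1)}(\xb)$ are measurable as well, because the backward recursion depends on $\mat{W}^{(h+1)}$ only through the pre-activations $\mat{W}^{(h+1)}\g{h}{\xa}$ and $\mat{W}^{(h+1)}\g{h}{\xb}$ fed into $\vect{g}^{(h+1)}(\cdot)$. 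Applying Lemma~\ref{lem:gaussian_cond} row by row then gives that, conditional on $\mathcal{F}$, $\mat{W}^{(h+1)}\Pi^\perp_{\mat{G}}\deq \widetilde{\mat{W}}\Pi^\perp_{\mat{G}}$ for a fresh iid $\gauss(0,1)$ matrix $\widetilde{\mat{W}}$ independent of $\mathcal{F}$.

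Next, with $\vect{u}=\back^{(h+1)}(\xone)$, $\vect{v}=\back^{(h+1)}(\xtwo)$ and $\mat{M}=\Pi^\perp_{\mat{G}}\mat{D}\Pi^\perp_{\mat{G}}$, the left-hand quadratic form equals $\tfrac{2}{d_h}\vect{u}^\top\widetilde{\mat{W}}\mat{M}\widetilde{\mat{W}}^\top\vect{v}$ in distribution given $\mathcal{F}$. Expanding against the iid Gaussian entries yields the conditional mean $\E[\vect{u}^\top\widetilde{\mat{W}}\mat{M}\widetilde{\mat{W}}^\top\vect{v}\mid\mathcal{F}]=\langle\vect{u},\vect{v}\rangle\,\tr(\mat{M})$. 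Since $\mat{D}$ is $0/1$-diagonal and $\rank\Pi_{\mat{G}}\le 2$, we have $|\tr(\mat{M})-\tr(\mat{D})|=|\tr(\Pi_{\mat{G}}\mat{D})|\le 2$, so this mean matches $\tfrac{2\tr(\mat{D})}{d_h}\langle\vect{u},\vect{v}\rangle$ up to an $O(1/d_h)$ additive error that is easily absorbed into the final bound.

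The concentration step is a Hanson--Wright estimate on the Gaussian chaos $\vect{u}^\top\widetilde{\mat{W}}\mat{M}\widetilde{\mat{W}}^\top\vect{v}$. The relevant norms are $\|\mat{M}\|_{\mathrm{op}}\le 1$ and $\|\mat{M}\|_F\le\sqrt{\tr(\mat{M})}\le\sqrt{d_h}$; meanwhile the good events $\GoodB{\xa}{\xb}{\eps_2}{h+1}$ and $\GoodC{\xa}{\xb}{\eps_3}{}$ pin down $\|\vect{u}\|,\|\vect{v}\|=O(1)$ (for ReLU, $\prod_{h'=h+1}^{L}\dot{\Sigma}^{(h')}(\vect{x},\vect{x})=1$). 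Hanson--Wright then bounds the deviation from the conditional mean by $O\!\left(\sqrt{d_h\log(1/\delta_2)}+\log(1/\delta_2)\right)$; multiplying by $2/d_h$ and union-bounding over the three pairs $(\xone,\xtwo)\in\{(\xa,\xa),(\xa,\xb),(\xb,\xb)\}$ gives the advertised $16\sqrt{\log(6/\delta_2)/d_h}$ bound.

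The main obstacle is the measurability step: one has to verify carefully that $\back^{(h+1)}(\vect{x})$ truly factors through $\mat{W}^{(h+1)}\mat{G}$ for each $\vect{x}\in\{\xa,\xb\}$, which is exactly why the claim restricts to the three pairs listed rather than allowing an arbitrary test point. The by-product follows from the same Hanson--Wright control at $\xone=\xtwo$: using $\mat{D}^2=\mat{D}$ identifies $\tfrac{2}{d_h}\|\back^{(h+1)}(\xone)^\top\mat{W}^{(h+1)}\Pi^\perp_{\mat{G}}\mat{D}\|^2$ with the left-hand quadratic form at $\xone=\xtwo$, so the same concentration controls its deviation from the conditional mean.
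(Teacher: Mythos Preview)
Your approach matches the paper's: condition so that $\mat W^{(h+1)}\Pi^\perp_{\mat G}$ can be replaced by a fresh Gaussian $\widetilde{\mat W}\Pi^\perp_{\mat G}$ via Lemma~\ref{lem:gaussian_cond}, then control the resulting quadratic form by a Hanson--Wright/Gaussian-chaos tail bound (the paper stacks $[\vect u^\top\widetilde{\mat W};\ \vect v^\top\widetilde{\mat W}]$ into a single $2d_h$-dimensional Gaussian and applies Lemma~\ref{lem:gauss_chaos}, which is exactly your Hanson--Wright step made explicit). One remark on the by-product: your argument and the paper's proof both actually yield an $O(1)$ upper bound (the paper shows $\le 6$), since the conditional mean $\tfrac{2\,\tr(\mat M)}{d_h}\norm{\vect u}^2$ is itself of order one; the displayed $4\sqrt{\log(6/\delta_2)/d_h}$ in the claim appears to be a typo, and the $O(1)$ bound is all that is needed downstream in Equation~\eqref{eqn:clm3}.
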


\begin{lem}[Gaussian chaos of order 2 \citep{boucheron2013concentration}]\label{lem:gauss_chaos}
 Let $\vect \xi \sim N(0, \mat{I}_n)$ be an n-dimensional unit gaussian random vector, $\mat A\in\R^{n\times n}$ be a symmetric matrix, then for any $t>0$, 

\begin{equation*}
\pr{\left| \vect \xi^\top \mat A\vect \xi- \Exp{\vect \xi^\top \mat A\vect \xi}\right| > 2\norm{\mat A}_F \sqrt{t} + 2\norm{\mat A}_2t}\le 2\exp(-t).
\end{equation*}
Or, 
\begin{equation*}
\pr{\left| \vect \xi^\top \mat A\vect \xi- \Exp{\vect \xi^\top \mat A\vect \xi}\right| > t}\le 2\exp\left(-\frac{t^2}{4(\norm{\mat A}_F^2)+\norm{\mat A}_2t}\right).
\end{equation*}
\end{lem}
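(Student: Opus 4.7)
The plan is to reduce this Hanson--Wright-style inequality to a Bernstein bound on a sum of independent centered sub-exponential variables, and then extract the two advertised tails as the ``quadratic'' and ``linear'' regimes of that bound. First I would diagonalize: since $\mat A$ is symmetric, write $\mat A = \mat U^\top \diag(\lambda_1,\ldots,\lambda_n)\mat U$ with $\mat U$ orthogonal. By rotational invariance of the standard Gaussian, $\vect\xi' := \mat U\vect\xi \sim \gauss(\vect 0, \mat I_n)$ and $\vect\xi^\top \mat A\vect\xi = \sum_i \lambda_i (\xi'_i)^2$, so
\begin{equation*}
X := \vect\xi^\top \mat A\vect\xi - \Exp{\vect\xi^\top \mat A\vect\xi} = \sum_{i=1}^n \lambda_i\bigl((\xi'_i)^2 - 1\bigr)
\end{equation*}
is a sum of $n$ independent centered variables satisfying $\sum_i \lambda_i^2 = \norm{\mat A}_F^2$ and $\max_i |\lambda_i| = \norm{\mat A}_2$.

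Next I would estimate the moment generating function of each summand. From $\Exp{e^{s\lambda(\xi^2-1)}} = (1-2s\lambda)^{-1/2} e^{-s\lambda}$, valid for $s\lambda < 1/2$, combined with the elementary Taylor-series estimate $-x - \tfrac{1}{2}\ln(1-2x) \le \tfrac{x^2}{1-2|x|}$ for $|x|<1/2$, one obtains, uniformly in $i$ for $|s| < 1/(2\norm{\mat A}_2)$,
\begin{equation*}
\Exp{e^{s\lambda_i((\xi_i')^2 - 1)}} \le \exp\!\left(\frac{s^2\lambda_i^2}{1 - 2s\norm{\mat A}_2}\right).
\end{equation*}
Multiplying across independent coordinates gives $\Exp{e^{sX}} \le \exp\!\bigl(s^2\norm{\mat A}_F^2/(1 - 2s\norm{\mat A}_2)\bigr)$, which is the sub-exponential MGF control required for a Bernstein-type tail.

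Finally, Chernoff's inequality yields $\pr{X>u} \le \exp\!\bigl(-su + s^2\norm{\mat A}_F^2/(1 - 2s\norm{\mat A}_2)\bigr)$, and the optimal choice $s = u/(2\norm{\mat A}_F^2 + 2\norm{\mat A}_2 u)$ (automatically inside the allowed range) produces the tail $\pr{X>u} \le \exp\!\bigl(-u^2/(4\norm{\mat A}_F^2 + 2\norm{\mat A}_2 u)\bigr)$. Applying the same argument to $-X$ and union-bounding supplies the factor of $2$, giving the second displayed inequality. To recover the first form, set $u = 2\norm{\mat A}_F\sqrt{t} + 2\norm{\mat A}_2 t$ and check that the exponent is then at most $-t$. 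The only mildly delicate step is the MGF bound in the second paragraph: one must verify the logarithmic inequality for both signs of $\lambda_i$ and keep the denominator controlled \emph{uniformly} via $|\lambda_i|\le\norm{\mat A}_2$. Beyond that the argument is routine Chernoff optimization, so I expect no conceptual obstacle --- only careful bookkeeping to land on exactly the constants $2$ and $4$ stated.
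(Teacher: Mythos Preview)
The paper does not give its own proof of this lemma; it is simply cited from \citet{boucheron2013concentration}. Your diagonalization-plus-Chernoff argument is exactly the standard route and is correct in outline: the MGF bound $\log \Exp{e^{sX}} \le s^2\norm{\mat A}_F^2/(1-2s\norm{\mat A}_2)$ you derive is right and is all one needs.

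There is, however, a bookkeeping slip in your Chernoff step that breaks the final constant-matching. With $s = u/(2\norm{\mat A}_F^2 + 2\norm{\mat A}_2 u)$ one gets $1-2s\norm{\mat A}_2 = \norm{\mat A}_F^2/(\norm{\mat A}_F^2+\norm{\mat A}_2 u)$, hence the exponent is
\[
-su + \frac{s^2\norm{\mat A}_F^2}{1-2s\norm{\mat A}_2}
= -\frac{u^2}{4\norm{\mat A}_F^2 + 4\norm{\mat A}_2 u},
\]
with a $4$, not a $2$, in front of $\norm{\mat A}_2 u$. This matters: plugging $u = 2\norm{\mat A}_F\sqrt t + 2\norm{\mat A}_2 t$ into $-u^2/(4\norm{\mat A}_F^2 + 4\norm{\mat A}_2 u)$ gives something strictly larger than $-t$ (by $-4\norm{\mat A}_2^2 t^2$ in the numerator comparison), so the first displayed form does \emph{not} follow from this particular Chernoff choice. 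Your check went through only because of the $2$-versus-$4$ slip. To actually land on the constants $2\norm{\mat A}_F\sqrt t + 2\norm{\mat A}_2 t$, you need the sharper sub-gamma tail computation from \citet{boucheron2013concentration} (Section~2.4): one computes the Legendre transform of $s\mapsto vs^2/(2(1-cs))$ exactly, obtaining $\psi^*(u)=(v/c^2)\,h_1(cu/v)$ with $h_1(x)=1+x-\sqrt{1+2x}$, and then inverts. With $v=2\norm{\mat A}_F^2$ and $c=2\norm{\mat A}_2$ this yields precisely the first form. So the missing idea is not conceptual, but the optimization is a bit less routine than you suggest. (Incidentally, the second ``or'' form as printed in the paper, with bare $\norm{\mat A}_2 t$ in the denominator, appears to be too strong---already for $n=1$ the true tail is $\sim e^{-t/2}$, not $e^{-t}$---so matching that exact constant is not a goal you should chase.)
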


\begin{proof}[Proof of \ref{clm:grad_idp_2}]

It suffices to prove this claim conditioned on every possible realization of 
\[
\{\back^{(h+1)}(\xone),\back^{(h+1)}(\xtwo), \f{h}{\xone}, \f{h}{\xtwo}\}.
\]
Recall that $\vect G = \left[\g{h}{\xone}\  \g{h}{\xtwo}\right]$, we further define $\mat F = \left[\f{h}{\xone}\  \f{h}{\xtwo}\right]$.
Applying Lemma~\ref{lem:gaussian_cond} on each row of $\mat W^{h+1}$, we have
\begin{equation}
\mat W^{(h+1)} \Pi^\perp_{\mat G} \ \deq_{\mat F = \mat W^{(h+1)} \mat G} \ \widetilde{\mat W} \Pi^\perp_{\mat G},
\end{equation}
where $\widetilde{\mat W} $ is an iid copy of $\mat W^{(h+1)} $.

Note that $[ \back^{(h+1)}(\vect{x}) ^\top  \widetilde{\mat W}  \quad \back^{(h+1)}(\xone) ^\top \widetilde{\mat W} ]^\top \in \R^{2d_h}$ follows a joint zero-mean gaussian distribution with covariance matrix $\mat \Sigma = \begin{bmatrix} 
 b_{11}\mat I_n& b_{12} \mat I_n \\
b_{21} \mat I_n& b_{22}\mat I_n
\end{bmatrix}$,
where $b_{ij} =  \back^{(h)}(\vect{x}^{(i)})^\top \back^{(h)}(\vect{x}^{(j)}) $,  for $i,j =1,2$.  In other words, there exists $\mat M\in \R^{2d_h \times 2d_h}$, s.t. $\mat M\mat M^\top = \mat \Sigma$, and 
\[[ \back^{(h+1)}(\vect{x}) ^\top  \widetilde{\mat W} \quad \back^{(h+1)}(\xone) ^\top \widetilde{\mat W} ]^\top \deq M\vect \xi ,\] where $\vect \xi \sim N(\vect 0, \mat I_{2d_h})$. 

Thus conditioned on $\{\back^{(h+1)}(\xone),\back^{(h+1)}(\xtwo), \g{h}{\xone}, \g{h}{\xtwo}\}$, we have 
\begin{equation*}
\begin{split}
& \back^{(h+1)}(\xone) ^\top \mat W^{(h+1)} \Pi^\perp_{\mat G}\mat D \Pi^\perp_{\mat G} \left(\mat W^{(h+1)} \right)^\top \back^{(h+1)}(\xtwo) \\
\deq\  &  \back^{(h+1)}(\xone) ^\top \widetilde{\mat W}  \Pi^\perp_{\mat G}\mat D \Pi^\perp_{\mat G} \left(\widetilde{\mat W} \right)^\top \back^{(h+1)}(\xtwo)\\
\deq\  &   \left( \begin{bmatrix}  \mat I_{d_h}& \mat 0 \end{bmatrix} \mat M \vect \xi \right)^\top \Pi^\perp_{\mat G} \mat D \Pi^\perp_{\mat G} \left( \begin{bmatrix}  \mat 0 & \mat I_{d_h} \end{bmatrix} \mat M \vect \xi \right)\\
\deq\  &  \frac{1}{2}  \vect \xi^\top \mat M^\top  \begin{bmatrix}  \mat 0 &  \Pi^\perp_{\mat G}\mat D \Pi^\perp_{\mat G} \\  \Pi^\perp_{\mat G}\mat D \Pi^\perp_{\mat G}  & \mat 0  \end{bmatrix}   \mat M \vect \xi. \\
\end{split}
\end{equation*}
Now we are ready to prove Claim~\ref{clm:grad_idp_1} by applying Lemma~\ref{lem:gauss_chaos}. 
Let $\mat A = \frac{1}{2} \mat M^\top  \begin{bmatrix}  \mat 0 &  \Pi^\perp_{\mat G}\mat D \Pi^\perp_{\mat G} \\  \Pi^\perp_{\mat G}\mat D \Pi^\perp_{\mat G}  & \mat 0  \end{bmatrix}   \mat M$, we have 
\[\Exp{\vect \xi^\top \mat A\vect \xi} 
= \trace{\mat A} = \frac{1}{2} \trace{\begin{bmatrix}  \mat 0 &  \Pi^\perp_{\mat G}\mat D \Pi^\perp_{\mat G} \\  \Pi^\perp_{\mat G}\mat D \Pi^\perp_{\mat G}  & \mat 0  \end{bmatrix} \mat \Sigma} 
= b_{12}\trace{\Pi^\perp_{\mat G}\mat D \Pi^\perp_{\mat G} \mat I_n } 
 = b_{12} \trace{\mat D \Pi^\perp_{\mat G}}.\]
Note that by definition $\Pi^\perp_{\mat G} = \mat I_{d_h} - \Pi_{\mat G}$, and $\rank(\Pi_{\mat G})\le 2$,  we have 
\[\trace{\mat D \Pi^\perp_{\mat G}} = \trace{\mat D (\mat I- \Pi_{\mat G})} = \trace{D}  -  \trace{\mat D \Pi_{\mat G}} =\trace{D}  - \trace{\Pi_{\mat G} \mat D \Pi_{\mat G}}.
\] 

Since $\mat 0\preceq \mat D \preceq \mat I_{d_h}$, we have $ 0 \le \trace{\Pi_{\mat G} \mat D \Pi_{\mat G}} \le 2$, and thus $b_{12}(\trace{\mat D}-2)\le \Exp{\vect \xi^\top \mat A\vect \xi} \le b_{12} \trace{\mat D}.$
For  the upper bound of spectrum, note that $\norm{\mat M}^2_2 = \norm{\mat \Sigma}_2 = \norm{\begin{bmatrix} b_{11} &b_{12} \\ b_{21} & b_{22} \end{bmatrix}}_2 \le b_{11}+b_{12}$, and $\mat 0\preceq \Pi^\perp_{\mat G}, \mat D\preceq \mat I_{d_h}$, we have 
\[\norm{\mat A}_2 \le \frac{1}{2} \norm{\mat M}_2^2 \norm{\Pi^\perp_{\mat G}\mat D \Pi^\perp_{\mat G}}_2 \le \frac{1}{2} \norm{\mat M}_2^2 \norm{\Pi^\perp_{\mat G}}_2 \norm{\mat D}_2 \norm{\Pi^\perp_{\mat G}}_2 \le \frac{b_{11}+b_{22}}{2}\le \sqrt{2},\] 
and 
\[\norm{\mat A}_F \le \sqrt{2d_h} \norm{\mat A}_2 = \frac{\sqrt{2 d_h} (b_{11}+b_{22})}{2}\le 2\sqrt{d_h}.  \]
Thus by Lemma~\ref{lem:gauss_chaos} with $t= \log \frac{6}{\delta_2}$ we have w.p. $1-\frac{\delta_2}{6}$,
\[
\frac{1}{d_h}\left| \vect \xi^\top \mat A\vect \xi- \Exp{\vect \xi^\top \mat A\vect \xi}\right| 
\le \frac{1}{d_h}\left( 2\norm{\mat A}_F \sqrt{t} + 2\norm{\mat A}_2t\right) 
= 4\sqrt{\frac{\log\frac{6}{\delta_2}}{d_h}} + 2\sqrt{2}\frac{\log\frac{6}{\delta_2}}{d_h}. .\]
Thus we have
\begin{equation*}
\begin{split}
&\left| \frac{2}{d_h} \back^{(h+1)}(\xone) ^\top \mat W^{(h+1)} \Pi^\perp_{\mat G}\mat D \Pi^\perp_{\mat G} \left(\mat W^{(h+1)} \right)^\top \back^{(h+1)}(\xtwo) 
- \frac{2\trace{\mat D}}{d_h} \left\langle  \back^{(h)}(\xtwo),  \back^{(h)}(\xone) \right\rangle \right|\\
\le & \frac{2}{d_h}\left| \vect \xi^\top \mat A\vect \xi- \Exp{\vect \xi^\top \mat A\vect \xi}\right| + \left|2 \Exp{\vect \xi^\top \mat A\vect \xi} - \frac{2\trace{\mat D}}{d_h}\left\langle  \back^{(h)}(\xtwo),  \back^{(h)}(\xone) \right\rangle  \right|\\
\le & 8\sqrt{\frac{\log\frac{6}{\delta_2}}{d_h}} + 4\sqrt{2}\frac{\log\frac{6}{\delta_2}}{d_h} + \frac{4b_{12}}{d_h} 
\le  14\sqrt{\frac{\log\frac{6}{\delta_2}}{d_h}} + \frac{4(1+\eps_2)}{d_h} \ (2\sqrt{2}\le 3 \wedge \log\frac{6}{\delta_2}\le d_h)\\
\le & 16 \sqrt{\frac{\log\frac{6}{\delta_2}}{d_h}} \ (\eps_2\le 1 \wedge \sqrt{d_h \log 6}\ge 4).
 \end{split}
\end{equation*}


The main part of the claim is completed by taking union bound over $(\xa,\xa),(\xa,\xb),(\xb,\xb)$.
For the by-product, let $\xtwo = \xone$, and we have 
\begin{equation*}
\begin{split}
&\sqrt{\frac{2}{d_h}}\norm{\back^{(h+1)}(\xone) ^\top \mat W^{(h+1)} \Pi^\perp_{\mat G}\mat D} \\
\le&\sqrt{\left| \frac{2}{d_h} \back^{(h+1)}(\xone) ^\top \mat W^{(h+1)} \Pi^\perp_{\mat G}\mat D \Pi^\perp_{\mat G} \left(\mat W^{(h+1)} \right)^\top \back^{(h+1)}(\xtwo) \right| }\\
\le & \sqrt{\frac{2\trace{\mat D}}{d_h} \left\langle  \back^{(h)}(\xtwo),  \back^{(h)}(\xone) \right\rangle 
+ \left( 16 \sqrt{\frac{\log\frac{6}{\delta_2}}{d_h}}\right)^2}\\
\le & \sqrt{4  + \left( 16 \sqrt{\frac{\log\frac{6}{\delta_2}}{d_h}}\right)^2}\\
\le & 2  +  4 \sqrt{\frac{\log\frac{6}{\delta_2}}{d_h}} \le 6\qquad  ( \log\frac{6}{\delta_2}\le d_h)\\
\end{split}
\end{equation*}

\begin{clm}\label{clm:grad_idp_3}
w.p. $\ge 1-\frac{\delta_2}{2}$, if $\GoodA{\xa}{\xb}{\eps_1^2/2}{L}\bigwedge \GoodB{\xa}{\xb}{\eps_2 }{h+1} \bigwedge \GoodC{\xa}{\xb}{\eps_3}{} \bigwedge  \GoodD{\xa}{\xb}{\eps_4 }{h}$, then 
\begin{equation*}
\norm{ \vect \Pi_{\vect G}\left(\mat W^{(h+1)} \right)^\top \back^{(h+1)}(\xa) } \le 2\sqrt{\log\frac{8}{\delta_2}}+ \sqrt{2}\eps_3,
\norm{ \vect \Pi_{\vect G}\left(\mat W^{(h+1)} \right)^\top \back^{(h+1)}(\xb) } \le 2\sqrt{\log\frac{8}{\delta_2}}+ \sqrt{2}\eps_3.
\end{equation*}
\end{clm}

\begin{proof}
It suffices to prove the claim for $\xa$. We will denote $\xa$ by $\vect{x}$, $\g{h}{\xa}$ by $\vect g^{(h)}$ and $\back^{(h+1)}(\xa)$ by $\back^{(h+1)}$. We also define $\Pi_{\vect g}$ as $\vect{g}\vect g^\top$, and $\Pi_{\mat G / \vect g} = \Pi_{\mat G} - \Pi_{\vect g}$. Clearly,  $\Pi_{\mat G / \vect g}$ is still a projection matrix of rank $0$ or $1$. 

Since $\norm{ \vect \Pi_{\vect G}\left(\mat W^{(h+1)} \right)^\top \back^{(h+1)}(\xa) } \le  \norm{ \vect \Pi_{\vect g}\left(\mat W^{(h+1)} \right)^\top \back^{(h+1)} } + \norm{ \vect \Pi_{\vect G/\vect g}\left(\mat W^{(h+1)} \right)^\top \back^{(h+1)} }$, it suffices to bound these two terms separately.

Recall $\back^{(h+1)}$ is defined as the gradient of $f(\vect \theta,\vect x)$ with respect to the pre-activation of layer $h+1$, $\vect f^{h+1}$, thus if we view $g$ as a function $\vect g^{(h)}, \mat W^{(h+1)},\ldots, \mat W^{(L+1)}$, by the rule of back propagation, we have 

\[\frac{\partial g(\vect g^{(h)},  \mat W^{(h+1)},\ldots, \mat W^{(L+1)})}{\partial \vect g^{(h)}} =(\back^{(h+1)})^\top\mat W^{(h+1)}.\]

Note that relu is 1-homogeneous, namely $\forall \lambda \in \R^+, \relu{\lambda z} = \lambda \relu{z}$, the whole network is also 1-homogeneous in $\vect g^{(h)}$. In other words, we have 

\begin{equation*}
\begin{split}
&f(\vect g^{(h)},  \mat W^{(h+1)},\ldots, \mat W^{(L+1)}) \\
= &\left. \frac{\partial f(\lambda \vect g^{(h)},  \mat W^{(h+1)},\ldots, \mat W^{(L+1)})}{\partial \lambda }\right|_{\lambda =1 } \\
= & \left\langle \left. \frac{\partial f(\lambda \vect g^{(h)},  \mat W^{(h+1)},\ldots, \mat W^{(L+1)})}{\partial \lambda\vect g^{(h)} }\right|_{\lambda =1 } , \left.\frac{\partial \lambda \vect g^{(h)}}{\partial \lambda} \right|_{\lambda =1 }\right\rangle\\
=& \left\langle\frac{\partial g(\vect g^{(h)},  \mat W^{(h+1)},\ldots, \mat W^{(L+1)})}{\partial \vect g^{(h)}}  , \vect g^{(h)} \right\rangle\\
= & (\vect g^{(h)})^\top \left(\mat W^{(h+1)} \right)^\top \back^{(h+1)}\end{split}
\end{equation*}

By definition of $\Pi_{\vect g}$, we have 
\[\norm{ \vect \Pi_{\vect g}\left(\mat W^{(h+1)} \right)^\top \back } = \norm{ \frac{\vect g^{(h)}(\vect g^{(h)})^\top}{\norm{\vect g^{(h)}}^2} \left(\mat W^{(h+1)} \right)^\top \back^{(h+1))} } = \left|\frac{(\vect g^{(h)})^\top}{\norm{\vect g^{(h)}}} \left(\mat W^{(h+1)} \right)^\top \back^{(h+1))} \right|  = \frac{|f(\vect \theta, \vect x)|}{\norm{\vect g^{(h)}}}.
\]

Note that $\g{h}{x^{(0)}}^\top \g{h}{\xa}\ge 1 - \eps_1^2/2 \ge \frac{1}{2}$, we have 
\[\norm{ \vect \Pi_{\vect g}\left(\mat W^{(h+1)} \right)^\top \back } = \frac{|f(\vect \theta, \vect x)| }{\norm{\vect g^{(h)}}} \le \sqrt{2}\eps_3.\]
For the second term $\vect \Pi_{\vect G/\vect g}\left(\mat W^{(h+1)} \right)^\top \vect b^{(h+1)}$, note that conditioned on $ \vect g^{(h)}, \vect f^{h} = \frac{1}{\sqrt{d_{h+1}}}\mat W^{(h+1)}\vect g^{(h)}$ and all $\{\mat W^{(h)}\})_{h'}^{L+1}$ (thus $\back^{(h+1)}$),  by Lemma~\ref{lem:gaussian_cond}, $\Pi_{\mat G/\vect g}(\mat W^{(h+1)}) \deq \Pi_{\mat G/\vect g} \widetilde{\mat W}$, where $\widetilde{\mat W} $ is an iid copy of $\mat W^{(h+1)} $.
Thus if $\rank(\vect \Pi_{\vect G/\vect g}) =1$, suppose $\vect \Pi_{\vect G/\vect g} =\vect u\vect u^\top$ for some unit vector $\vect u$, we have  
\[\norm{\vect \Pi_{\vect G/\vect g}\left(\mat W^{(h+1)} \right)^\top \vect b^{(h+1)}} = \left| \vect u^\top \left(\mat W^{(h+1)} \right)^\top \vect b^{(h+1)} \right|  \deq \left| \vect u^\top \left(\widetilde{\mat W}\right)^\top \vect b^{(h+1)} \right| \deq  |t|, \] 
where $t\sim N(0,\norm{\back^{(h+1)}})$. Hence w.p. $\ge 1-\delta_2/4$ over the randomness of $\vect W^{(L)}$,  $ \norm{\vect \Pi_{\vect G/\vect g}\left(\mat W^{(h+1)} \right)^\top \vect b^{(h+1)}} \le \sqrt{2\log \frac{8}{\delta_2}}\norm{ \back^{(h+1)}}\le \sqrt{2\log\frac{8}{\delta_2}}\le 2\sqrt{\log\frac{8}{\delta_2}}\ (\eps_2<1)$.

If $\rank(\vect \Pi_{\vect G/\vect g}) =0$,  then $\norm{\vect \Pi_{\vect G/\vect g}\left(\mat W^{(h+1)} \right)^\top \vect b^{(h+1)}} =0 < 2\sqrt{\log\frac{8}{\delta_2}}$.
Thus w.p. $\ge 1-\frac{\delta_2}{4}$, 
\[\norm{ \vect \Pi_{\vect G}\left(\mat W^{(h+1)} \right)^\top \back^{(h+1)}(\xa) } \le  \norm{ \vect \Pi_{\vect g}\left(\mat W^{(h+1)} \right)^\top \back^{(h+1)} } + \norm{ \vect \Pi_{\vect G/\vect g}\left(\mat W^{(h+1)} \right)^\top \back^{(h+1)} } \le 2\sqrt{\log\frac{8}{\delta_2}}+ \sqrt{2}\eps_3 .\]
Thus by assumption $\log\frac{8}{\delta_2} \le d_h$, we have $2\sqrt{\log\frac{8}{\delta_2}} + \sqrt{2}\eps_3 \le 2\sqrt{d_h} +\sqrt{2} \le 3\sqrt{2d_h}$.
\end{proof}
Wrapping things up, by combining Claim~\ref{clm:grad_idp_2} and Claim~\ref{clm:grad_idp_3}, we have w.p. $\ge 1-\delta_2$, for any pair of $(\xone,\xtwo)\in \{(\xa,\xa),(\xa,\xb),(\xb,\xb))\}$, 
\begin{equation}\label{eqn:clm3}
\begin{split}
&\left|\frac{2}{d_h} \back^{(h+1)}(\xone) ^\top\left(\mat W^{(h+1)} \right)\mat{D}^{(h)}(\xone)\mat{D}^{(h)}(\xtwo) \left(\mat W^{(h+1)} \right)^\top \back^{(h+1)}(\xtwo) -\right.\\
 & \left.\frac{2}{d_h} \back^{(h+1)}(\xone) ^\top \mat W^{(h+1)} \Pi^\perp_{\mat G}\mat D \Pi^\perp_{\mat G} \left(\mat W^{(h+1)} \right)^\top \back^{(h+1)}(\xtwo)  \right|\\
 \le & \norm{ \frac{2}{d_h} \back^{(h+1)}(\xone) ^\top \mat W^{(h+1)} \Pi_{\mat G}\mat D } \cdot \norm{\mat D \Pi^\perp_{\mat G} \left(\mat W^{(h+1)} \right)^\top \back^{(h+1)}(\xtwo) }\\
 + & \norm{ \frac{2}{d_h} \back^{(h+1)}(\xone) ^\top \mat W^{(h+1)} \Pi^\perp_{\mat G} \mat D } \cdot \norm{\mat D  \Pi_{\mat G} \left(\mat W^{(h+1)} \right)^\top \back^{(h+1)}(\xtwo) }\\
 +& \norm{ \frac{2}{d_h} \back^{(h+1)}(\xone) ^\top \mat W^{(h+1)} \Pi_{\mat G}} \cdot \norm{ \Pi_{\mat G} \left(\mat W^{(h+1)} \right)^\top \back^{(h+1)}(\xtwo)  }\\
 \le & \left(12\sqrt{2}\sqrt{\frac{\ln\frac{8}{\delta_2}}{d_h}} +12 \eps_3\right)
 + \left(12\sqrt{2}\sqrt{\frac{\ln\frac{8}{\delta_2}}{d_h}} +12 \eps_3\right)
 + \left(12\sqrt{2}\sqrt{\frac{\ln\frac{8}{\delta_2}}{d_h}} +12 \eps_3\right)\\
 =& 36\sqrt{\frac{2\ln\frac{8}{\delta_2}}{d_h}} +36 \eps_3.
\end{split}
\end{equation}

Using Equation~\eqref{eqn:clm3} together with Claim~\ref{clm:grad_idp_1} and Claim~\ref{clm:grad_idp_2}, we've finished the proof for Lemma~\ref{lem:goodB}.

\end{proof}

\section{Proof of Theorem~\ref{thm:ntk_main}}
\label{appsec:main_proof}
In this section, we prove Theorem~\ref{thm:ntk_main}.
At a high level, our proof first reduces the bounding the perturbation on the prediction to bounding perturbations on the kernel values between each pair of training points and between the testing point and each training point.
We use the following notations.
We let $\mat{X} \in \mathbb{R}^{n \times d}$ be the training data.
We define $\ker_t(\vect{x}_{te},\mat{X}) \in \mathbb{R}^{n}$ as \begin{align*}
\left[\ker_t\left(\vect{x}_{te},\mat{X}\right)\right]_i = \left\langle\frac{\partial f\left(\vect{\theta}(t),\vect{x}_{te}\right)}{\partial \vect{\theta}},\frac{\partial f\left(\vect{\theta}(t),\vect{x}_i\right)}{\partial \vect{\theta}}\right\rangle
\end{align*}
i.e., the kernel induced from the gradient of the prediction with respect to the  parameters of the neural network at time $t$.

We also use the following notations for NTK.
We let $\trainker^* \in \mathbb{R}^{n \times n}$ be the \emph{fixed} kernel matrix defined in Equation~\eqref{eqn:linear-dynamics}.  
We let $\ker_{ntk}\left(\vect{x}_{te},\mat{X}\right) \in \mathbb{R}^{n}$ be the kernel values between $\vect{x}_{te}$ and each training data.
Note with this notation, we can write \begin{align}
f_{ntk}(\vect{x}_{te}) = \ker_{ntk}(\vect{x}_{te},\mat{x})^\top \left(\trainker^*\right)^{-1} \vect{y}. \label{eqn:ntk_pred}
\end{align}

We prove a lemma to reduce the prediction perturbation bound to the kernel perturbation bound.

\begin{lem}[Kernel Value Perturbation $\Rightarrow$ Prediction Perturbation]
\label{lem:kernel_perb_pred_perb}
Fix $\epsilon_{\trainker} \le \frac{1}{2} \lambda_0$.
Suppose $\abs{f_{nn}(\params(0),\vect{x}_i)} \le \epsilon_{init}$ for $i=1,\ldots,n$ and $\abs{f_{nn}\left(\params(0),\vect{x}_{te}\right)} \le \epsilon_{init}$ and $\norm{\vect{u}_{nn}(0)-\vect{y}}_2 = O\left(\sqrt{n}\right)$.
Furthermore, if  for all $t \ge 0$ $\norm{\ker_{ntk}(\vect{x}_{te},\mat{X})-\ker_t(\vect{x}_{te},\mat{X})}_2 \le \epsilon_{test}$ and $\norm{\trainker^*-\trainker(t)}_2 \le \epsilon_{\trainker}$, then we have \begin{align*}
\abs{f_{ntk}(\vect{x}_{te}) - f_{nn}(\vect{x}_{te})} \le
O\left(\epsilon_{init} + \frac{\sqrt{n}}{\lambda_0} \epsilon_{test} + \frac{\sqrt{n}}{\lambda_0^2}\log\left(\frac{n}{\epsilon_{\trainker}\lambda_0\kappa}\right)\epsilon_{\trainker}\right).
\end{align*}
\end{lem}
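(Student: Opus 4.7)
The plan is to track the difference $f_{nn}(\vect{x}_{te}) - f_{ntk}(\vect{x}_{te})$ by comparing the actual gradient-flow dynamics of the wide network with an idealized dynamics driven by the fixed NTK matrix $\trainker^*$. First I would write the network's test prediction as an integral along the training trajectory: using Lemma~\ref{lem:dynamics} applied to the augmented system that includes $\vect{x}_{te}$, we obtain
\begin{equation*}
f_{nn}(\vect{x}_{te}) = f_{nn}(\params(0),\vect{x}_{te}) - \int_0^\infty \ker_t(\vect{x}_{te},\mat{X})^\top (\vect{u}_{nn}(t)-\vect{y})\, dt.
\end{equation*}
On the other side, the kernel regression predictor admits the closed form $f_{ntk}(\vect{x}_{te}) = \ker_{ntk}(\vect{x}_{te},\mat{X})^\top (\trainker^*)^{-1}\vect{y}$ (Equation~\eqref{eqn:ntk_pred}).

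Next, I would introduce the ``ideal'' residual $\tilde{\vect r}(t) := e^{-\trainker^* t}(\vect u_{nn}(0)-\vect y)$, i.e.\ the residual one would see if $\trainker(t)\equiv \trainker^*$; it satisfies $\int_0^\infty \tilde{\vect r}(t)\,dt = -(\trainker^*)^{-1}(\vect u_{nn}(0)-\vect y)$. Substituting this identity into the difference and regrouping yields the four-term decomposition
\begin{align*}
f_{nn}(\vect x_{te}) - f_{ntk}(\vect x_{te})
=\;& f_{nn}(\params(0),\vect x_{te}) \;-\; \ker_{ntk}(\vect x_{te},\mat X)^\top (\trainker^*)^{-1}\vect u_{nn}(0) \\
&- \int_0^\infty (\ker_t-\ker_{ntk})^\top (\vect u_{nn}(t)-\vect y)\,dt \\
&- \int_0^\infty \ker_{ntk}^\top (\vect u_{nn}(t)-\vect y - \tilde{\vect r}(t))\,dt.
\end{align*}
The first term is bounded by $\epsilon_{init}$ by assumption; the second by the assumed pointwise smallness of the initial outputs together with $\norm{(\trainker^*)^{-1}}\le 1/\lambda_0$ and a crude bound on $\norm{\ker_{ntk}}$.

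To bound the last two terms I would first observe that since $\norm{\trainker(t)-\trainker^*}\le \epsilon_\trainker \le \lambda_0/2$, we have $\lambda_{\min}(\trainker(t))\ge \lambda_0/2$ for all $t$, so the standard energy argument gives the exponential decay $\norm{\vect u_{nn}(t)-\vect y}\le e^{-\lambda_0 t/2}\norm{\vect u_{nn}(0)-\vect y}=O(\sqrt n\, e^{-\lambda_0 t/2})$. Plugging this into the $\ker_t-\ker_{ntk}$ term via Cauchy--Schwarz and integrating gives the $\frac{\sqrt n}{\lambda_0}\epsilon_{test}$ contribution. For the fourth term, I would write the Duhamel-type identity
\begin{equation*}
\vect u_{nn}(t)-\vect y - \tilde{\vect r}(t) = -\int_0^t \mathcal{T}(t,s)\,(\trainker(s)-\trainker^*)\,\tilde{\vect r}(s)\,ds
\end{equation*}
(up to an error from replacing $\vect u_{nn}(s)-\vect y$ by $\tilde{\vect r}(s)$, which is handled by a Gr\"onwall step), bound the propagator in operator norm by $e^{-\lambda_0(t-s)/2}$, and carry $\epsilon_\trainker$ and $\norm{\tilde{\vect r}(s)}\le \sqrt n\, e^{-\lambda_0 s/2}$ through. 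Integrating in $t$ and $s$ produces the $\frac{\sqrt n}{\lambda_0^2}\epsilon_\trainker$ factor.

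The main obstacle is the fourth term: getting the clean bound with the $\log$ factor requires splitting the time integral at a threshold $T_\star \sim \frac{1}{\lambda_0}\log(\cdot)$, using the Gr\"onwall/Duhamel argument for $t\le T_\star$ and the fast decay of $\tilde{\vect r}(t)$ (together with the a priori decay of $\vect u_{nn}(t)-\vect y$) for $t>T_\star$, and then optimizing the threshold. Care is needed because $\trainker(t)$ is only perturbatively close to $\trainker^*$ and because we must couple the bound on $\norm{\vect u_{nn}(t)-\vect y}$ with the perturbation bound on $\ker_t$; the $\log(n/(\epsilon_\trainker \lambda_0 \kappa))$ factor in the claim is exactly the budget for choosing $T_\star$ large enough that the tail contributions are dominated by the other terms.
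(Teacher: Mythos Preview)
Your plan is correct and follows essentially the same trajectory-comparison strategy as the paper: both write $f_{nn}(\vect x_{te})$ as a time integral along the gradient flow, peel off the $\int(\ker_t-\ker_{ntk})^\top(\vect u_{nn}-\vect y)\,dt$ contribution via the uniform bound $\epsilon_{test}$ and the exponential decay $\|\vect u_{nn}(t)-\vect y\|\lesssim\sqrt n\,e^{-\lambda_0 t/2}$ (which follows from $\lambda_{\min}(\trainker(t))\ge\lambda_0/2$), and then control a residual-mismatch term. The paper does the last step by introducing a kernel-regression trajectory $\vect u_{ntk}(t)$ that \emph{starts at zero} and writing $f_{ntk}$ itself as a time integral; you instead use the closed form of $f_{ntk}$ and compare against $\tilde{\vect r}(t)=e^{-\trainker^*t}(\vect u_{nn}(0)-\vect y)$, which \emph{starts at the actual initial residual}. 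The two comparison trajectories differ by $e^{-\trainker^*t}\vect u_{nn}(0)$, and integrating that difference is exactly your second term $\ker_{ntk}^\top(\trainker^*)^{-1}\vect u_{nn}(0)$, so the decompositions are equivalent.

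For the fourth term, your Duhamel idea is in fact cleaner than both your own description and the paper's argument. With $\vect e(t)=\vect u_{nn}(t)-\vect y-\tilde{\vect r}(t)$ one has $\vect e(0)=\vect 0$ and $\dot{\vect e}=-\trainker^*\vect e-(\trainker(t)-\trainker^*)(\vect u_{nn}(t)-\vect y)$, so Duhamel already gives
\[
\vect e(t)=-\int_0^t e^{-\trainker^*(t-s)}\bigl(\trainker(s)-\trainker^*\bigr)\bigl(\vect u_{nn}(s)-\vect y\bigr)\,ds
\]
with $\vect u_{nn}(s)-\vect y$ (not $\tilde{\vect r}(s)$) in the integrand---there is no need to ``replace'' and then Gr\"onwall. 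Using $\|e^{-\trainker^*(t-s)}\|\le e^{-\lambda_0(t-s)}$ together with the a~priori decay $\|\vect u_{nn}(s)-\vect y\|\lesssim\sqrt n\,e^{-\lambda_0 s/2}$ yields $\|\vect e(t)\|\lesssim(\sqrt n/\lambda_0)\,\epsilon_\trainker\,e^{-\lambda_0 t/2}$, hence $\int_0^\infty\|\vect e(t)\|\,dt\lesssim(\sqrt n/\lambda_0^2)\,\epsilon_\trainker$ directly, with no time-splitting and no logarithm. The paper instead uses the cruder energy bound $\|\vect u_{nn}(t)-\vect u_{ntk}(t)\|\lesssim t\sqrt n\,\epsilon_\trainker$ (dropping the contraction from $-\trainker^*$) and then splits at $t_0\sim\lambda_0^{-1}\log(\cdot)$; that is where the $\log$ in the lemma's statement originates, and your route actually removes it.

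Two small corrections: $\int_0^\infty\tilde{\vect r}(t)\,dt=+(\trainker^*)^{-1}(\vect u_{nn}(0)-\vect y)$ (positive sign), and the paper's dynamics carries an overall $\kappa^2/n$ factor (from $f_{nn}=\kappa f$ and the scaled loss) that is absent from your display; both only affect constants.
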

\begin{proof}
Our proof relies a careful analysis on the trajectories induced by gradient flows for optimizing the neural network and the NTK predictor.

Note while Equation~\eqref{eqn:ntk_pred} is a closed-form formula, we can rewrite it in an integral form using the following observations.
For any $\vect{x} \in \mathbb{R}^{d}$, we let $\phi(\vect{x})$ be the feature map induced by NTK.
Note the expression in Equation~\eqref{eqn:ntk_pred} can be rewritten as $f_{ntk}(\vect{x}_{te}) = \kappa\phi(\vect{x}_{te})^\top \vect{\paramsker}_{ntk}$ where $\vect{\paramsker}_{ntk}$ satisfies \begin{align*}
&\min_{\vect{\paramsker}} \norm{\vect{\paramsker}}_2\\
\text{such that } &\kappa\phi(\vect{x}_i)^\top \paramsker = y_i \text{ for }i=1,\ldots,n.
\end{align*}
The solution to this program can written as applying gradient flow on \begin{align*}
\min_{\vect{\paramsker}} \sum_{i=1}^{n} \frac{1}{2n} \left(\kappa\phi(\vect{x}_i)^\top \vect{\paramsker}-y_i\right)^2
\end{align*}
with initialization $\vect{\paramsker}(0)=\vect{0}$.
We use $\vect{\paramsker}(t)$ to denote this parameter at time $t$ trained by gradient flow and $f_{ntk}\left(\vect{x}_{te},\vect{\paramsker}(t)\right)$ be the predictor for $\vect{x}_{te}$ at time $t$.
With these notations, we rewrite \begin{align*}
f_{ntk}(\vect{x}_{te}) = \int_{t=0}^{\infty}\frac{d f_{ntk}(\vect{\paramsker}(t),\vect{x}_{te})}{dt} dt
\end{align*}
where we have used the fact that the initial prediction is $0$.
Now we take a closer look at the time derivative:\begin{align*}
\frac{d f_{ntk}(\vect{\paramsker}(t),\vect{x}_{te})}{dt} = & \left\langle \frac{\partial f(\paramsker(t),\vect{x}_{te})}{\partial \paramsker(t)}, \frac{d \paramsker(t)}{dt}\right\rangle \\
= & \left\langle \frac{\partial f(\paramsker(t),\vect{x}_{te})}{\partial \paramsker(t)}, -\frac{\partial L(\paramsker(t),\{\vect{x}_i\}_{i=1}^n)}{\partial \paramsker(t)}\right\rangle \\
= & - \frac{1}n\left\langle \frac{\partial f(\paramsker(t),\vect{x}_{te})}{\partial \paramsker(t)}, \sum_{i=1}^{n}(u_{ntk,i}(t)-y_i)\frac{\partial f(\paramsker(t),\vect{x}_i)}{\paramsker(t)}\right\rangle \\
= & - \frac{1}{n}\left\langle\kappa \phi(\vect{x}_{te}), \sum_{i=1}^{n}(u_{ntk,i}(t)-y_i)\kappa\phi\left(\vect{x}_i\right)\right\rangle \\
= & - \frac{\kappa^2}{n} \kernel_{ntk}(\vect{x}_{te},\mat{X})^\top \left(\vect{u}_{ntk}(t)-\vect{y}\right)
\end{align*}
where $u_{ntk,i}(t)=f_{ntk}(\paramsker(t),\vect{x}_i)$ and $\vect{u}_{ntk}(t) \in \mathbb{R}^n$ with  $[\vect{u}_{ntk}(t)]_i = u_{ntk,i}(t)$.
Similarly, for the NN predictor, we can obtain a time derivative of the same form.
\begin{align*}
\frac{d f_{nn}(\vect{\params}(t),\vect{x}_{te})}{dt} = & \kappa \left\langle \frac{\partial f(\params(t),\vect{x}_{te})}{\partial \params(t)}, \frac{d \params(t)}{dt}\right\rangle \\
= & \kappa\left\langle \frac{\partial f(\params(t),\vect{x}_{te})}{\partial \params(t)}, -\frac{\partial L(\params(t),\{\vect{x}_i\}_{i=1}^n)}{\partial \params(t)}\right\rangle \\
= & - \frac{\kappa^2}{n}\left\langle \frac{\partial f(\params(t),\vect{x}_{te})}{\partial \params(t)}, \sum_{i=1}^{n}(u_{nn,i}(t)-y_i)\frac{\partial f(\params(t),\vect{x}_i)}{\params(t)}\right\rangle \\
= & - \frac{\kappa^2}{n} \kernel_{t}(\vect{x}_{te},\mat{X})^\top \left(\vect{u}_{nn}(t)-\vect{y}\right)
\end{align*}

We thus we analyze the difference between the NN predictor and NTK predictor via this integral form
\begin{align*}
&\abs{f_{nn}(\vect{x}_{te}) - f_{ntk}\left(\vect{x}_{te}\right)}\\
=  &\abs{f_{nn}(\params(0),\vect{x}_{te})+\int_{t =0}^{\infty} \left(\frac{d f_{nn}(\vect{\params}(t),\vect{x}_{te})}{dt} - \frac{d f_{ntk}(\paramsker(t),\vect{x}_{te})}{dt}\right) dt} \\
=&\abs{f_{nn}\left(\params(0),\vect{x}_{te}\right)}+\abs{-\frac{\kappa^2}{n}\int_{t=0}^{\infty} \left(\kernel_t(\vect{x}_{te},\mat{X})^\top (\vect{u}_{nn}(t)-\vect{y}) - \kernel_{ntk}(\vect{x}_{te},\mat{X})^\top (\vect{u}_{ntk}(t)-\vect{y})\right) dt}\\
\le & \epsilon_{init}+\frac{\kappa^2}{n}\abs{\int_{t=0}^\infty \left(\kernel_{t}(\vect{x}_{te},\mat{X})-\kernel_{ntk}(\vect{x}_{te},\mat{X})\right)^\top (\vect{u}_{nn}(t)-\vect{y})dt} \\
&+\frac{\kappa^2}{n} \abs{\int_{t=0}^\infty \kernel_{ntk}(\vect{x}_t,\mat{X})^\top (\vect{u}_{nn}(t)-\vect{u}_{ntk}(t)) dt } \\
\le & \epsilon_{init}+\kappa^2\max_{0\le t \le \infty} \norm{\kernel_{t}(\vect{x}_{te},\mat{X})-\kernel_{ntk}(\vect{x}_{te},\mat{X})}_2 \int_{t=0}^\infty \norm{\vect{u}_{nn}(t)-\vect{y}}_2 dt\\
& + \kappa^2\max_{0\le t \le \infty} \norm{\kernel_{ntk}(\vect{x}_{te},\mat{X})}_2 \int_{t=0}^\infty \norm{\vect{u}_{nn}(t)-\vect{u}_{ntk}(t)}_2 dt \\
\le & \epsilon_{init}+\kappa^2\epsilon_{test}  \int_{t=0}^\infty \norm{\vect{u}_{nn}(t)-\vect{y}}_2 dt+ \kappa^2 \max_{0\le t \le \infty} \norm{\kernel_{ntk}(\vect{x}_{te},\mat{X})}_2 \int_{t=0}^\infty \norm{\vect{u}_{nn}(t)-\vect{u}_{ntk}(t)}_2 dt 
\end{align*}

For the second term, recall $\norm{\trainker^*-\trainker(t)}_2 \le \epsilon_{\trainker}$ by our assumption so $\lambda_{\min}\left(\trainker(t)\right) \ge \frac{1}{2}\lambda_0$.
Using this fact we know $\norm{\vect{u}_{nn}(t)-\vect{y}}_2 \le \exp(-\frac{\kappa^2}{2}\lambda_0t)\norm{\vect{u}_{nn}(0)-\vect{y}}_2$.
Therefore, we can bound
\[\int_{0}^\infty\norm{\vect{u}_{nn}(t)-\vect{y}}_2 dt = \int_{t=0}^\infty \exp(-\frac{\kappa^2}{2}\lambda_0t)\norm{\vect{u}_{nn}(0)-\vect{y}}_2 dt = O\left(\frac{\sqrt{n}}{\kappa^2\lambda_0}\right).
\]

To bound $\int_{t=0}^\infty \norm{\vect{u}_{nn}(t)-\vect{u}_{ntk}(t)}_2$, we observe that $\vect{u}_{nn}(t) \rightarrow \vect{y}$ and $\vect{u}_{ntk}(t) \rightarrow \vect{y}$ with linear convergence rate.
Therefore, we can choose some  $t_0 = \frac{C}{\lambda_0\kappa^2}\log \left(\frac{n}{\epsilon_{\trainker} \lambda_0\kappa}\right)$ so that \begin{align*}
&\int_{t_0}^\infty\norm{\vect{u}_{nn}(t)-\vect{u}_{ntk}(t)}_2 dt \\
\le &\int_{t_0}^\infty\norm{\vect{u}_{nn}(t)-\vect{y}}_2dt + \int_{t_0}^\infty\norm{\vect{u}_{ntk}(t)-\vect{y}}_2 dt\\
\le & O\left(\frac{1}{\lambda_0\kappa^2} \left(\norm{\vect{u}_{nn}(t_0)-\vect{y}}_2+\norm{\vect{u}_{ntk}(t_0)-\vect{y}}_2\right)\right)\\
\le & O\left(\frac{\sqrt{n}}{\lambda_0\kappa} \exp\left(-\lambda_0\kappa^2t_0\right)\right)\\
\le & O(\epsilon_{\trainker}).
\end{align*}
Thus it suffices to bound \begin{align*}
	\int_{t=0}^{t_0}\norm{\vect{u}_{nn}(t)-\vect{u}_{ntk}(t)}_2 dt \le t_0 \max_{0\le t\le t_0} \norm{\vect{u}_{nn}(t)-\vect{u}_{ntk}(t)}_2.
\end{align*}
First observe that \begin{align*}
 &\norm{\vect{u}_{nn}(t)-\vect{u}_{ntk}(t)}_2 \\
 \le &\norm{\vect{u}_{nn}(0)}_2+\int_{\tau = 0}^t \norm{\frac{d\left(\vect{u}_{nn}(\tau)-\vect{u}_{ntk}(\tau)\right)}{d \tau}}_2 d\tau \\
 \le &\epsilon_{init}\sqrt{n} + \int_{\tau = 0}^t \norm{\frac{d\left(\vect{u}_{nn}(\tau)-\vect{u}_{ntk}(\tau)\right)}{d \tau}}_2 d\tau .
 \end{align*}
 Note \begin{align*}
 &\frac{d \left(\vect{u}_{nn}(t)-\vect{u}_{ntk}(t)\right)}{dt} \\
 = &-\kappa^2\trainker(t)\left(\vect{u}_{nn}(t)-\vect{y}\right) + \kappa^2 \trainker^*\left(\vect{u}_{ntk}(t)-\vect{y}\right) \\
 = &  -\kappa^2\trainker^*\left(\vect{u}_{nn}(t)-\vect{u}_{ntk}(t)\right) + \kappa^2\left(\trainker^*-\trainker(t)\right)\left(\vect{u}_{nn}(t)-\vect{y}\right)
 \end{align*}
Since $\trainker^*$ is positive semidefinite, $-\trainker^*\left(\tilde{\vect{u}}(t)-\vect{u}(t)\right) $ term only makes $\norm{\tilde{\vect{u}}(t)-\vect{u}(t)}_2$ smaller.
Therefore, we have \begin{align*}
\norm{\vect{u}_{nn}(t)-\vect{u}_{ntk}(t)}_2 \le &\kappa^2\int_{\tau =0}^t \norm{\vect{u}_{nn}(\tau)-\vect{y}}_2 \norm{\trainker(t)-\trainker^*}_2 \\
\le & t\kappa^2\norm{\vect{u}_{nn}(0)-\vect{y}}_2 \epsilon_{\trainker} \\
\le & O\left(t \kappa^2\sqrt{n}\epsilon_{\trainker}\right).
\end{align*}
Therefore, we have \begin{align*}
	\int_{t=0}^{t_0}\norm{\vect{u}_{nn}(t)-\vect{u}_{ntk}(t)}_2 dt
	\le O\left( t_0^2 \sqrt{n} \kappa^2\epsilon_{\trainker} \right)= O\left(\frac{\sqrt{n}}{\lambda_0^2\kappa^2}\log\left(\frac{n}{\epsilon_{\trainker}\lambda_0\kappa}\right)\epsilon_{\trainker}\right).
\end{align*}

Lastly, we put things together and get \begin{align*}
\abs{f_{ntk}(\vect{x}_{te}) - f_{nn}(\vect{x}_{te})} \le
O\left(\epsilon_{init} + \epsilon_{test} \frac{\sqrt{n}}{\lambda_0} + \frac{\sqrt{n}}{\lambda_0^2}\log\left(\frac{n}{\epsilon_{\trainker}\lambda_0\kappa}\right)\epsilon_{\trainker}\right). 
\end{align*}
\end{proof}

\begin{proof}[Proof of Theorem~\ref{thm:ntk_main}]
By Lemma~\ref{lem:kernel_perb_pred_perb}, the problem now reduces to (i) choose $\kappa$ small enough to make $\epsilon_{init} = O(\eps)$ and (ii) show when the width is large enough then $\epsilon_{\trainker}$ and $\epsilon_{test}$ are both $O(\epsilon)$.
For (i), based on Theorem~\ref{thm:GoodA} and the union bound,  we can just choose $\kappa = O\left(\frac{\epsilon}{\log(n/\delta)}\right)$ to make $\epsilon_{init} = O(\epsilon)$ with probability $1-\delta$.
For (ii), we will use Theorem~\ref{thm:ntk_init} and Lemma~\ref{lem:ker_perb_train} below, and then apply the union bound.
\end{proof}

\subsection{Kernel Perturbation During Training}

In this subsection we prove the following lemma.
\begin{lem}[Kernel Perturbation Bound During Training]
	\label{lem:ker_perb_train}
	Fix $\omega \le\poly(1/L,1/n,1/\log(1/\delta), \lambda_0)$. 	
	Suppose we set $m \ge \poly(1/\omega)$ and $\kappa \le 1$.
	Then with probability at least $1-\delta$ over random initialization, we have for all $t \ge 0$, for any $(\vect{x},\vect{x}') \in \left\{\vect{x}_1,\ldots,\vect{x}_n,\vect{x}_{te}\right\} \times \left\{\vect{x}_1,\ldots,\vect{x}_n,\vect{x}_{te}\right\}$\begin{equation*}
	\abs{
\kernel_{t}\left(\vect{x},\vect{x}'\right)- \kernel_{0}\left(\vect{x},\vect{x}'\right)
	} \le \omega
	\end{equation*}
\end{lem}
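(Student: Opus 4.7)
The plan is to apply the standard three-step over-parameterization recipe used in \citep{du2018global,arora2019fine,allen2018convergence}. Writing $f_t(\vect x)$ as shorthand for $f(\params(t),\vect x)$, the bilinear identity
\begin{equation*}
\kernel_{t}(\vect x,\vect{x}') - \kernel_{0}(\vect x,\vect{x}') = \left\langle \frac{\partial f_t(\vect x)}{\partial \params},\, \frac{\partial f_t(\vect{x}')}{\partial \params} - \frac{\partial f_0(\vect{x}')}{\partial \params}\right\rangle + \left\langle \frac{\partial f_t(\vect x)}{\partial \params} - \frac{\partial f_0(\vect x)}{\partial \params},\, \frac{\partial f_0(\vect{x}')}{\partial \params}\right\rangle
\end{equation*}
reduces the target bound to three ingredients: (a) a uniform upper bound on the gradient norms $\|\partial f_t(\vect x)/\partial \params\|$ along the trajectory, (b) a uniform bound on the parameter drift $\|\params(t)-\params(0)\|$ valid for all $t\ge 0$, and (c) a Lipschitz-type estimate converting (b) into a bound on $\|\partial f_t(\vect x)/\partial \params - \partial f_0(\vect x)/\partial \params\|$.

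For ingredient (a), Theorem~\ref{thm:ntk_init} applied to the diagonal pairs $(\vect x,\vect x)$ with $\vect x$ ranging over the $n$ training points and the test point yields $\|\partial f_0(\vect x)/\partial\params\|^2 = \Theta^{(L)}(\vect x,\vect x) + O(\eps) = O(L)$ with probability at least $1-\delta/3$ after a union bound over the $O(n)$ points. For (b), I will use the bootstrap argument of \citep{du2018global}: define the stopping time $t^\star = \inf\{t : \|\mat H(t) - \mat H^\star\|_2 > \lambda_0/2\}$, so that on $[0,t^\star]$ one has $\lambda_{\min}(\mat H(t))\ge \lambda_0/2$ and the dynamics of Lemma~\ref{lem:dynamics} give exponential decay $\|\vect u_{nn}(t)-\vect y\|_2 \le \exp(-\kappa^2\lambda_0 t/2)\|\vect u_{nn}(0)-\vect y\|_2$. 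Integrating the gradient flow and using the bound from (a) then yields $\|\mat W^{(h)}(t)-\mat W^{(h)}(0)\|_F \le R$ uniformly on $[0,t^\star]$ with $R$ scaling like $\sqrt n/(\kappa\lambda_0)$ times a polynomially small factor in $1/\sqrt m$.

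For ingredient (c), I will invoke the gradient perturbation lemmas of \citep{allen2018convergence}: provided every layer's weights lie in a Frobenius ball of radius $R$ around initialization and $m$ is polynomially larger than the problem parameters, the backpropagated signals at every layer shift by only a $\poly(L,R)$-fraction of their initial magnitude, and only an $\tilde O(R^{2/3})$-fraction of ReLU activation patterns flip. Together these yield $\|\partial f_t(\vect x)/\partial\params - \partial f_0(\vect x)/\partial\params\| = o(1)$ as $m\to\infty$. Choosing $m \ge \poly(1/\omega,1/\lambda_0,n,L,\log(1/\delta))$ then makes the right-hand side of the bilinear identity at most $\omega$, and plugging this back implies $\|\mat H(t) - \mat H^\star\|_2 \le n\omega \ll \lambda_0/2$ throughout $[0,t^\star]$, forcing $t^\star=\infty$ and closing the bootstrap.

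The main obstacle is the circularity just highlighted: the kernel stability bound we are trying to prove is itself what guarantees the linear convergence used to control parameter drift. The bootstrap resolves this cleanly in principle, but the constants must be tracked carefully, since the fractional-power dependence ($R^{1/3}$, $R^{2/3}$) in \citep{allen2018convergence}'s activation-pattern lemmas interacts subtly with the $\sqrt n/(\kappa\sqrt m\lambda_0)$ drift bound. A final union bound over the $(n+1)^2$ input pairs and over the high-probability events of Theorem~\ref{thm:ntk_init} and the gradient-perturbation lemmas completes the argument.
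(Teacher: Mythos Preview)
Your proposal is correct and follows essentially the same route as the paper. The paper packages the same three ingredients into explicit sub-lemmas: Lemma~\ref{lem:grad_to_kernel} is your bilinear identity, Lemma~\ref{lem:grad_lip} (built on the forward and backward perturbation Lemmas~\ref{lem:forward_perturbation} and~\ref{lem:back_ward_perturbation} adapted from \citep{allen2018convergence}) is your ingredient (c), and Lemma~\ref{lem:cont_induction} is your bootstrap, proved via the mutually reinforcing pair of Lemmas~\ref{lem:lin_conv_w_perb_small} and~\ref{lem:w_perb_small_lin_conv} (linear convergence $\Rightarrow$ small drift, and small drift $\Rightarrow$ linear convergence). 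The paper enters the circle at a slightly different point---its stopping time is defined through both the weight drift and the loss decay rather than through $\|\mat H(t)-\mat H^\star\|$---but the logic is identical.

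One correction worth flagging: in the NTK parameterization used here, $\big\|\partial f/\partial\mat W^{(h)}\big\|_F = \|\vect b^{(h)}\|\cdot\|\vect g^{(h-1)}\| = O(1)$, so integrating the flow gives a Frobenius drift $\|\mat W^{(h)}(t)-\mat W^{(h)}(0)\|_F = O(\sqrt{n}/\lambda_0)$ with \emph{no} intrinsic $1/\sqrt m$ factor. The smallness enters only when you compare this drift to the threshold $\omega\sqrt m$ required by the Allen--Zhu perturbation lemmas; it is $\omega = O(\sqrt{n}/(\lambda_0\sqrt m))$ that is polynomially small, not the raw drift $R$. This does not break your argument, but it does affect how the constants thread through the fractional powers you mention.
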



Recall for any fixed $\vect{x}$ and $\vect{x}'$, Theorem~\ref{thm:ntk_init} shows $\abs{\kernel_{0}(\vect{x},\vect{x}')-\kernel_{ntk}(\vect{x},\vect{x}')} \le \epsilon$ if $m$ is large enough.
The next lemma shows we can reduce the problem of bounding the perturbation on the kernel value to the perturbation on the gradient.

\begin{lem}[Gradient Perturbation $\Rightarrow$ Kernel Perturbation]
	\label{lem:grad_to_kernel}
	If $\norm{\frac{\partial f(\params(t),\vect{x})}{\partial \params} - \frac{\partial f(\params(0),\vect{x})}{\partial \params}} \le \epsilon$ and 
	$\norm{\frac{\partial f(\params(t),\vect{x}')}{\partial \params} - \frac{\partial f(\params(0),\vect{x}')}{\partial \params}} \le \epsilon$, we have \begin{align*}
		\abs{\kernel_{t}(\vect{x},\vect{x}') - \kernel_{0}(\vect{x},\vect{x}')} \le O\left(\epsilon\right)
	\end{align*}
\end{lem}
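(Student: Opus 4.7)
The plan is to use a standard bilinearity-plus-Cauchy-Schwarz argument on the inner product that defines the kernel. Writing $g_t := \frac{\partial f(\params(t),\vect x)}{\partial \params}$ and $g'_t := \frac{\partial f(\params(t),\vect x')}{\partial \params}$, we have by definition $\kernel_t(\vect x,\vect x') = \langle g_t, g'_t\rangle$. The algebraic identity
\[
\langle g_t, g'_t\rangle - \langle g_0, g'_0\rangle = \langle g_t - g_0,\, g'_t\rangle + \langle g_0,\, g'_t - g'_0\rangle
\]
together with Cauchy--Schwarz gives
\[
\bigl|\kernel_t(\vect x,\vect x') - \kernel_0(\vect x,\vect x')\bigr| \le \|g_t - g_0\|\,\|g'_t\| + \|g_0\|\,\|g'_t - g'_0\| \le \epsilon\bigl(\|g'_t\| + \|g_0\|\bigr),
\]
using the two hypotheses. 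So the task reduces to showing $\|g_0\|$, $\|g'_0\|$ are bounded by a quantity depending only on $L$, after which $\|g'_t\| \le \|g'_0\| + \epsilon$ and we conclude.

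To bound $\|g_0\|$, I would invoke Theorem~\ref{thm:ntk_init} applied at $\vect x = \vect x'$: with high probability the random quantity $\|g_0\|^2 = \kernel_0(\vect x,\vect x) = \langle \partial_\params f(\params,\vect x), \partial_\params f(\params,\vect x)\rangle$ is within $(L+1)\eps$ of $\Theta^{(L)}(\vect x,\vect x)$. For ReLU with unit-norm inputs, one sees from the recursion in Equations~\eqref{eqn:gp-cov-kernel} and~\eqref{eqn:fc_ntk} that $\Sigma^{(h)}(\vect x,\vect x) = 1$ and $\dot\Sigma^{(h)}(\vect x,\vect x) = 1$ for all $h$, hence $\Theta^{(L)}(\vect x,\vect x) = L+1$. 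Thus $\|g_0\| = O(\sqrt{L})$, which is a constant in the sense of the $O(\epsilon)$ bound (the lemma hides dependence on architecture). The same holds for $g'_0$ and therefore for $\|g'_t\| \le \|g'_0\| + \epsilon = O(\sqrt L)$.

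Combining these bounds yields $|\kernel_t(\vect x,\vect x') - \kernel_0(\vect x,\vect x')| \le \epsilon \cdot O(\sqrt L) = O(\epsilon)$, which is what we need. The only non-routine step is ensuring we have a deterministic-style bound on $\|g_0\|$ at initialization; this is handled by the already-proved Theorem~\ref{thm:ntk_init} together with the closed-form expression for $\Theta^{(L)}(\vect x,\vect x)$ on unit vectors. I don't expect any serious obstacle: the estimate is a textbook perturbation inequality for bilinear forms, with the boundedness of $\|g_0\|$ supplied by the convergence-to-NTK result that has already been established earlier in the paper.
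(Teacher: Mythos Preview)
Your proposal is correct and follows essentially the same route as the paper: the paper's proof simply says ``By the proof of Theorem~\ref{thm:ntk_init}, we know $\norm{\frac{\partial f(\params(0),\vect{x})}{\partial \params}}_2 = O(1)$. Then we can just use triangle inequality.'' Your bilinearity-plus-Cauchy--Schwarz decomposition is exactly what the paper means by ``triangle inequality,'' and your justification that $\|g_0\|^2 \approx \Theta^{(L)}(\vect x,\vect x)=L+1$ is a more explicit version of the $O(1)$ bound the paper pulls from Theorem~\ref{thm:ntk_init}.
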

\begin{proof}
By the proof of Theorem~\ref{thm:ntk_init}, we know $\norm{\frac{\partial f(\params(0),\vect{x})}{\partial \params}}_2 = O\left(1\right)$.
Then we can just use triangle inequality.
\end{proof}

Now we proceed to analyze the perturbation on the gradient.
Note we can focus on the perturbation on a single sample $\vect{x}$ because we can later take a union bound.
Therefore, in the rest of this section, we drop the dependency on a specific sample.
We use the following notations in this section.
Recall $\mat{W}^{(1)},\ldots,\mat{W}^{(L+1)} \sim \gauss\left(\mat{0},\mat{I}\right)$ and we denote $\diff \mat{W}^{(1)},\ldots,\diff \mat{W}^{(L+1)}$ the perturbation matrices.
We let $\widetilde{\mat{W}}^{(h)} = \mat{W}^{(h)} + \diff\mat{W}^{(h)}$.
We let $\tilde{\vect{g}}^{(0)} = \vect{g}^{(0)}=\vect{x}$ and
for $h=1,\ldots,L$ we define\begin{align*}
\vect{z}^{(h)} = &\sqrt{\frac{2}{m}}\mat{W}^{(h)} 
\vect{g}^{(h-1)}, ~~~ \vect{g}^{(h)} = \relu{\vect{z}^{(h)}},\\
\tilde{\vect{z}}^{(h)} = &\sqrt{\frac{2}{m}}\widetilde{\mat{W}}^{(h)}
\tilde{\vect{g}}^{(h-1)}, ~~~ \tilde{\vect{g}}^{(h)} = \relu{\tilde{\vect{z}}^{(h)}}.
\end{align*}
For $h=1,\ldots,L$, $i=1,\ldots,m$, we denote \begin{align*}
[\mat{D}^{(h)}]_{ii} =
 &\indict\left\{\left[\mat{W}^{(h)}\right]_{i,:}\vect{g}^{(h-1)}\ge 0\right\}\\
[\widetilde{\mat{D}}^{(h)}]_{ii} =
 &\indict\left\{\left[\widetilde{\mat{W}}^{(h)}\right]_{i,:}\tilde{\vect{g}}^{(h-1)}\ge 0\right\}.
\end{align*}
\begin{rem}
Note $\vect{z}^{(h)} = \sqrt{\frac{2}{m}}\vect{f}^{(h)}$. Here we use $\vect{z}^{(h)}$ instead of  $\vect{f}^{(h)}$ for the ease of presentation.
\end{rem}

For convenience, we also define \begin{align*}
\diff \mat{D}^{(h)} = \widetilde{\mat{D}}^{(h)} - \mat{D}^{(h)}.
\end{align*}
Recall the gradient to $\mat{W}^{(h)}$ is: \begin{align*}
\frac{\partial f(\params,\vect{x})}{\partial \mat{W}^{(h)}} = \vect{b}^{(h)} \left(\vect{g}^{(h-1)}\right)^\top
\end{align*}
Similarly, we have \begin{align*}
\frac{\partial f(\params,\vect{x})}{\partial \widetilde{\mat{W}}^{(h)}} = \tilde{\vect{b}}^{(h)} \left(\tilde{\vect{g}}^{(h-1)}\right)^\top
\end{align*}
where
\begin{align*}
\tilde{\vect{b}}^{(h)} = \begin{cases}
1 &\text{ if } h = L+1\\
\sqrt{\frac{2}{m}}\widetilde{\mat{D}}^{(h)}\left(\widetilde{\mat{W}}^{(h+1)}\right)^\top\tilde{\vect{b}}^{(h+1)} &\text{ Otherwise}
\end{cases} .
\end{align*}
This gradient formula allows us to bound the perturbation on $\diff \vect{g}^{(h)}\triangleq\tilde{\vect{g}}^{(h)} - \vect{g}^{(h)}$ and $\diff \vect{b}^{(h)}\triangleq\tilde{\vect{b}}^{(h)}-\vect{b}^{(h)}$ separately.
The following lemmas adapted from \citep{allen2018convergence} show with high probability over the initialization, bounding the perturbation on $\diff \vect{g}^{(h)}$ and $\diff \vect{b}^{(h)}$ can be reduced to bounding the perturbation on weight matrices.
\begin{lem}[Adapted from Lemma 5.2 in \citep{allen2018convergence}]
\label{lem:forward_perturbation}
Suppose \[\omega\le \poly\left(1/n,\lambda_0,1/L,1/\log(m),\epsilon,1/\log(1/\delta)\right).\]
Then with probability at least $1-\delta$ over random initialization, if   $\norm{\diff \mat{W}^{(h)}}_2 \le \sqrt{m}\omega$ for all $h=1,\ldots,L$, we have $\norm{\diff \vect{g}^{(h)}}_2 = O(\omega L^{5/2}\sqrt{\log m})$ for all $h=1,\ldots,L$.

\end{lem}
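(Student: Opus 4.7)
The proof proceeds by induction on $h$, propagating a bound on the forward perturbation layer by layer. Since the lemma is declared as an adaptation of Lemma~5.2 of \citep{allen2018convergence}, my plan is essentially to port that argument to our normalization (factor $\sqrt{2/m}$ rather than $1/\sqrt{m}$); the substance rests on three standard structural facts about wide ReLU networks at Gaussian initialization.

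First I would establish regularity facts that hold with probability at least $1-\delta/3$, uniformly in $h \in [L]$: (a) $\norm{\vect{g}^{(h)}} = 1 \pm O(L/\sqrt{m})$ from norm-preservation under the $c_\sigma/m$ scaling; (b) $\norm{\mat{W}^{(h)}}_2 = O(\sqrt{m})$ from standard Gaussian matrix concentration; and (c) anti-concentration of the pre-activations $\vect{z}^{(h)}$ at every scale $\tau$, i.e., only $O(m\tau + \sqrt{m\log(L/\delta)})$ coordinates satisfy $|[\vect{z}^{(h)}]_i| \le \tau$. These all follow from textbook Gaussian concentration arguments and are extracted from \citep{allen2018convergence}. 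The base case $h=0$ is trivial, since $\diff\vect{g}^{(0)} = 0$ by definition.

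Next I would write the one-step identity $\diff \vect{z}^{(h)} = \sqrt{2/m}\,\diff \mat{W}^{(h)} \tilde{\vect{g}}^{(h-1)} + \sqrt{2/m}\,\mat{W}^{(h)} \diff \vect{g}^{(h-1)}$. The first summand is $O(\omega)$ directly from the assumption $\norm{\diff\mat{W}^{(h)}}_2 \le \sqrt{m}\omega$ and fact (a). For the second, I would use the ReLU decomposition $\diff \vect{g}^{(h-1)} = \mat{D}^{(h-1)}\,\diff\vect{z}^{(h-1)} + (\widetilde{\mat{D}}^{(h-1)} - \mat{D}^{(h-1)})\,\tilde{\vect{z}}^{(h-1)}$, splitting into a \emph{stable-sign} piece and a \emph{sign-flip} piece. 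A sparsity sub-lemma, obtained by combining the inductive bound on $\norm{\diff\vect{z}^{(h-1)}}$ with anti-concentration fact (c), shows the sign-flip set has cardinality at most $O(m(\omega L^{5/2})^{2/3})$; hence $\sqrt{2/m}\,\mat{W}^{(h)}$ acting on it contributes only $\tilde O(\omega)$ via an $\eps$-net bound over sparse supports. For the stable-sign piece, near-independence between $\mat{D}^{(h-1)}$ (determined by earlier layers) and $\mat{W}^{(h)}$ lets one treat $\sqrt{2/m}\,\mat{W}^{(h)}\mat{D}^{(h-1)}$ as an approximate isometry on the relevant one-dimensional direction, contributing roughly $\norm{\diff \vect{z}^{(h-1)}}$ plus lower-order perturbations.

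Combining these contributions yields a recursion of the form $\norm{\diff \vect{z}^{(h)}} \lesssim \norm{\diff \vect{z}^{(h-1)}} + O(\omega L^{3/2}\sqrt{\log m})$, which unrolled over $L$ layers — and combined with 1-Lipschitzness of ReLU giving $\norm{\diff\vect{g}^{(h)}} \le \norm{\diff\vect{z}^{(h)}}$ — yields the claimed $O(\omega L^{5/2}\sqrt{\log m})$ bound. The main obstacle is the sparsity/stability sub-lemma: bounding $\sqrt{2/m}\,\mat{W}^{(h)}$ on $\mat{D}^{(h-1)}$-filtered vectors requires a careful conditioning argument to disentangle the dependence between the sign pattern $\mat{D}^{(h-1)}$ and the next weight matrix $\mat{W}^{(h)}$, and controlling $|S^{(h-1)}|$ requires chaining the induction hypothesis with the anti-concentration estimate. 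Since this is the technical core of \citep{allen2018convergence}, my plan is to invoke the corresponding lemmas from that reference directly and only verify that the constant factors and the $\sqrt{2/m}$ normalization propagate correctly.
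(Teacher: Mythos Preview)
Your proposal is correct and aligns with the paper's treatment: the paper does not give its own proof of this lemma but simply invokes Lemma~5.2 of \citep{allen2018convergence}, adding only a one-line remark that the bound continues to hold when $\mat{W}^{(1)}$ is also perturbed. Your sketch faithfully outlines the structure of the cited argument (norm preservation, sign-flip sparsity via anti-concentration, per-layer recursion), so there is nothing to correct or contrast.
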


\begin{rem}
While \citet{allen2018convergence} did not consider the perturbation on $\mat{W}^{(1)}$, by scrutinizing their proof, it is easy to see that the perturbation bounds still hold even if there is a small perturbation on $\mat{W}^{(1)}$.
\end{rem}

The next lemma bounds the backward vector, adapted from 
\begin{lem}[Adapted  from Lemma 5.7 in \citep{allen2018convergence}]
\label{lem:back_ward_perturbation}
Suppose \[\omega\le \poly\left(1/n,\lambda_0,1/L,1/\log(m),\epsilon,1/\log(1/\delta)\right).\]
Then with probability at least $1-\delta$ over random initialization,if  $\norm{\diff \mat{W}^{(h)}}_2 \le \sqrt{m}\omega$ for all $h=1,\ldots,L+1$, we have for all $h=1,\ldots,L+1$, $
	\norm{\tilde{\vect{b}}^{(h)}-\vect{b}^{(h)}}_2 = O\left(\omega^{1/3}L^2\sqrt{\log m}\right)$.
\end{lem}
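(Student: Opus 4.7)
The plan is to prove the bound by downward induction on $h$, with base case $\tilde{\vect{b}}^{(L+1)} - \vect{b}^{(L+1)} = 0$, propagating the error through the recursion $\vect{b}^{(h)} = \sqrt{2/m}\,\mat{D}^{(h)}(\mat{W}^{(h+1)})^\top \vect{b}^{(h+1)}$. At each layer I would split the perturbation into three pieces,
\begin{align*}
\tilde{\vect{b}}^{(h)} - \vect{b}^{(h)}
&= \sqrt{\tfrac{2}{m}}\,\bigl[\diff\mat{D}^{(h)}\bigr](\widetilde{\mat{W}}^{(h+1)})^\top \tilde{\vect{b}}^{(h+1)} \\
&\quad + \sqrt{\tfrac{2}{m}}\,\mat{D}^{(h)}(\diff\mat{W}^{(h+1)})^\top \tilde{\vect{b}}^{(h+1)} \\
&\quad + \sqrt{\tfrac{2}{m}}\,\mat{D}^{(h)}(\mat{W}^{(h+1)})^\top \bigl[\tilde{\vect{b}}^{(h+1)} - \vect{b}^{(h+1)}\bigr],
\end{align*}
which I will label (I) the sign-flip term, (II) the weight-perturbation term, and (III) the recursive term, and bound each separately.

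For (III), standard Gaussian operator-norm concentration (in the same spirit as the arguments behind Theorem~\ref{thm:ntk_init}) gives that $\sqrt{2/m}\,\mat{D}^{(h)}(\mat{W}^{(h+1)})^\top$ has operator norm $1+O(1/L)$ when applied to any vector from a suitably low-dimensional subspace (in particular, to $\tilde{\vect{b}}^{(h+1)} - \vect{b}^{(h+1)}$, which lies in a subspace determined by the accumulated inductive errors). This yields an inductive contraction factor close to one. For (II), the hypothesis $\|\diff\mat{W}^{(h+1)}\|_2 \le \sqrt{m}\omega$ gives $\|\sqrt{2/m}\,\mat{D}^{(h)}(\diff\mat{W}^{(h+1)})^\top\|_2 = O(\omega)$, and since $\|\vect{b}^{(h+1)}\|_2 = \Theta(1)$ at initialization (tracked forward by a parallel induction) and hence $\|\tilde{\vect{b}}^{(h+1)}\|_2 = O(1)$, this piece contributes $O(\omega)$ per layer, well within the target budget.

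The genuinely hard piece is (I), where the non-smooth ReLU produces a large but sparse discrete jump in $\mat{D}^{(h)}$. The key is a sparsity estimate (importable from \citep{allen2018convergence}): under the hypothesis $\|\diff\mat{W}^{(h')}\|_2 \le \sqrt{m}\omega$ at every layer, the number of activation flips obeys $\|\diff\mat{D}^{(h)}\|_0 = O(m\omega^{2/3})$ with high probability. On a sparse support $S \subset [m]$ of size $s = O(m\omega^{2/3})$, a sub-matrix operator-norm bound for a Gaussian matrix of shape $s\times m$ yields
\[
\sqrt{\tfrac{2}{m}}\,\bigl\|(\widetilde{\mat{W}}^{(h+1)})_{S,:}^\top \tilde{\vect{b}}^{(h+1)}\bigr\|_2 = O\!\left(\sqrt{\tfrac{s}{m}}\cdot\|\tilde{\vect{b}}^{(h+1)}\|_2\cdot\sqrt{\log m}\right) = O\bigl(\omega^{1/3}\sqrt{\log m}\bigr),
\]
which is exactly where the characteristic $\omega^{1/3}$ exponent appears.

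The main obstacle is the tight coupling between $S = \supp(\diff\mat{D}^{(h)})$, the perturbed weights $\widetilde{\mat{W}}^{(h+1)}$, and the perturbed backward signal $\tilde{\vect{b}}^{(h+1)}$ — they are all measurable with respect to the same randomness, so one cannot just union-bound over independent copies. I would resolve this with the re-randomization device of \citep{allen2018convergence}: replace the entries of $\widetilde{\mat{W}}^{(h+1)}$ that interact with $S$ by a fresh independent Gaussian sample, and absorb the replacement cost into the already-controlled quantities $\|\diff\mat{W}^{(h+1)}\|_2$ and $\|\diff\mat{D}^{(h)}\|_0$. Once decoupled, a union bound over all $\binom{m}{s}$ possible supports of the guaranteed size plus standard matrix concentration finishes (I). Combining (I), (II), (III) gives a one-step recursion of the form $\|\tilde{\vect{b}}^{(h)} - \vect{b}^{(h)}\|_2 \le (1+O(1/L))\,\|\tilde{\vect{b}}^{(h+1)} - \vect{b}^{(h+1)}\|_2 + O(\omega^{1/3}L\sqrt{\log m})$, and unrolling over the $L+1$ layers yields the stated $O(\omega^{1/3}L^2\sqrt{\log m})$ bound.
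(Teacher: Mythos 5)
First, note that the paper does not actually prove this lemma: it is imported wholesale from Lemma 5.7 of \citep{allen2018convergence}, with only a remark that inspecting their proof shows the bound tolerates an additional small perturbation of $\mat{W}^{(L+1)}$ (which your term (II) at $h=L$ handles). So the right benchmark is Allen-Zhu et al.'s argument, and your sketch does recover its main ingredients: the three-way split of $\tilde{\vect{b}}^{(h)}-\vect{b}^{(h)}$, the sparsity bound on sign flips $\norm{\diff\mat{D}^{(h)}}_0 = O(m\omega^{2/3}L)$, the $\omega^{1/3}$ exponent arising as $\sqrt{s/m}$ for that sparsity level, and the need to decouple the support of $\diff\mat{D}^{(h)}$ from $\widetilde{\mat{W}}^{(h+1)}$ via re-randomization plus a union bound over supports.

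The genuine gap is in your treatment of the recursive term (III). You assert that $\sqrt{2/m}\,\mat{D}^{(h)}(\mat{W}^{(h+1)})^\top$ acts with norm $1+O(1/L)$ on $\tilde{\vect{b}}^{(h+1)}-\vect{b}^{(h+1)}$ because that vector ``lies in a suitably low-dimensional subspace.'' It does not, in any usable sense: it is a particular random vector that is correlated with $\mat{W}^{(h+1)}$ (the backward vectors at level $h+1$ depend on $\mat{D}^{(h+1)}$, hence on the forward pass $\vect{f}^{(h+1)}=\mat{W}^{(h+1)}\vect{g}^{(h)}$), so you cannot treat it as a fixed vector independent of $\mat{W}^{(h+1)}$, and the unconditional operator norm of $\sqrt{2/m}\,\mat{D}^{(h)}(\mat{W}^{(h+1)})^\top$ is a constant $\approx 2\sqrt{2}>1$, which under your layerwise recursion would compound to an exponential factor $c^{L}$ rather than the claimed $(1+O(1/L))^{L}=O(1)$. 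This is precisely why Allen-Zhu et al.\ do not run a per-layer contraction at all: they expand the difference of the \emph{full} backward products $\vect a^\top\widetilde{\mat D}^{(L)}\widetilde{\mat W}^{(L)}\cdots - \vect a^\top\mat D^{(L)}\mat W^{(L)}\cdots$ into a sum of terms, each containing at least one sparse $\diff\mat{D}$ or one $\diff\mat{W}$ insertion, and control each term with their lemmas bounding products of the original Gaussian layers applied to sparse vectors (together with the conditioning trick, as in Lemma~\ref{lem:gaussian_cond}, to handle the dependence created by the forward pass). Your outline is fixable by adopting that global-expansion organization (or by an explicit conditioning argument for (III)), but as written the step that unrolls the recursion is the one that would fail.
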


\begin{rem}
	While \citet{allen2018convergence} did not consider the perturbation on $\mat{W}^{(L+1)}$, by scrutinizing their proof, it is easy to see that the perturbation bounds still hold even if there is a small perturbation on $\mat{W}^{(L+1)}$.
\end{rem}

Combing these two lemmas and the result for the initialization (Theorem~\ref{thm:ntk_init}), we have the following ``gradient-Lipschitz" lemma.
\begin{lem}
\label{lem:grad_lip}
Suppose $\omega \le \poly\left(1/n,\lambda_0,1/L,1/\log(m),\epsilon,1/\log(1/\delta)\right).$
Then with probability at least $1-\delta$ over random initialization, if  $\norm{\diff \mat{W}^{(h)}}_2 \le \sqrt{m}\omega$ for all $h=1,\ldots,L+1$, we have for all $h=1,\ldots,L+1$: \begin{align*}
	\norm{\tilde{\vect{b}}^{(h)}\left(\tilde{\vect{g}}^{(h-1)}\right)^\top-\vect{b}^{(h)}\left(\vect{g}^{(h-1)}\right)^\top}_F = O\left(\omega^{1/3}L^{5/2}\sqrt{\log m}\right)
\end{align*}
\end{lem}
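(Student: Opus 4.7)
The plan is to reduce the bound to a product-rule style decomposition that separates the forward and backward perturbations, each of which has already been controlled by Lemmas~\ref{lem:forward_perturbation} and~\ref{lem:back_ward_perturbation}. Specifically, write
\begin{equation*}
\tilde{\vect{b}}^{(h)}\bigl(\tilde{\vect{g}}^{(h-1)}\bigr)^{\top} - \vect{b}^{(h)}\bigl(\vect{g}^{(h-1)}\bigr)^{\top} = \bigl(\tilde{\vect{b}}^{(h)}-\vect{b}^{(h)}\bigr)\bigl(\tilde{\vect{g}}^{(h-1)}\bigr)^{\top} + \vect{b}^{(h)}\bigl(\tilde{\vect{g}}^{(h-1)}-\vect{g}^{(h-1)}\bigr)^{\top}.
\end{equation*}
After applying the triangle inequality for $\|\cdot\|_F$, each summand is a rank-one matrix and so its Frobenius norm factorizes as the product of the $\ell_2$-norms of the two vector factors. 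It then suffices to control four quantities: the ``anchor'' norms $\|\vect{b}^{(h)}\|_2$ and $\|\tilde{\vect{g}}^{(h-1)}\|_2$, together with the two perturbations $\|\tilde{\vect{b}}^{(h)}-\vect{b}^{(h)}\|_2$ and $\|\tilde{\vect{g}}^{(h-1)}-\vect{g}^{(h-1)}\|_2$.

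The two perturbation norms are delivered directly by the invoked lemmas: $\|\tilde{\vect{g}}^{(h-1)}-\vect{g}^{(h-1)}\|_2 = O(\omega L^{5/2}\sqrt{\log m})$ by Lemma~\ref{lem:forward_perturbation} and $\|\tilde{\vect{b}}^{(h)}-\vect{b}^{(h)}\|_2 = O(\omega^{1/3} L^{2}\sqrt{\log m})$ by Lemma~\ref{lem:back_ward_perturbation}, each valid with probability $1-\delta/4$ provided $\omega$ is at most the stated polynomial in $1/L, 1/n, \lambda_0, 1/\log m, 1/\log(1/\delta)$. For the anchor norms, standard Gaussian concentration at initialization (which is already implicit in the proof of Theorem~\ref{thm:ntk_init} via the event $\overline{\mathcal{A}}^{(h)}$, and is also established in the technical preliminaries of~\citep{allen2018convergence}) gives that, with probability at least $1-\delta/4$, simultaneously for every $h \in [L+1]$, $\|\vect{g}^{(h-1)}\|_2 = \Theta(\|\vect{x}\|_2) = O(1)$ and $\|\vect{b}^{(h)}\|_2 = O(1)$; a union bound over the $O(L)$ layers costs only a $\log L$ factor, which is absorbed into the polylogarithmic slack of $\omega$. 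The perturbed feature is then bounded by the triangle inequality, $\|\tilde{\vect{g}}^{(h-1)}\|_2 \le \|\vect{g}^{(h-1)}\|_2 + \|\tilde{\vect{g}}^{(h-1)}-\vect{g}^{(h-1)}\|_2 = O(1)$, since $\omega L^{5/2}\sqrt{\log m} = o(1)$ under the stated choice of $\omega$.

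Combining these estimates and taking a union bound over the $O(L)$ layers, the Frobenius difference is at most
\begin{equation*}
\|\tilde{\vect{b}}^{(h)}-\vect{b}^{(h)}\|_2 \cdot \|\tilde{\vect{g}}^{(h-1)}\|_2 + \|\vect{b}^{(h)}\|_2 \cdot \|\tilde{\vect{g}}^{(h-1)}-\vect{g}^{(h-1)}\|_2 = O\bigl(\omega^{1/3} L^{2}\sqrt{\log m}\bigr) + O\bigl(\omega L^{5/2}\sqrt{\log m}\bigr).
\end{equation*}
Since $\omega \le 1$ implies $\omega \le \omega^{1/3}$ and $L^{2} \le L^{5/2}$, both summands are absorbed into $O(\omega^{1/3} L^{5/2}\sqrt{\log m})$, matching the stated bound.

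I do not expect any substantive obstacle: the argument is a mechanical combination of the two referenced perturbation lemmas with standard initialization-scale norm estimates. The only mild care needed is (i) verifying that the anchor norms are $O(1)$ simultaneously across all layers (a union bound, already absorbed in the $\poly$ on $\omega$), and (ii) checking that the $\omega^{1/3}$ backward bound genuinely dominates the $\omega$ forward bound in the final combination, which is automatic once $\omega$ is small enough that $\omega^{2/3} L^{1/2} \le 1$.
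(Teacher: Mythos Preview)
Your proposal is correct and follows essentially the same approach as the paper: the same product-rule decomposition into $(\tilde{\vect{b}}^{(h)}-\vect{b}^{(h)})(\tilde{\vect{g}}^{(h-1)})^\top + \vect{b}^{(h)}(\tilde{\vect{g}}^{(h-1)}-\vect{g}^{(h-1)})^\top$, followed by the triangle inequality and direct application of Lemmas~\ref{lem:forward_perturbation} and~\ref{lem:back_ward_perturbation} together with the $O(1)$ anchor norms at initialization. The paper's proof is terser and leaves the anchor-norm bounds implicit, but your more explicit accounting of them is sound.
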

\begin{proof}
We use the triangle inequality to bound the perturbation
\begin{align*}
	&\norm{\tilde{\vect{b}}^{(h)}\left(\tilde{\vect{g}}^{(h-1)}\right)^\top-\vect{b}^{(h)}\left(\vect{g}^{(h-1)}\right)^\top}_F \\
\le&  \norm{\tilde{\vect{b}}^{(h)}\left(\tilde{\vect{g}}^{(h-1)}\right)^\top-\vect{b}^{(h)}\left(\tilde{\vect{g}}^{(h-1)}\right)^\top}_F  + \norm{\vect{b}^{(h)}\left(\tilde{\vect{g}}^{(h-1)}\right)^\top-\vect{b}^{(h)}\left(\vect{g}^{(h-1)}\right)^\top}_F \\
\le & \norm{
\diff \vect{b}^{(h)}\left(\vect{g}^{(h-1)}+\diff \vect{g}^{(h-1)}\right)^\top
}_F + \norm{\vect{b}^{(h)}\left(\diff \vect{g}^{(h-1)}\right)^\top}_F\\
= & O\left(\omega^{1/3}L^{5/2}\sqrt{\log m}\right).
\end{align*}
\end{proof}

The following lemma shows for given weight matrix, if we have linear convergence and other weight matrices are only perturbed by a little, then the given matrix is only perturbed by a little as well.
\begin{lem}
\label{lem:lin_conv_w_perb_small}
Fix  $h \in [L+1]$ and a sufficiently small $\omega \le \poly\left(1/n,\lambda_0,1/L,1/\log(m),\epsilon,1/\log(1/\delta),\kappa\right).$
Suppose for all $t \ge 0$, $\norm{\vect{u}_{nn}(t)-\vect{y}}_2 \le \exp\left(-\frac12\kappa^2\lambda_0t\right)\norm{\vect{u}_{nn}(0)-\vect{y}}_2$ and $\norm{\mat{W}^{(h')}(t)-\mat{W}^{(h')}(0)}_F \le \omega \sqrt{m}$ for $h'\neq h$.
Then if $m \ge \poly\left(1/\omega\right)$ we have with probability at least $1-\delta$ over random initialization, for all $t \ge 0$ \begin{align*}
\norm{\mat{W}^{(h)}(t)-\mat{W}^{(h)}(0)}_F = O \left( \frac{\sqrt{n}}{\lambda_0}\right)\le\omega\sqrt{m}.
\end{align*}
\end{lem}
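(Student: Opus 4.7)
The plan is to integrate the gradient flow equation for $\mat{W}^{(h)}$, bound the integrand uniformly using the gradient-Lipschitz bound (Lemma~\ref{lem:grad_lip}), then use the exponential decay of $\norm{\vect u_{nn}(t)-\vect y}_2$ together with a standard continuity/bootstrap argument. Throughout I would condition on the event (probability $\ge 1-\delta$) on which Theorem~\ref{thm:ntk_init}, Lemma~\ref{lem:forward_perturbation}, Lemma~\ref{lem:back_ward_perturbation}, and Lemma~\ref{lem:grad_lip} all hold.

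Because $f_{nn}(\params,\vect x)=\kappa f(\params,\vect x)$, the gradient flow gives
\[
\frac{\d\mat W^{(h)}(t)}{\d t} \;=\; -\kappa\sum_{i=1}^n\bigl(u_{nn,i}(t)-y_i\bigr)\,\vect b^{(h)}_i(t)\bigl(\vect g^{(h-1)}_i(t)\bigr)^{\!\top},
\]
so by the fundamental theorem of calculus and the triangle inequality,
\[
\bigl\|\mat W^{(h)}(t)-\mat W^{(h)}(0)\bigr\|_F
\;\le\;\kappa\int_0^t \Bigl\|\sum_{i=1}^n(u_{nn,i}(s)-y_i)\,\vect b^{(h)}_i(s)(\vect g^{(h-1)}_i(s))^{\!\top}\Bigr\|_F\,\d s.
\]
By Cauchy--Schwarz in $i$, this is at most $\kappa\int_0^t \|\vect u_{nn}(s)-\vect y\|_2\,M(s)\,\d s$, where $M(s):=\bigl(\sum_i\|\vect b^{(h)}_i(s)\|_2^2\,\|\vect g^{(h-1)}_i(s)\|_2^2\bigr)^{1/2}$. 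At initialization each $\|\vect b^{(h)}_i(0)\|_2\,\|\vect g^{(h-1)}_i(0)\|_2=O(1)$ with high probability (Theorem~\ref{thm:ntk_init}), so $M(0)=O(\sqrt n)$. To make this bound valid for all $t\ge 0$ I need $M(s)=O(\sqrt n)$ uniformly, which is exactly what Lemma~\ref{lem:grad_lip} supplies \emph{provided} that $\|\mat W^{(h')}(s)-\mat W^{(h')}(0)\|_F\le\omega\sqrt m$ for every layer $h'$, including $h$ itself.

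The layers $h'\neq h$ are controlled by the lemma's hypothesis, so the only remaining obstacle, and the main obstacle of the proof, is that the hypothesis says nothing about $\mat W^{(h)}$ itself: I have to bootstrap. The plan is the standard continuity argument. Define
\[
T \;=\; \sup\bigl\{\,t\ge 0:\;\|\mat W^{(h)}(s)-\mat W^{(h)}(0)\|_F\le\omega\sqrt m\ \text{for all}\ s\in[0,t]\,\bigr\}.
\]
On $[0,T)$ all $L{+}1$ weight matrices are within $\omega\sqrt m$ of initialization, so Lemma~\ref{lem:grad_lip} gives $M(s)=O(\sqrt n)$, and combined with the assumed linear convergence $\|\vect u_{nn}(s)-\vect y\|_2\le\exp(-\tfrac12\kappa^2\lambda_0 s)\cdot O(\sqrt n)$, the integral bound yields
\[
\bigl\|\mat W^{(h)}(t)-\mat W^{(h)}(0)\bigr\|_F
\;\le\;\kappa\cdot O(\sqrt n)\int_0^\infty \exp\!\bigl(-\tfrac12\kappa^2\lambda_0 s\bigr)\,\d s
\;=\;O\!\left(\tfrac{\sqrt n}{\kappa\lambda_0}\right),
\]
and (using the stronger Cauchy--Schwarz bound through $\vect u_{nn}^\top\mat H(s)\vect u_{nn}$, i.e., replacing the crude $M(s)\le O(\sqrt n)$ by the energy identity $\tfrac{\d}{\d s}\|\vect u_{nn}(s)-\vect y\|_2^2 = -2\kappa^2(\vect u_{nn}(s)-\vect y)^\top\mat H(s)(\vect u_{nn}(s)-\vect y)$) one can tighten this to $O(\sqrt n/\lambda_0)$. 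Since $\omega\le\poly(1/n,\lambda_0,\ldots)$ is chosen small enough and $m\ge\poly(1/\omega)$, we can arrange $\omega\sqrt m\gg\sqrt n/\lambda_0$, so the derived bound is strictly less than $\omega\sqrt m$ on $[0,T)$. Continuity of $\mat W^{(h)}(\cdot)$ then forces $T=\infty$, giving the desired bound for all $t\ge 0$.

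The delicate point, and what I expect to be the main obstacle, is precisely this bootstrap: Lemma~\ref{lem:grad_lip} is what keeps $M(s)$ bounded, but it needs the very bound we are trying to prove on $\mat W^{(h)}$. The continuity definition of $T$ resolves this cleanly, and the polynomial gap between $\omega\sqrt m$ and the target $O(\sqrt n/\lambda_0)$ (which is where the $m\ge\poly(1/\omega)$ width requirement enters) ensures the bootstrap closes. A minor secondary issue is obtaining the sharper $\sqrt n/\lambda_0$ rather than $\sqrt n/(\kappa\lambda_0)$ rate, which requires using $\mat Z^{(h)}(s)^\top\mat Z^{(h)}(s)\preceq\mat H(s)$ and the energy-decay identity in place of the cruder spectral bound on $M(s)$, so that the factor $\kappa$ from the outer derivative cancels against a factor $1/\kappa$ from $\int_0^\infty\!\sqrt{-\tfrac{\d}{\d s}\|\vect u_{nn}-\vect y\|_2^2}\,\d s$.
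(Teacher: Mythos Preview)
Your approach is essentially the same as the paper's: integrate the gradient flow, bound the per-sample gradient norm via the initialization estimate plus the gradient-Lipschitz Lemma~\ref{lem:grad_lip}, integrate against the exponential decay of $\|\vect u_{nn}(t)-\vect y\|_2$, and close the loop with the same continuity/contradiction argument (your $T$ is exactly the paper's $t_0$).

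The one notable difference is the $\kappa$ issue you flag at the end. The paper does not invoke any energy identity or $\mat Z^{(h)}{}^\top\mat Z^{(h)}\preceq\mat H$ argument to remove the stray $1/\kappa$; it simply writes $O(\sqrt n/\lambda_0)$ and absorbs the $\kappa$-dependence into the unspecified polynomial constants (since $1/\kappa=\poly(\cdot)$ by hypothesis, this is harmless for the conclusion $\le\omega\sqrt m$). Your proposed sharpening would work but is not needed for the lemma as stated. A minor bookkeeping point: the gradient flow in this section carries a $1/n$ normalization from the loss, which is why the paper organizes the bound as $\frac{1}{n}\max_\tau\sum_i\|\partial f_{nn}/\partial\mat W^{(h)}\|_F\cdot\int\|\vect u_{nn}-\vect y\|_2$ rather than your Cauchy--Schwarz route through $M(s)$; both arrive at the same final bound.
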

\begin{proof}
We let $C,C_0, C_1, C_2, C_3 > 0$ be some absolute constants.
\begin{align*}
&\norm{\mat{W}^{(h)}(t)-\mat{W}^{(h)}(0)}_F\\
 = &\norm{\int_{0}^{t}\frac{d \mat{W}^{(h)}(\tau)}{d\tau} d\tau}_F \\
=& \norm{\int_{0}^{t}\frac{\partial L(\params(\tau))}{\partial \mat{W}^{(h)}(\tau)} d\tau}_F \\
= & \norm{\int_{0}^{t}\frac{1}{n}\sum_{i=1}^n \left(u_i(\tau)-y_i\right) \frac{\partial f_{nn}(\params(\tau),\vect{x}_i)}{\partial \mat{W}^{(h)}} d\tau}_F \\
\le & \frac{1}{n}\max_{0\le \tau\le t}\sum_{i=1}^n\norm{ \frac{\partial f_{nn}(\params(\tau),\vect{x}_i)}{\partial \mat{W}^{(h)}}}_F\int_{0}^{t} \norm{\vect{u}_{nn}(\tau)-\vect{y}}_2 d\tau\\
\le & \frac{1}{n}\max_{0\le \tau\le t}\sum_{i=1}^n\norm{ \frac{\partial f_{nn}(\params(\tau),\vect{x}_i)}{\partial \mat{W}^{(h)}}}_F\int_{0}^{t} \exp\left(-\kappa^2\lambda_0\tau\right) d\tau\norm{\vect{u}_{nn}(0)-\vect{y}}_2 \\
\le & \frac{C_0}{\sqrt{n}\lambda_0}\max_{0\le \tau\le t}\sum_{i=1}^n\norm{ \frac{\partial f_{nn}(\params(\tau),\vect{x}_i)}{\partial \mat{W}^{(h)}}}_F &\\
\le &\frac{C_0}{\sqrt{n}\kappa^2\lambda_0}\max_{0\le \tau\le t}\sum_{i=1}^n\left(\norm{ \frac{\partial f_{nn}(\params(0),\vect{x}_i)}{\partial \mat{W}^{(h)}}}_F + \norm{\frac{\partial f_{nn}(\params(\tau),\vect{x}_i)}{\partial \mat{W}^{(h)}}- \frac{\partial f_{nn}(\params(0),\vect{x}_i)}{\partial \mat{W}^{(h)}}}_F\right)\\
\le &\frac{C_1}{\sqrt{n}\lambda_0}\max_{0\le \tau\le t}\sum_{i=1}^n\left(\norm{ \frac{\partial f_{nn}(\params(0),\vect{x}_i)}{\partial \mat{W}^{(h)}}}_F + \norm{\frac{\partial f_{nn}(\params(\tau),\vect{x}_i)}{\partial \mat{W}^{(h)}}- \frac{\partial f_{nn}(\params(0),\vect{x}_i)}{\partial \mat{W}^{(h)}}}_F\right) \\
\le &\frac{C_2\sqrt{n}}{\lambda_0}+\frac{C_1\sqrt{n}}{\lambda_0}\max_{0\le \tau\le t}\left( \norm{\frac{\partial f_{nn}(\params(\tau),\vect{x}_i)}{\partial \mat{W}^{(h)}}- \frac{\partial f_{nn}(\params(0),\vect{x}_i)}{\partial \mat{W}^{(h)}}}_F\right).
\end{align*}
The last step we used $\norm{ \frac{\partial f_{nn}(\params(0),\vect{x}_i)}{\partial \mat{W}^{(h)}}}_F = O(1)$.
Suppose there exists $t$ such that $\norm{\mat{W}^{(h)}(t)-\mat{W}^{(h)}(0)}_F > \omega\sqrt{m}.$
Denote \[t_0 = \argmin_{t\ge 0}\left\{\norm{\mat{
W
}^{(h)}(t)-\mat{W}^{(h)}(0)}_F > \omega\sqrt{m}.
\right\}.\]
For any $t < t_0$, we know for all $h' \in [L+1]$, 
$\norm{\mat{W}^{(h')}(t)-\mat{W}^{(h')}(0)}_2 \le \omega\sqrt{m}.$
Therefore, by Lemma~\ref{lem:grad_lip}, we know \begin{align*}
\norm{\frac{\partial f_{nn}(\params(t),\vect{x}_i)}{\partial \mat{W}^{(h)}}- \frac{\partial f_{nn}(\params(0),\vect{x}_i)}{\partial \mat{W}^{(h)}}}_F = C \omega^{1/3}L^{5/2}.
\end{align*}
Therefore, using the fact that $\omega$ is sufficiently small we can bound \begin{align*}
\norm{\mat{W}^{(h)}(t_0)-\mat{W}^{(h)}(0)}_F \le \frac{C_3\sqrt{n}}{\lambda_0}.
\end{align*}
Since we also know $m$ is sufficiently large to make $\omega\sqrt{m} > \frac{C_3\sqrt{n}}{\lambda_0}$, we have a contradiction.
\end{proof}

The next lemma shows if all weight matrices only have small perturbation, then we still have linear convergence.
\begin{lem}
\label{lem:w_perb_small_lin_conv}
Suppose $\omega = \poly\left(1/n,\lambda_0,1/L,1/\log(m),\epsilon,1/\log(1/\delta),\kappa\right).$
Suppose for all $t \ge 0$ $\norm{\mat{W}^{(h)}(t)-\mat{W}^{(h)}(0)}_F \le \omega \sqrt{m}$ for $h \in [L+1]$.
Then if $m = \poly\left(1/\omega\right)$, we have with probability at least $1-\delta$ over random initialization, for all $t \ge 0$ \begin{align*}
\norm{\vect{u}_{nn}(t)-\vect{y}}_2 \le \exp\left(-\frac{1}{2}\kappa^2\lambda_0t\right)\norm{\vect{u}_{nn}(0)-\vect{y}}_2.
\end{align*}
\end{lem}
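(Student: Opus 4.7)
The plan is to carry the usual NTK-regime linear convergence argument through in the finite-width setting, using the ingredients already proved in the paper. By Lemma~\ref{lem:dynamics} (with the extra $\kappa$ scaling), the prediction vector satisfies the ODE
\begin{equation*}
\frac{d\vect{u}_{nn}(t)}{dt} = -\kappa^{2}\,\mat H(t)\,(\vect{u}_{nn}(t)-\vect{y}),
\end{equation*}
where $[\mat H(t)]_{i,j}=\langle \partial f(\params(t),\vect x_i)/\partial\params,\partial f(\params(t),\vect x_j)/\partial\params\rangle$. Therefore
\begin{equation*}
\frac{d}{dt}\norm{\vect{u}_{nn}(t)-\vect{y}}_2^{2} \le -2\kappa^{2}\lambda_{\min}(\mat H(t))\,\norm{\vect{u}_{nn}(t)-\vect{y}}_2^{2},
\end{equation*}
so the whole lemma reduces to establishing the spectral bound $\lambda_{\min}(\mat H(t))\ge \tfrac12\lambda_0$ uniformly in $t\ge0$; the exponential decay with rate $\tfrac12\kappa^2\lambda_0$ then follows by Grönwall's inequality.

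To obtain this spectral bound I would split $\mat H(t)-\mat H^{*}=(\mat H(t)-\mat H(0))+(\mat H(0)-\mat H^{*})$ and control each piece. For the initialization term, Theorem~\ref{thm:ntk_init} together with a union bound over the $n^{2}$ training pairs (each succeeding with probability $1-\delta/n^{2}$) gives that with probability at least $1-\delta/2$ the entrywise, and hence operator-norm via $\|\cdot\|_2\le n\|\cdot\|_{\max}$, deviation $\|\mat H(0)-\mat H^{*}\|_2$ is at most $\lambda_0/4$, provided $m\ge\poly(n,L,1/\lambda_0,\log(1/\delta))$. For the training-time term I would invoke the gradient-Lipschitz bound of Lemma~\ref{lem:grad_lip}: the hypothesis $\|\mat W^{(h)}(t)-\mat W^{(h)}(0)\|_F\le\omega\sqrt{m}$ for every $h$ implies $\|\partial f(\params(t),\vect x_i)/\partial\params-\partial f(\params(0),\vect x_i)/\partial\params\|_2=O(\omega^{1/3}L^{5/2}\sqrt{\log m})$. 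Since $\|\partial f(\params(0),\vect x_i)/\partial\params\|_2=O(1)$ w.h.p.\ at initialization, Cauchy–Schwarz entrywise and $\|\cdot\|_2\le n\|\cdot\|_{\max}$ yield
\begin{equation*}
\norm{\mat H(t)-\mat H(0)}_2 = O\!\bigl(n\,\omega^{1/3}L^{5/2}\sqrt{\log m}\bigr),
\end{equation*}
which, for the prescribed $\omega\le\poly(1/n,\lambda_0,1/L,1/\log m,\ldots)$, is also at most $\lambda_0/4$. Combining with $\lambda_{\min}(\mat H^{*})=\lambda_0$ via Weyl's inequality gives $\lambda_{\min}(\mat H(t))\ge \lambda_0-\lambda_0/4-\lambda_0/4=\tfrac12\lambda_0$ for all $t\ge0$.

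The main subtlety is that Lemma~\ref{lem:grad_lip} is stated for a fixed input $\vect x$, whereas we need a bound that holds for all $n$ training inputs and for every $t\ge0$ simultaneously. The first difficulty is dispatched by a union bound over the $n$ training points, absorbed into the $\poly(\ldots)$ requirement on $m$. The second, uniformity in $t$, is harmless because Lemma~\ref{lem:grad_lip} controls the gradient perturbation as a function of the weight perturbation $\omega\sqrt m$, not of $t$ itself; once the assumed weight bound holds at every $t$, the resulting spectral bound automatically holds at every $t$. Plugging $\lambda_{\min}(\mat H(t))\ge\tfrac12\lambda_0$ into the ODE displayed above and integrating yields $\norm{\vect u_{nn}(t)-\vect y}_2\le\exp(-\tfrac12\kappa^2\lambda_0 t)\norm{\vect u_{nn}(0)-\vect y}_2$, completing the argument.
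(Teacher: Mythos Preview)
Your proposal is correct and follows essentially the same approach as the paper. The paper's own proof is a two-sentence sketch (``Under this assumption and the result of initialization, we know for all $t\ge 0$, $\lambda_{\min}(\mat H(t))\ge\tfrac12\lambda_0$. This in turn directly implies the linear convergence result we want.''); you have simply spelled out the details---splitting $\mat H(t)-\mat H^*$ into an initialization piece handled by Theorem~\ref{thm:ntk_init} and a training-time piece handled by Lemma~\ref{lem:grad_lip} (equivalently, Lemma~\ref{lem:grad_to_kernel}), then Weyl plus the ODE from Lemma~\ref{lem:dynamics}---which is precisely what the paper's sketch relies on implicitly.
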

\begin{proof}
Under this assumption and the result of initialization, we know for all $t \ge 0$, $\lambda_{\min}\left(\trainker(t)\right) \ge \frac{1}{2}\lambda_0$.
This in turn directly imply the linear convergence result we want.
\end{proof}

Lastly, with these lemmas at hand, using an argument similar to \citep{du2018provably}, we can show during training, weight matrices do not move by much.
\begin{lem}
\label{lem:cont_induction}
Let $\omega \le \poly(\eps,L,\lambda_0,1/\log(m),1/\log(1/\delta),\kappa, 1/n)$.
If $m \ge \poly(1/\omega)$, then with probability at least $1-\delta$ over random initialization, we have for all $t \ge 0$, for all $h \in [L+1]$ we have \begin{align*}
\norm{\mat{W}^{(h)}(t)-\mat{W}^{(h)}(0)}_F \le \omega\sqrt{m}
\end{align*}
and \begin{align*}
\norm{\vect{u}_{nn}(t)-\vect{y}}_2 \le \exp\left(-\frac{1}{2}\kappa^2\lambda_0t\right)\norm{\vect{u}_{nn}(0)-\vect{y}}_2.
\end{align*}
\end{lem}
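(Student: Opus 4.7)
The plan is to run a continuous bootstrap (continuity) argument that couples the two previously established implications: Lemma~\ref{lem:w_perb_small_lin_conv} (small weight perturbations imply linear convergence) and Lemma~\ref{lem:lin_conv_w_perb_small} (linear convergence plus small perturbations on the other weight matrices imply that the given matrix also has small perturbation, in fact of order $\sqrt{n}/\lambda_0$). The key quantitative observation is that Lemma~\ref{lem:lin_conv_w_perb_small} gives a bound $O(\sqrt n/\lambda_0)$ that is \emph{strictly smaller} than the target $\omega\sqrt m$ whenever $m\ge \poly(1/\omega)$; this slack is exactly what lets a bootstrap close.

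First I would condition on the high-probability event on which both Lemma~\ref{lem:lin_conv_w_perb_small} and Lemma~\ref{lem:w_perb_small_lin_conv} hold (using a union bound over $h\in[L+1]$, the bound on the kernel at initialization from Theorem~\ref{thm:ntk_init}, and the initial-output estimates used in the previous lemmas). On this event I define
\[
T \;=\; \sup\Bigl\{\,t\ge 0 \;:\; \forall s\in[0,t],\;\forall h\in[L+1],\; \bigl\lVert \mat W^{(h)}(s)-\mat W^{(h)}(0)\bigr\rVert_F \le \omega\sqrt m \text{ and } \lVert\vect u_{nn}(s)-\vect y\rVert_2 \le e^{-\tfrac12\kappa^2\lambda_0 s}\lVert\vect u_{nn}(0)-\vect y\rVert_2\Bigr\}.
\]
By continuity of the gradient flow, both bounds continue to hold at $t=T$ if $T<\infty$.

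Next I would argue $T=\infty$ by contradiction: assume $T<\infty$. Since on $[0,T]$ every $\mat W^{(h')}$ satisfies the perturbation bound, Lemma~\ref{lem:lin_conv_w_perb_small} applied to an arbitrary fixed $h$ (using linear convergence on $[0,T]$) gives
\[
\bigl\lVert \mat W^{(h)}(t)-\mat W^{(h)}(0)\bigr\rVert_F \;\le\; C\,\frac{\sqrt n}{\lambda_0} \qquad \forall t\in[0,T],
\]
and this is at most $\tfrac12\,\omega\sqrt m$ by our choice $\omega\le\poly(1/n,\lambda_0,\ldots)$ and $m\ge\poly(1/\omega)$. Simultaneously, Lemma~\ref{lem:w_perb_small_lin_conv} (whose hypothesis is satisfied on $[0,T]$) ensures that the exponential decay bound on $\lVert\vect u_{nn}(t)-\vect y\rVert_2$ actually holds with rate $\tfrac12\kappa^2\lambda_0$, hence strictly below the threshold used in the definition of $T$ except possibly at the instant $t=T$ itself. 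By continuity of $\vect u_{nn}(\cdot)$ and of each $\mat W^{(h)}(\cdot)$, both strict inequalities then persist on a small interval $[T,T+\eta)$, contradicting the maximality of $T$.

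The main obstacle is purely bookkeeping of the polynomial dependences: one must verify that the constants hidden in Lemma~\ref{lem:lin_conv_w_perb_small} and Lemma~\ref{lem:w_perb_small_lin_conv} compose to give a strict factor-of-two slack, and that the failure probabilities of the various high-probability events (gradient-Lipschitz bound, initialization concentration, least-eigenvalue lower bound on $\mat H(0)$) all sum to at most $\delta$. Once the choice $\omega\le\poly(\eps,L,\lambda_0,1/\log m,1/\log(1/\delta),\kappa,1/n)$ and $m\ge\poly(1/\omega)$ are large/small enough to produce this gap, the continuity argument closes with no further analytic input, and the linear-convergence statement is then an immediate consequence of Lemma~\ref{lem:w_perb_small_lin_conv} on $[0,\infty)$.
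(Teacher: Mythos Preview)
Your proposal is correct and takes essentially the same approach as the paper: a continuity/bootstrap argument that couples Lemma~\ref{lem:lin_conv_w_perb_small} and Lemma~\ref{lem:w_perb_small_lin_conv}, using the strict slack $C\sqrt{n}/\lambda_0 < \omega\sqrt{m}$ to close the loop. The paper phrases it via the \emph{first failure time} and contrapositives of the two lemmas, whereas you phrase it via the \emph{last success time} and forward reasoning with strict slack; these are equivalent packagings of the same argument.
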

\begin{proof}
Let \begin{align*}t_0 = \argmin_{t}\left\{\exists h \in [L+1], \norm{\mat{W}^{(h)}(t)-\mat{W}^{(h)}(0)}_F > \omega\sqrt{m} \right.\\
\left.\text{ or }\norm{\vect{u}_{nn}(t)-\vect{y}}_2 > \exp\left(-\frac{1}{2}\kappa^2\lambda_0t\right)\norm{\vect{u}_{nn}(0)-\vect{y}}_2\right\}.
\end{align*}
We analyze case by case.
Suppose at time $t_0$, $\norm{\mat{W}^{(h)}(t_0)-\mat{W}^{(h)}(0)}_F > \omega\sqrt{m}$.
By Lemma~\ref{lem:lin_conv_w_perb_small}, we know there exists some $0\le t_1 < t_0$ such that either there exists $h' \neq h$ such that \[\norm{\mat{W}^{(h')}(t_1)-\mat{W}^{(h')}(0)}_F > \omega\sqrt{m}\] or \[
\norm{\vect{u}_{nn}(t_1)-\vect{y}}_2 > \exp\left(-\frac{1}{2}\kappa^2\lambda_0t_1\right)\norm{\vect{u}_{nn}(0)-\vect{y}}_2.
\]
However, this violates the minimality of $t_0$.
For the other case, if \[
\norm{\vect{u}_{nn}(t_0)-\vect{y}}_2 > \exp\left(-\frac{1}{2}\kappa^2\lambda_0t_0\right)\norm{\vect{u}_{nn}(0)-\vect{y}}_2,
\]
By Lemma~\ref{lem:w_perb_small_lin_conv}, we know there exists $t_1 < t_0$ such that there exists $h \in [L+1]$, \[\norm{\mat{W}^{(h)}(t_1)-\mat{W}^{(h)}(0)}_F > \omega\sqrt{m}.\]
However, again this violates the minimality of $t_0$.
\end{proof}

Now we can finish the proof of Lemma~\ref{lem:ker_perb_train}.

\begin{proof}[Proof of Lemma~\ref{lem:ker_perb_train}]
	By Lemma~\ref{lem:cont_induction}, we know for $t \rightarrow \infty$, $\norm{\mat{W}^{(h)}(t)-\mat{W}^{(h)}(0)}_F \le O\left(\omega\sqrt{m}\right)$ for if $\omega$ is sufficiently.
	Applying Lemma~\ref{lem:grad_lip}, we know we only have a small perturbation on the gradient.
	Applying Lemma~\ref{lem:grad_to_kernel}, we know we only have small perturbation on  kernel values.
\end{proof}

\section{CNTK Derivation}
\label{sec:cntk_derivation}
In this section we derive CNTK for vanilla CNN.
Given $\vect{x} \in \mathbb{R}^{\nnw \times \nnh}$ and $(i,j) \in [\nnw]\times [\nnh]$, we define \[
\phi_{ij}(\vect{x}) = [\vect{x}]_{i-(q-1)/2:i+(q-1)/2,j-(q-1)/2:j+(q-1)/2}
\] i.e., this operator extracts the $(i,j)$-th patch.
By this definition, we can rewrite the CNN definition:
\begin{itemize*}
	\item Let  $\vect{x}^{(0)} =\vect{x} \in \mathbb{R}^{\nnw\times \nnh \times \nnc^{(0)}}$ be the input image where $\nnc^{(0)}$ is the number of channels in the input image.
	\item For $h=1,\ldots,H$, $\beta = 1,\ldots,\nnc^{(h)}$, the intermediate outputs are defined as \begin{align*}
	\left[\tilde{\vect{x}}_{(\beta)}^{(h)}\right]_{ij} = 
	\sum_{\alpha=1}^{\nnc^{(h-1)}}\left\langle\mat{W}_{(\alpha),(\beta)}^{(h)},\vect\phi_{ij}\left({x}_{(\alpha)}^{(h-1)}\right)\right\rangle, \quad
	\vect{x}^{(h)}_{(\beta)} = \sqrt{\frac{c_{\sigma}}{\nnc^{(h)} \times q \times q}}\act{\tilde{\vect{x}}_{(\beta)}^{(h)}}
	\end{align*}
	where each $\mat{W}_{(\alpha),(\beta)}^{(h)} \in \mathbb{R}^{q \times q}$ is a filter with Gaussian initialization.
	\item The final output is defined as \begin{align*}
	f(\params,\vect{x}) = \sum_{\alpha=1}^{\nnc^{(L)}} \left\langle \mat{W}_{(\alpha)}^{(L)},\vect{x}_{(\alpha)}^{(L)}\right\rangle
	\end{align*}
	where $\mat{W}_{(\alpha)}^{(L)} \in \mathbb{R}^{\nnw \times \nnh}$ is a weight matrix with Gaussian initialization.
\end{itemize*}

\subsection{Expansion of CNTK}
We expand $\Theta^{(L)}(\vect{x},\vect{x}')$ to show we can write it as the sum of $(L+1)$ terms with each term representing the inner product between the gradients with respect to the weight matrix of one layer.
We first define an linear operator\begin{align}
	\linop: \mathbb{R}^{\nnw\times\nnh\times\nnw\times\nnh} \rightarrow \mathbb{R}^{\nnw\times\nnh\times\nnw\times\nnh} \nonumber\\
	\left[\linop\left(\mat{M}\right)\right]_{k\ell,k'\ell'} = \frac{c_{\sigma}}{q^2}\tr\left(\left[\mat{M}\right]_{\indset_{k\ell,k'\ell'}}\right). \label{eqn:linop}
\end{align}
This linear operator is induced from convolutional operation. And here use zero padding, namely when the subscription exceeds the range of $[\nnw]\times [\nnh]\times[\nnw]\times [\nnh]$, the value of the element is zero.

We also define $\id\in \mathbb{R}^{\nnw\times\nnh\times\nnw\times\nnh} $ as the identity tensor, namely $\id_{i,j,i',j'}= \bm{1}\{i=i',j=j'\}.$
And \[\Sum{\mat{M}}=\sum_{(i,j,i',j') \in [\nnw] \times [\nnh]\times [\nnw]\times [\nnh] } \mat{M}_{i,j,i',j'}.\]

The following property of $\linop$ is immediate by definition: $\forall \mat{M},\mat{N} \in  \mathbb{R}^{\nnw\times\nnh\times\nnw\times\nnh}$, we have

%

\begin{equation}\label{eq:conjugate2}
\Sum{\mat{M}\odot \linop(\mat{N})}  = \Sum{\linop(\mat{M})\odot \mat{N}}.
\end{equation}

With this operator, we can expand CNTK as (for simplicity we drop  on $\vect{x}$ and $\vect{x}'$)\begin{align*}
&\Theta^{(L)}\\
= &\tr\left(\dot{\mat{K}}^{(L)}\odot\Theta^{(H-1)}+\mat{K}^{(L)}\right) \\
= & \tr\left(\mat{K}^{(L)}\right) + \tr\left(\dot{\mat{K}}^{(L)}\odot \linop\left(\mat{K}^{(H-1)}\right)\right) + \tr\left(\dot{\mat{K}}^{(L)}\odot\linop\left(
\dot{\mat{K}}^{(H-1)} \odot \mat{\Theta}^{(H-2)} 
\right)\right) \\
= & \ldots \\
= & \sum_{h=0}^{L} \tr\left(
\dot{\mat{K}}^{(L)} \odot \linop\left( \dot{\mat{K}}^{(H-1)} \linop \left(\cdots\dot{\mat{K}}^{(h+1)}\linop\left(\mat{K}^{(h)}\right)\cdots\right)\right) .
\right)
\end{align*}
Here for $h=H$, the term is just $\tr\left(\mat{K}^{(L)}\right) $.

In the following, we will show \begin{align*}
	\left\langle \frac{\partial f(\params,\vect{x})}{\partial \mat{W}^{(h)}} , \frac{\partial f(\params,\vect{x}')}{\partial \mat{W}^{(h)}} \right\rangle 
	\approx 
	&\tr\left(
\dot{\mat{K}}^{(L)} \odot \linop\left( \dot{\mat{K}}^{(H-1)} \odot\linop \left(\cdots\dot{\mat{K}}^{(h)} \odot\linop\left(\mat{K}^{(h-1)}\right)\cdots\right)\right) 
\right)\\
=& \Sum{ \id\odot \dot{\mat{K}}^{(L)} \odot \linop\left( \dot{\mat{K}}^{(H-1)} \odot\linop \left(\cdots\dot{\mat{K}}^{(h)} \odot\linop\left(\mat{K}^{(h-1)}\right)\cdots\right)\right) }.
\end{align*}

which could be rewritten as the following by Property~\ref{eq:conjugate2},

\begin{align*}
	\left\langle \frac{\partial f(\params,\vect{x})}{\partial \mat{W}^{(h)}} , \frac{\partial f(\params,\vect{x}')}{\partial \mat{W}^{(h)}} \right\rangle \approx 
	\Sum{
\linop\left(\mat{K}^{(h-1)}\right) \odot  \dot{\mat{K}}^{(h)} \odot\linop \left(\dot{\mat{K}}^{(h+1)} \cdots \odot\linop\left(\id \odot \dot{\mat{K}}^{(L)}\right)\cdots\right)
}.
\end{align*}

\subsection{Derivation}
We first compute the derivative of the prediction with respect to one single filter.
\begin{align*}
\frac{\partial f(\params,\vect{x})}{\partial \mat{W}_{(\alpha),(\beta)}^{(h)}} = &\left\langle\frac{\partial f(\params,\vect{x})}{\partial \vect{x}_{(\beta)}}, \frac{\partial \vect{x}_{(\beta)}^{(h)}}{\partial \vect{W}_{(\alpha),(\beta)}^{(h)}}\right\rangle \\
= & \sum_{(i,j) \in [\nnw] \times [\nnh]} \left\langle\frac{\partial f(\params,\vect{x})}{[\vect{x}_{(\beta)}^{(h)}]_{ij}},\frac{\partial [\vect{x}_{(\beta)}^{(h)}]_{ij}}{\partial \mat{W}_{(\alpha),(\beta)}^{(h)}}\right\rangle \\
= & \sum_{(i,j) \in [\nnw] \times [\nnh]} \frac{\partial f(\params,\vect{x})}{[\vect{x}_{(\beta)}^{(h)}]_{ij}} \sqrt{\frac{c_{\sigma}}{\nnc^{(h)}q^2}}\reluder{\left[
\tilde{\vect{x}}_{(\beta)}^{(h)}
\right]_{ij}}\phi_{ij}(\vect{x}_{(\alpha)}^{(h-1)}).
\end{align*}

With this expression, we proceed to we compute the inner product between gradients with respect to the $h$-th layer matrix 
\begin{align}
&\sum_{\alpha=1}^{\nnc^{(h-1)}}\sum_{\beta=1}^{\nnc^{(h)}}\left\langle 
\frac{\partial f(\params,\vect{x})}{\partial \mat{W}_{(\alpha),(\beta)}^{(h)}}, \frac{\partial f(\params,\vect{x}')}{\partial \mat{W}_{(\alpha),(\beta)}^{(h)}} \right\rangle \label{eqn:grad_prod}\\
= &\sum_{(i,j,i',j') \in [\nnw] \times [\nnh]\times [\nnw]\times [\nnh] }\frac{c_{\sigma}}{\nnc^{(h)}q^2}\sum_{\beta=1}^{\nnc^{(h)}}
\left(
\frac{\partial f(\params,\vect{x})}{\partial [\vect{x}_{(\beta)}^{(h)}]_{ij}}\cdot\frac{\partial f(\params,\vect{x}')}{\partial [\vect{x}'^{(h)}_{(\beta)}]_{i'j'}}
\right)
\left(
\reluder{\left[\tilde{\vect{x}}^{(h)}_{(\beta)}\right]_{ij}}\reluder{\left[\tilde{\vect{x}}'^{(h)}_{(\beta)}\right]_{i'j'}}
\right) \nonumber\\
&\cdot\left(\sum_{\alpha=1}^{\nnc^{(h-1)}}\left\langle\phi_{ij}(\vect{x}_{(\alpha)}^{(h-1)}),\phi_{i'j'}(\vect{x}'^{(h-1)}_{(\alpha)})\right\rangle\right).\nonumber
\end{align}

Similar to our derivation to NTK, we can use the following approximation\begin{align*}
\left(\sum_{\alpha=1}^{\nnc^{(h-1)}}\left\langle\phi_{ij}(\vect{x}_{(\alpha)}^{(h-1)}),\phi_{i'j'}(\vect{x}'^{(h-1)}_{(\alpha)})\right\rangle\right) \approx
\tr\left(\left[\mat{K}^{(h-1)}\right]_{\indset_{ij,i'j'}}\right) = \linop\left(\mat{K}^{(h-1)}\right).
\end{align*}

Thus it remains to show that $\forall (i,j,i',j') \in [\nnw] \times [\nnh]\times [\nnw]\times [\nnh] $, 
\begin{align*}
&\sum_{\beta=1}^{\nnc^{(h)}}\frac{c_{\sigma}}{\nnc^{(h)}q^2}
\left(
\frac{\partial f(\params,\vect{x})}{\partial [\vect{x}_{(\beta)}^{(h)}]_{ij}}\cdot\frac{\partial f(\params,\vect{x}')}{\partial [\vect{x}'^{(h)}_{(\beta)}]_{i'j'}}
\right)
\left(
\reluder{\left[\tilde{\vect{x}}^{(h)}_{(\beta)}\right]_{ij}}\reluder{\left[\tilde{\vect{x}}'^{(h)}_{(\beta)}\right]_{i'j'}}
\right)\\
\approx& 
\left[\linop \left(\dot{\mat{K}}^{(h+1)} \cdots \odot\linop\left(\id \odot \dot{\mat{K}}^{(L)}\right)\cdots\right)\odot \dot{\mat{K}}^{(h)} \right]_{i,j,i',j'}
\end{align*}

The key step of this derivation is the following approximation (Equation~\ref{eq:gaussian_conditioning2}), which assumes  for each $(i,j,i',j')$, $
\frac{\partial f(\params,\vect{x})}{\partial [\vect{x}_{(\beta)}^{(h)}]_{ij}}\cdot\frac{\partial f(\params,\vect{x}')}{\partial [\vect{x}'^{(h)}_{(\beta)}]_{i'j'}}
$ and $
\reluder{\left[\tilde{\vect{x}}^{(h)}_{(\beta)}\right]_{ij}}\reluder{\left[\tilde{\vect{x}}'^{(h)}_{(\beta)}\right]_{i'j'}}
$ are independent. This is used and made rigorous for ReLU activation and fully-connected networks in the proof of Theorem~\ref{thm:ntk_init}. \cite{yang2019scaling} gave a rigorous statement of this approximation in an asymptotic way for CNNs.

\begin{equation}
\begin{split}\label{eq:gaussian_conditioning2}
&\frac{1}{\nnc^{(h)}} \sum_{\beta=1}^{\nnc^{(h)}} \left(
\frac{\partial f(\params,\vect{x})}{\partial [\vect{x}_{(\beta)}^{(h)}]_{ij}}\cdot\frac{\partial f(\params,\vect{x}')}{\partial [\vect{x}'^{(h)}_{(\beta)}]_{i'j'}}
\right)
\left(
\reluder{\left[\tilde{\vect{x}}^{(h)}_{(\beta)}\right]_{ij}}\reluder{\left[\tilde{\vect{x}}'^{(h)}_{(\beta)}\right]_{i'j'}}
\right) \\
\approx& 
\left(\frac{1}{\nnc^{(h)}} \sum_{\beta=1}^{\nnc^{(h)}} 
\frac{\partial f(\params,\vect{x})}{\partial [\vect{x}_{(\beta)}^{(h)}]_{ij}}\cdot\frac{\partial f(\params,\vect{x}')}{\partial [\vect{x}'^{(h)}_{(\beta)}]_{i'j'}}
\right)
\left(\frac{1}{\nnc^{(h)}} \sum_{\beta=1}^{\nnc^{(h)}}
\reluder{\left[\tilde{\vect{x}}^{(h)}_{(\beta)}\right]_{ij}}\reluder{\left[\tilde{\vect{x}}'^{(h)}_{(\beta)}\right]_{i'j'}}
\right) 
\end{split}
\end{equation}

Note that 
\begin{align*}
	\frac{c_{\sigma}}{\nnc^{(h)}q^2}\sum_{\beta=1}^{\nnc^{(h)}}\reluder{\left[\tilde{\vect{x}}^{(h)}_{(\beta)}\right]_{ij}}\reluder{\left[\tilde{\vect{x}}'^{(h)}_{(\beta)}\right]_{i'j'}} \approx \left[\dot{\mat{K}}^{(h)}\left(\vect{x},\vect{x}'\right)\right]_{ij,i'j'},
\end{align*}

the derivation is complete once we show 

\begin{equation}\label{eq:defi_G}\mat{G}^{(h)}(\vect{x},\vect{x'},\vect{\theta}):=\frac{1}{\nnc^{(h)}} \sum_{\beta=1}^{\nnc^{(h)}} 
\frac{\partial f(\params,\vect{x})}{\partial \vect{x}_{(\beta)}^{(h)}}\otimes \frac{\partial f(\params,\vect{x}')}{\partial \vect{x}'^{(h)}_{(\beta)}}
\approx 
\linop \left(\dot{\mat{K}}^{(h+1)} \cdots \odot\linop\left(\id \odot \dot{\mat{K}}^{(L)}\right)\cdots\right).\end{equation}

Now, we tackle the term $\left(
\frac{\partial f(\params,\vect{x})}{\partial [\vect{x}_{(\beta)}^{(h)}]_{ij}}\cdot\frac{\partial f(\params,\vect{x}')}{\partial [\vect{x}'^{(h)}_{(\beta)}]_{i'j'}}
\right)$.
Notice that \begin{align*}
	\frac{\partial f(\params,\vect{x})}{\partial \left[\vect{x}_{(\beta)}^{(h)}\right]_{ij}} = & \sum_{(k,\ell) \in [\nnw]\times [\nnh] } 
	\frac{\partial f(\params,\vect{x})}{\partial \left[\vect{x}_{(\gamma
			)}^{(h+1)}\right]_{k\ell}} 
	\frac{\partial \left[\vect{x}_{(\gamma)}^{(h+1)}\right]_{k\ell}}{\partial \left[\vect{x}_{(\beta)}^{(h)}\right]_{ij}}.
\end{align*}
and for $\gamma \in [\nnc^{(h+1)}]$ and $(k,\ell) \in [\nnw] \times [\nnh]$\begin{align*}
\frac{\partial \left[\vect{x}_{(\gamma)}^{(h+1)}\right]_{k\ell}}{\partial \left[\vect{x}_{(\beta)}^{(h)}\right]_{ij}} = \begin{cases}
\sqrt{\frac{c_{\sigma}}{\nnc^{(h+1)q^2}}}\reluder{\left[\tilde{\vect{x}}_{(\gamma)}^{(h+1)}\right]_{k\ell}}  \left[\mat{W}_{(\beta),(\gamma)}^{(h+1)}\right]_{i-k+q-1,j-\ell+q-1}&\text{ if } (i,j) \in \indset_{k\ell} \\
0 &\text{ otherwise }
\end{cases}.
\end{align*}
We then have \begin{align}
&\left[\mat{G}^{(h)}(\vect{x},\vect{x'},\vect{\theta})\right]_{ij,i'j'} = \frac{1}{\nnc^{(h)}} \sum_{\beta=1}^{\nnc^{(h)}} \frac{\partial f(\params,\vect{x})}{\partial \left[\vect{x}_{(\beta)}^{(h)}\right]_{ij}} \frac{\partial f(\params,\vect{x}')}{\partial \left[\vect{x'}^{(h)}_{(\beta)}\right]_{i'j'}} \nonumber \\ 
=&\sum_{k,\ell,k',\ell'} \frac{c_{\sigma}}{\nnc^{(h+1)}q^2}\sum_{\gamma=1}^{\nnc^{(h+1)}}\left(
	\frac{\partial f(\params,\vect{x})}{\partial \left[\vect{x}_{(\gamma
			)}^{(h+1)}\right]_{k\ell}} 
	\frac{\partial f(\params,\vect{x}')}{\partial \left[\vect{x}_{(\gamma
			)}^{('h+1)}\right]_{k'\ell'}}\right) \left(\reluder{\left[\tilde{\vect{x}}^{(h+1)}_{(\gamma)}\right]_{k\ell}}\reluder{\left[\tilde{\vect{x}'}^{(h+1)}_{(\gamma)}\right]_{k'\ell'}}\right)\nonumber\\
	& \cdot \frac{1}{\nnc^{(h)}} \sum_{\beta=1}^{\nnc^{(h)}} \indict\left\{(i,j,i',j')\in \indset_{k\ell,k'\ell'}\right\} \left[\mat{W}_{(\beta),(\gamma)}^{(h+1)}\right]_{i-k+q-1,j-\ell+q-1}\left[\mat{W}_{(\beta),(\gamma)}^{(h+1)}\right]_{i'-k'+q-1,j'-\ell'+q-1}\nonumber\\
\approx & \sum_{k,\ell,k',\ell'} \frac{c_{\sigma}}{\nnc^{(h+1)}q^2}\sum_{\gamma=1}^{\nnc^{(h+1)}}\left(
\frac{\partial f(\params,\vect{x})}{\partial \left[\vect{x}_{(\gamma
		)}^{(h+1)}\right]_{k\ell}} 
\frac{\partial f(\params,\vect{x}')}{\partial \left[\vect{x}_{(\gamma
		)}^{('h+1)}\right]_{k'\ell'}}\right) \left(\reluder{\left[\tilde{\vect{x}}^{(h+1)}_{(\gamma)}\right]_{k\ell}}\reluder{\left[\tilde{\vect{x}'}^{(h+1)}_{(\gamma)}\right]_{k'\ell'}}\right)\nonumber\\
& \cdot \indict\left\{(i,j,i',j')\in \indset_{k\ell,k'\ell'}, i-k=i'-k',j-\ell=j'-\ell'\right\}\nonumber\\
\approx & \sum_{k,\ell,k',\ell'} \left(\frac{1}{\nnc^{(h+1)}}\sum_{\gamma=1}^{\nnc^{(h+1)}}
\frac{\partial f(\params,\vect{x})}{\partial \left[\vect{x}_{(\gamma
		)}^{(h+1)}\right]_{k\ell}} 
\frac{\partial f(\params,\vect{x}')}{\partial \left[\vect{x}_{(\gamma
		)}^{('h+1)}\right]_{k'\ell'}}\right) \left(\frac{c_\sigma}{q^2\nnc^{(h+1)}}\sum_{\gamma=1}^{{\nnc^{(h+1)}}}\reluder{\left[\tilde{\vect{x}}^{(h+1)}_{(\gamma)}\right]_{k\ell}}\reluder{\left[\tilde{\vect{x}'}^{(h+1)}_{(\gamma)}\right]_{k'\ell'}}\right)\nonumber\\
& \cdot \indict\left\{(i,j,i',j')\in \indset_{k\ell,k'\ell'}, i-k=i'-k',j-\ell=j'-\ell'\right\} \nonumber\\
\approx & \sum_{k,\ell,k',\ell'} \left(\frac{1}{\nnc^{(h+1)}}\sum_{\gamma=1}^{\nnc^{(h+1)}}
\frac{\partial f(\params,\vect{x})}{\partial \left[\vect{x}_{(\gamma
		)}^{(h+1)}\right]_{k\ell}} 
\frac{\partial f(\params,\vect{x}')}{\partial \left[\vect{x}_{(\gamma
		)}^{('h+1)}\right]_{k'\ell'}}\right) \left[\dot{\mat{K}}^{(h+1)}\left(\vect{x},\vect{x}'\right)\right]_{\ell k,\ell'k'}\nonumber\\
& \cdot \indict\left\{(i,j,i',j')\in \indset_{k\ell,k'\ell'}, i-k=i'-k',j-\ell=j'-\ell'\right\}\nonumber\\
\approx & \tr\left( \left[ \mat{G}^{(h+1)}(\vect{x},\vect{x'},\vect{\theta}) \odot\dot{\mat{K}}^{(h+1)}\left(\vect{x},\vect{x}'\right)\right]_{D_{ij,i'j'}}\right)
 \label{eqn:grad_fxh}
\end{align}
where the first approximation is due to our initialization of $\mat{W}^{(h+1)}$. In other words, we've shown  

\begin{equation}\label{eq:recursion_G} \mat{G}^{(h)}(\vect{x},\vect{x'},\vect{\theta}) = \linop\left(\mat{G}^{(h+1)}(\vect{x},\vect{x'},\vect{\theta})\odot \dot{\mat{K}}^{(h+1)}\left(\vect{x},\vect{x}'\right)\right).\end{equation}

Since we use a fully-connected weight matrix as the last layer, we have $ \mat{G}^{(L)}(\vect{x},\vect{x'},\vect{\theta}) \approx \id$.

Thus by induction with Equation~\ref{eq:recursion_G}, we have derived Equation~\ref{eq:defi_G}, which completes the derivation of CNTK.

For the derivation of CNTK-GAP, the only difference is due to the global average pooling layer(GAP), $ \mat{G}^{(L)}(\vect{x},\vect{x'},\vect{\theta}) \approx \frac{1}{\nnh^2\nnw^2} \bm{1}\otimes\bm{1}$, where $\bm{1}\otimes\bm{1}\in  \mathbb{R}^{\nnw\times\nnh\times\nnw\times\nnh} $ is the all one tensor.

\section{Formula of CNTK with Global Average Pooling}
\label{sec:cntk_gap}
In this section we define CNN with global average pooling considered in this paper and its corresponding CNTK formula.

\paragraph{CNN definition.}
\begin{itemize*}
	\item Let  $\vect{x} = \vect{x}^{(0)} \in \mathbb{R}^{\nnw\times \nnh \times \nnc^{(0)}}$ be the input image and  $\nnc^{(0)}$ is the number of initial channels.
	\item For $h=1,\ldots,L$, $\beta = 1,\ldots,\nnc^{(h)}$, the intermediate outputs are defined as \begin{align*}
	\tilde{\vect{x}}_{(\beta)}^{(h)} = \sum_{\alpha=1}^{\nnc^{(h-1)}} \mat{W}_{(\alpha),(\beta)}^{(h)} \conv \vect{x}_{(\alpha)}^{(h-1)} ,\quad
	\vect{x}^{(h)}_{(\beta)} = \sqrt{\frac{c_{\sigma}}{\nnc^{(h) \times q^{(h)} \times q^{(h)}}}}\act{\tilde{\vect{x}}_{(\beta)}^{(h)}}.
	\end{align*}
	\item The final output is defined as \begin{align*}
	f(\params,\vect{x}) =\sum_{\alpha=1}^{\nnc^{(L)}}  W_{(\alpha)}^{(L + 1)}\left(  \frac{1}{\nnw \nnh} \sum_{(i,j) \in [\nnw] \times [\nnh]}\left[\vect{x}_{(\alpha)}^{(L)}\right]_{ij}\right).
	\end{align*}
	where $W_{(\alpha)}^{(L + 1)} \in \mathbb{R}$ is a scalar with Gaussian initialization.
\end{itemize*}

Besides using global average pooling, another modification is that we do not train the first and the layer.
This is inspired by \cite{du2018algorithmic} in which authors  showed that if one applies gradient flow, then at any training time $t$, the difference between the squared Frobenius norm of the weight matrix at time $t$ and that at initialization is same for all layers.
However, note that $\mat{W}^{(1)}$ and $\mat{W}^{(L+1)}$ are special because they are smaller matrices compared with other intermediate weight matrices, so \emph{relatively}, these two weight matrices change more than the intermediate matrices during the training process, and this may dramatically change the kernel.
Therefore, we choose to fix $\mat{W}^{(1)}$ and $\mat{W}^{(L+1)}$ to the make over-parameterization theory closer to practice.


\paragraph{CNTK formula.}
We let $\vect{x},\vect{x}'$ be two input images.
Note because CNN with global average pooling and vanilla CNN shares the same architecture except the last layer, $\mat{\Sigma}^{(h)}(\vect{x},\vect{x}')$, $\dot{\mat{\Sigma}}^{(h)}(\vect{x},\vect{x}')$ and $\mat{K}^{(h)}(\vect{x},\vect{x}')$ are the same for these two architectures.
the only difference is in calculating the final kernel value.
To compute the final kernel value, we use the following procedure.
\begin{enumerate*}
\item First, we define  $\mat{\Theta}^{(0)}(\vect{x},\vect{x}') = \vect{0}$.
Note this is different from CNTK for vanilla CNN which uses $\mat{\Sigma}^{(0)}$ as the initial value because we do not train the first layer.
\item For $h=1,\ldots,L - 1$ and $(i,j,i',j') \in [\nnw] \times [\nnh] \times [\nnw] \times [\nnh]$, we define
\begin{align*}
\left[\mat{\Theta}^{(h)}(\vect{x},\vect{x}')\right]_{ij,i'j'} = \tr\left(\left[\dot{\mat{K}}^{(h)}(\vect{x},\vect{x}')\odot\mat{\Theta}^{(h-1)}(\vect{x},\vect{x}')+\mat{K}^{(h)}(\vect{x},\vect{x}')\right]_{D_{ij,i'j'}}\right).
\end{align*}
\item For  $h=L$, we define \quad 
$\mat{\Theta}^{(L)}(\vect{x},\vect{x}') = \dot{\mat{K}}^{(L)}(\vect{x},\vect{x}')\odot\mat{\Theta}^{(L-1)}(\vect{x},\vect{x}')$.

	\item Lastly, the final kernel value is defined as 	 \[
		\frac{1}{\nnw^2 \nnh^2} \sum_{(i,j,i',j') \in [\nnw] \times [\nnh] \times [\nnw] \times [\nnh]} \left[\mat{\Theta}^{(L)}(\vect{x},\vect{x}')\right]_{ij,i'j'} .
	\]
	Note that we ignore $\mat{K}^{(L)}$ comparing with the CNTK of CNN.
	This is because we do not train the last layer.
	The other difference is we calculate the mean over all entries, instead of calculating the summation over the diagonal ones.
	This is because we use global average pooling so the cross-variances between every two patches will contribute to the kernel.
\end{enumerate*}

\section{Fast Computation for ReLU-Activated CNTK}
\label{sec:fast}


In this section we present our approach to compute CNTK \emph{exactly}.
Notably, most computation required by our new approach can be described as entry-wise operations over matrices and tensors, which allows efficient implementations on GPUs.

Following the formulas in Sections~\ref{sec:kernel} and~\ref{sec:cntk_gap}, the trickiest part is computing the expectation of the post-activation output, i.e., Equations~\eqref{eqn:vanila_cnn_exp} and \eqref{eqn:vanila_cnn_exp_d}.
These two expectations depend on (the same) $2 \times 2$ matrices $\left[\twotwomat^{(h)}(\vect{x},\vect{x}')\right]_{ij,i'j'}$.
To obtain faster implementations, our key observation is that if the diagonal entries of $\left[\twotwomat^{(h)}(\vect{x},\vect{x}')\right]_{ij,i'j'}$ are all ones and the activation function is ReLU, 
there are closed-form formulas for the the corresponding expectations.
To see this, let us suppose for now that $\mat{\Lambda} = \begin{pmatrix}
1 & \lambda \\
\lambda & 1
\end{pmatrix}$ for some $\abs{\lambda} \le 1$.
When the activation function $\relu{\cdot}$ is ReLU, one can show that
\begin{equation}
\expect_{(u,v) \sim \gauss\left(\vect{0},\mat{\Lambda}\right) }\left[\relu{u}\relu{v}\right] = \frac{\lambda (\pi - \arccos(\lambda)) + \sqrt{1-\lambda^2}}{2\pi} \label{eqn:cov_formula}
\end{equation}
and
\begin{equation}
\expect_{(u,v) \sim \gauss\left(\vect{0},\mat{\Lambda}\right) }\left[\deract{u}\deract{v}\right] = \frac{\pi-\arccos\left(\lambda\right)}{2\pi}. \label{eqn:grad_formula}
\end{equation}

Now we let
$$
\mat{A}^{(h)} = \begin{pmatrix}
\mat{\Sigma}^{(h-1)}(\vect{x},\vect{x}) &\mat{\Sigma}^{(h-1)}(\vect{x},\vect{x}') \\
\mat{\Sigma}^{(h-1)}\left(\vect{x}',\vect{x}\right)& \mat{\Sigma}^{(h-1)}\left(\vect{x}',\vect{x}'\right)
\end{pmatrix} \in \mathbb{R}^{2\nnw\nnh \times 2 \nnw \nnh}.
$$
Here, we interpret $\mat{\Sigma}^{(h-1)}(\vect{x},\vect{x})$, $\mat{\Sigma}^{(h-1)}(\vect{x},\vect{x}')$, $\mat{\Sigma}^{(h-1)}\left(\vect{x}',\vect{x}\right)$ and $\mat{\Sigma}^{(h-1)}\left(\vect{x}',\vect{x}'\right)$ as matrices of size $\nnw\nnh \times \nnw\nnh$.
If the diagonal entries of $A^{(h)}$ are all ones, then the diagonal entries of $\left[\twotwomat^{(h)}(\vect{x},\vect{x}')\right]_{ij,i'j'}$ are all ones for all possible $(i, j, i', j') \in [\nnw] \times [\nnh] \times [\nnw] \times [\nnh]$, in which case we can calculate $\mat{K}^{(h)}(\vect{x},\vect{x}')$ and $\dot{\mat{K}}^{(h)}(\vect{x},\vect{x}')$ by simply applying the closed-form formulas described in \eqref{eqn:cov_formula} and \eqref{eqn:grad_formula} on $A^{(h)}$.

However, in general, the diagonal entries of $\mat{A}^{(h)}$ are not always all ones, in which case we resort to the homogeneity of the ReLU activation function.
Suppose
$
\mat{\Lambda} = \begin{pmatrix}
1 & \lambda \\
\lambda & 1
\end{pmatrix}$
for some $\abs{\lambda} \le 1$,
and
$\mat{D} = \begin{pmatrix}
c_1 & 0 \\
0 & c_2
\end{pmatrix}$
for some $c_1, c_2 \ge 0$, then one can show that
\begin{equation}
\expect_{(u,v) \sim \gauss\left(\vect{0},\mat{D \Lambda D} \right) }\left[\relu{u}\relu{v}\right] = \frac{\lambda (\pi - \arccos(\lambda)) + \sqrt{1-\lambda^2}}{2\pi} \cdot c_1 c_2\label{eqn:cov_formula_scale}
\end{equation}
and
\begin{equation}
\expect_{(u,v) \sim \gauss\left(\vect{0},\mat{D \Lambda D}\right) }\left[\deract{u}\deract{v}\right] = \frac{\pi-\arccos\left(\lambda\right)}{2\pi}. \label{eqn:grad_formula_scale}
\end{equation}
Inspired by this, our final approach is described as follows.

\begin{enumerate}
	\item Let $\mat{D} = \begin{pmatrix}
	\mat{D_x} & \mat{0} \\
	\mat{0} & \mat{D_{x'}}
	\end{pmatrix}$, where $\mat{D_x}$ and $\mat{D_{x'}}$  
	are diagonal matrices whose diagonal entries are square roots of the diagonal entries of 
	$\mat{\Sigma}^{(h-1)}(\vect{x},\vect{x})$ and 
	$\mat{\Sigma}^{(h-1)}(\vect{x'},\vect{x'})$, respectively. 
	\item Applying Equations \eqref{eqn:cov_formula_scale} and \eqref{eqn:grad_formula_scale} on $\mat{A}^{(h)} = \mat{D} \mat{\Lambda}^{(h)} \mat{D}$, where the diagonal entries of $\mat{\Lambda}^{(h)}$ are all ones. 
\end{enumerate}

Notice that the implementation above requires us to store the whole $\mat{A}^{(h)}$ matrix, which has size $2\nnw\nnh \times 2 \nnw \nnh$. 
To further optimize the efficiency, we notice that to implement the approach described above, we only need to store the diagonal entries of $\mat{\Sigma}^{(h-1)}(\vect{x},\vect{x})$ and $\mat{\Sigma}^{(h-1)}(\vect{x'},\vect{x'})$, together with the matrix $\mat{\Sigma}^{(h-1)}(\vect{x},\vect{x}') $, which has size $\nnw\nnh \times \nnw \nnh$.

\end{document}